\newtheorem{theorem}{Theorem}[section]
\newtheorem*{theoremA}
{Theorem A}
\newtheorem*{theoremB}
{Theorem B}
\theoremstyle{definition}
\newtheorem{remark}[theorem]{Remark}
\newtheorem{lemma}[theorem]{Lemma}
\newtheorem{proposition}[theorem]{Proposition}
\theoremstyle{definition}
\newtheorem{definition}[theorem]{Definition}
\numberwithin{equation}{section}
\DeclareMathOperator*{\argmin}{arg\,min}
\newcommand{\volM}[2]{\text{vol}(B^M({#1}, {#2}))}
\newcommand{\BM}[2]{B^M({#1}, {#2})}
\newcommand{\volhat}[4]{\widehat{\text{vol}}[#1, #2](#3, #4)}
\newcommand{\avgrho}[2]{\mu_{\rho}({#1}, {#2})}
\newcommand{\avgrhoest}[3]{\widehat{\mu_{\rho}}[{#1}]({#2}, {#3})}
\newcommand{\Ndxr}[3]{N[{#1}]({#2}, {#3})}
\newcommand{\yhat}[4]{\hat{y}[#1, #2](#3, #4)}
\newcommand{\E}{\mathbb{E}}
\newcommand{\bR}{\mathbb{R}}
\newcommand{\rmin}{r_{\min}}
\newcommand{\rmax}{r_{\max}}
\newcommand{\rmink}{r_{\min, \,k}}
\newcommand{\rmaxk}{r_{\max, \,k}}
\newcommand{\Drk}{(\Delta r)_k}
\newcommand{\fracc}[1]{\frac{1}{#1}}
\newcommand{\var}{\text{var}}
\newcommand{\p}{\mathbb{P}}
\newcommand{\cov}{\mathrm{cov}}
\newcommand{\dkexact}{d_{X_k}}
\newcommand{\dkapprox}{\widehat{\dkexact}}
\newcommand{\Nothers}{N-1} 
\title{An intrinsic approach to scalar-curvature estimation for point clouds}
\author{Abigail Hickok and Andrew J. Blumberg}
\date{\today}
\begin{document}

\begin{abstract}
We introduce an intrinsic estimator for the scalar curvature of a data set presented as a finite metric space.  Our estimator depends only on the metric structure of the data and not on an embedding in $\bR^n$. We show that the estimator is consistent in the sense that for points sampled from a probability measure on a compact Riemannian manifold, the estimator converges to the scalar curvature as the number of points increases. To justify its use in applications, we show that the estimator is stable with respect to perturbations of the metric structure, e.g., noise in the sample or error estimating the intrinsic metric. We validate our estimator experimentally on synthetic data that is sampled from manifolds with specified curvature.
\end{abstract}

\maketitle

\section{Introduction}

A compact Riemannian manifold is a smooth manifold $M$ equipped with compatible choices of inner product for each tangent space $T_p M$. The presence of this structure equips $M$ with a metric (induced by the fact that the inner product gives a definition of the length of a tangent vector) and moreover lets us makes sense of various geometric notions on $M$---in particular, the notion of {\em curvature}.

Curvature, which measures the extent to which a Riemannian manifold deviates from being ``flat,'' is a generalization of the use of the second derivative to measure the extent to which a curve pulls away from the tangent line at a point. There are several different notions of curvature in Riemannian geometry.
The focus of this paper is \emph{scalar curvature}, which is a function $S\colon M \to \bR$ that quantifies the curvature at a point $x \in M$ by a number $S(x)$. On a surface, scalar curvature is proportional to Gaussian curvature, but in contrast to Gaussian curvature, scalar curvature is defined for higher-dimensional manifolds as well.

The purpose of this paper is to study the problem of estimating the scalar curvature of a manifold given a finite sample $X \subset M$ regarded as a finite metric space, which we assume consists of independent draws from 
 some (possibly nonuniform) probability density function $\rho\colon M \to \mathbb{R}_+$. Our estimator is based on the fact that scalar curvature at $x \in M$ characterizes the growth rate of the volume of a geodesic ball $\BM{x}{r}$ as $r$ increases. More precisely, as $r \to 0$, the scalar curvature $S(x)$ at $x \in M$ has the following relationship to geodesic ball volume:
\begin{equation}\label{eq:scalar_ball}
    \frac{\volM{x}{r}}{v_nr^n} = 1 - \frac{S(x)}{6(n+2)} r^2 + \mathcal{O}(r^4)\,,
\end{equation}
where $n$ is the dimension of the manifold, $v_n$ is the volume of a unit Euclidean $n$-ball, and $v_nr^n$ is the volume of a Euclidean $n$-ball of radius $r$.  We proceed by computing maximum-likelihood estimators for the volumes on the left side of equation~\eqref{eq:scalar_ball} and fitting a quadratic function to approximate $S(x)$. Specifically, our estimate is
\[
\hat{S}(x) = -6(n+2)\hat{C}(x),
\]
where $\hat{C}(x)$ is the quadratic coefficient of the fitted curve.

Our first main theorem shows that this scalar-curvature estimator is asymptotically stable. That is, small errors in distance measurement (e.g., from geodesic-distance estimation) cause only small errors in our scalar-curvature estimates. This result ensures that we can  accurately estimate scalar curvature in real-world data sets, which are invariably noisy.

\begin{theoremA}[Theorem~\ref{thm:stability}]
Let $M$ be a compact Riemannian manifold that is equipped with a probability measure $\rho$ that has full support.  Let $\{X_k\}_{k \in \mathbb{N}}$ be a sequence of finite samples, with $|X_k| \to \infty$, that are drawn from $M$ according to $\rho$ and equipped with metrics $d_k$ such that
\begin{equation*}
    \max_{(x, y) \in X_k \times X_k} |d_k(x, y) - d(x, y)| \to 0 \qquad \text{as } k \to \infty\,,
\end{equation*}
where $d(x, y)$ is the geodesic distance between $x$ and $y$.
Then for a suitable sequence of radius sequences and any sequence $\{x_k\}$ of points with $x_k \in X_k$, we have $\vert \hat{S}[d_k](x_k) -  \hat{S}[d](x_k) \vert \to 0$ in probability as $k \to \infty$.
\end{theoremA}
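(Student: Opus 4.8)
The plan is to split the estimator into its two ingredients --- the maximum-likelihood ball-volume estimates and the quadratic least-squares fit applied to them --- and to show each is stable under a metric perturbation of size $\epsilon_k := \max_{(x,y)\in X_k\times X_k}|d_k(x,y)-d(x,y)|$ (so $\epsilon_k\to 0$ by hypothesis). Let $r_1^{(k)}<\dots<r_m^{(k)}$ be the radii used at stage $k$. Because the quadratic coefficient of a least-squares fit is a fixed linear functional of the response vector, there are weights $w_i^{(k)}$ depending only on the $r_i^{(k)}$ with $\hat{S}[d](x)=-6(n+2)\sum_{i=1}^m w_i^{(k)}\,\widehat{\text{vol}}[d](x,r_i^{(k)})/(v_n (r_i^{(k)})^n)$, so
\[
\bigl|\hat{S}[d_k](x)-\hat{S}[d](x)\bigr|\;\le\;6(n+2)\sum_{i=1}^{m}\frac{|w_i^{(k)}|}{v_n (r_i^{(k)})^{n}}\,\bigl|\widehat{\text{vol}}[d_k](x,r_i^{(k)})-\widehat{\text{vol}}[d](x,r_i^{(k)})\bigr|.
\]
It is convenient --- and, I claim, a ``suitable'' choice --- to take the radii of the form $r_i^{(k)}=t_i\,\bar{r}_k$ for a fixed increasing tuple $t_1<\dots<t_m$ with $m\ge 3$ and a single scale $\bar{r}_k\to 0$; then $|w_i^{(k)}|=O(\bar{r}_k^{-2})$, so it suffices to prove that
\[
\bar{r}_k^{-(n+2)}\,\max_{1\le i\le m}\,\bigl|\widehat{\text{vol}}[d_k](x_k,r_i^{(k)})-\widehat{\text{vol}}[d](x_k,r_i^{(k)})\bigr|\;\xrightarrow{\ \p\ }\;0 .
\]

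Next I would establish stability of a single volume estimate. For any metric $d'$ on $X_k$ with $\max|d'-d|\le\epsilon$ and any center $x$ and radius $r$, one has the squeeze $\Ndxr{d}{x}{r-\epsilon}\le\Ndxr{d'}{x}{r}\le\Ndxr{d}{x}{r+\epsilon}$, since perturbing each distance by at most $\epsilon$ can change membership in $\BM{x}{r}$ only for sample points lying in the annulus of width $\epsilon$ about the sphere of radius $r$. The intrinsic density estimate is built from the same kind of data (ball counts, equivalently nearest-neighbour distances) and therefore also moves by only $O(\epsilon)$. The volume MLE is an explicit function of these counts and of $N_k=|X_k|$ that is Lipschitz in each count with Lipschitz constant $O(1/N_k)$ on the event $E_k$ that the density estimate stays bounded away from $0$ and $\infty$; by compactness of $M$, positivity of $\rho$, and consistency of the density estimator, $\p(E_k)\to 1$. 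Combining the squeeze with this Lipschitz bound gives, on $E_k$,
\[
\bigl|\widehat{\text{vol}}[d_k](x_k,r_i^{(k)})-\widehat{\text{vol}}[d](x_k,r_i^{(k)})\bigr|\;\le\;\frac{C}{N_k}\,\#\!\bigl(X_k\cap A_i^{(k)}\bigr),\qquad A_i^{(k)}:=\BM{x_k}{r_i^{(k)}+\epsilon_k}\setminus\BM{x_k}{r_i^{(k)}-\epsilon_k},
\]
where $C$ depends only on the geometry of $M$ and the uniform bounds on $\rho$.

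Finally I would control the annulus counts probabilistically. Compactness of $M$ gives bounded geometry and a positive injectivity radius, so for all small radii, uniformly in the center, the $\rho$-mass of $A_i^{(k)}$ satisfies $\mu_\rho(A_i^{(k)})\le(\sup_M\rho)\,\mathrm{vol}(A_i^{(k)})\le C'(r_i^{(k)})^{n-1}\epsilon_k$. A Bernstein bound for the binomial $\#(X_k\cap A_i^{(k)})$, together with a union bound over a net of $M$ fine enough that every possible center $x_k$ is $O(\epsilon_k)$-close to a net point (so the covering overhead is only polynomial in $\epsilon_k^{-1}$), shows that with probability tending to $1$, $N_k^{-1}\#(X_k\cap A_i^{(k)})$ is at most $2\mu_\rho(A_i^{(k)})$ plus a fluctuation term, uniformly in $i$. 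Substituting into the two preceding displays, $\bigl|\hat{S}[d_k](x_k)-\hat{S}[d](x_k)\bigr|$ is, on an event of probability tending to $1$, at most a constant times $\epsilon_k/\bar{r}_k^{3}$ plus that fluctuation term divided by $\bar{r}_k^{n+2}$. It then remains to choose $\bar{r}_k\to0$ slowly enough that $\epsilon_k/\bar{r}_k^{3}\to0$ and $N_k\bar{r}_k^{n}\to\infty$ (fast enough to kill the fluctuation), and small enough to stay below the injectivity radius; such a sequence exists because $\epsilon_k\to0$ and $N_k\to\infty$, and it is the ``suitable sequence of radius sequences'' in the statement. With it, the left side of the first display tends to $0$ in probability, which proves the theorem.

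The main obstacle is exactly this last balancing act: the quadratic fit amplifies errors by a factor $\bar{r}_k^{-(n+2)}$, whereas the metric perturbation only creates an annulus of $\rho$-mass $\Theta(\bar{r}_k^{n-1}\epsilon_k)$, so the scale $\bar{r}_k$ must simultaneously damp that amplification against the annulus mass, keep enough sample points inside each ball, and stay within the injectivity radius --- which ties the admissible radii to the decay rate of $\epsilon_k$ and the growth rate of $N_k$. Two further points need care: getting the volume and binomial-count estimates uniformly over all possible centers $x_k\in M$ (rather than at a single fixed point), and checking that the metric-dependent \emph{density} estimator, which enters $\widehat{\text{vol}}$ nonlinearly, inherits both the $O(\epsilon_k)$ stability and the $O(1/N_k)$ Lipschitz-in-counts bound used above.
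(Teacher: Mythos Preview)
Your proposal is broadly sound and shares with the paper the key squeeze $N[d](x,r-\epsilon_k)\le N[d_k](x,r)\le N[d](x,r+\epsilon_k)$, but the two arguments diverge sharply after that point. The paper works with an equally spaced radius grid whose cardinality $m_k\to\infty$ and never bounds $|\hat y[d_k](x,r_j)-\hat y[d](x,r_j)|$ at a single radius. Instead, Lemma~\ref{lem:yhat_stability} turns the squeeze into a sandwich $\hat y[d](x,r_{j-\ell_k})-\xi_k\le\hat y[d_k](x,r_j)\le\hat y[d](x,r_{j+\ell_k})+\xi_k$ between values of $\hat y[d]$ at \emph{shifted} grid radii, and Lemma~\ref{lem:Cstability} then substitutes this into the Riemann sum for $\hat C$ and \emph{reindexes}: writing $r_j^2=r_{j\pm\ell_k}^2\mp\ell_k(\Delta r)_k(2r_j\pm\ell_k(\Delta r)_k)$ produces $\hat C[d_k]\le\hat C[d]$ (and the reverse inequality) plus explicit boundary and shift terms of order $(\delta_k+(\Delta r)_k)/r_{\max,k}^3$ and $\xi_k/r_{\max,k}^2$. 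The only probabilistic input at this stage is the crude event $\max_j|\hat y[d](x,r_j)-1|\le 1$ (Lemma~\ref{lem:yhatminus1}, via Chebyshev), used to bound those residual terms. Your route instead keeps $m$ fixed, controls the volume difference at each node by the annulus count, and handles that count by Bernstein concentration plus a covering argument. What your approach buys is simpler linear algebra and no need for $m_k\to\infty$; what the paper's reindexing trick buys is that it never has to control the random annulus count at all, so the comparison of $\hat C[d_k]$ with $\hat C[d]$ is essentially deterministic once the sandwich is in hand.

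Two points need tightening. First, your balancing condition $N_k\bar r_k^{\,n}\to\infty$ is not quite strong enough for the fluctuation term you invoke: the annulus has mass $p\asymp\bar r_k^{\,n-1}\epsilon_k$, so the deviation of $N_k^{-1}\#(X_k\cap A_i^{(k)})$ is of order $\sqrt{p/N_k}$, and after dividing by $\bar r_k^{\,n+2}$ this requires $\epsilon_k/(N_k\bar r_k^{\,n+5})\to 0$; combined with $\epsilon_k/\bar r_k^{\,3}\to 0$ this follows from $N_k\bar r_k^{\,n+2}\to\infty$ rather than $N_k\bar r_k^{\,n}\to\infty$. A suitable $\bar r_k$ still exists, so the strategy survives, but the exponent matters once you also carry the union-bound overhead. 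Second, the density-estimator issue you flag at the end is precisely where most of the labor in Lemma~\ref{lem:yhat_stability} sits: the paper absorbs the error from replacing $\hat\rho$ by $\rho$, and from the metric-dependence of the sample harmonic mean $\widehat{\mu_\rho}$, into the additive $\xi_k$, and the hypotheses $\eta_k/(r_{\min,k}+(\Delta r)_k)^{n+2/3}\to 0$ and $\max_j A(2r_j)/(r_j^{\,n}r_{\max,k}^2)\to 0$ are exactly what force $\xi_k/r_{\max,k}^2\to 0$. Your Lipschitz-in-counts sketch would have to reproduce an equivalent bound, and that is not automatic because $\widehat{\mu_\rho}$ enters the volume estimate in the denominator.
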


\begin{remark}
In fact, we can extract an effective stability bound from the proof of Theorem \ref{thm:stability}; the distance between curvature estimators is controlled by an explicit formula that involves an additive term based on the radius sequence and a term that is controlled by the discrepancy $\delta_k := \max_{(x, y) \in X_k \times X_k} |d_k(x, y) - d(x, y)|$. 
\end{remark}

We can then establish our second main theorem, which establishes that our scalar curvature estimate $\hat{S}(x)$ converges to the true scalar curvature $S(x)$ under certain asymptotic conditions as $|X_k| \to \infty$.

\begin{theoremB}[Theorem \ref{thm:convergence}]
Under the same hypotheses as the preceding theorem, for a suitable sequence of radius sequences,
$\vert \hat{S}[d_k](x_k) -  S(x_k) \vert \to 0$ in probability as $k \to \infty$, where $\{x_k\}$ is any sequence of points such that $x_k \in X_k$ for each $k$.
\end{theoremB}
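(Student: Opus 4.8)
The plan is to reduce Theorem~B to the stability statement (Theorem~\ref{thm:stability}) together with consistency of the estimator built from the exact geodesic metric. By the triangle inequality,
\[
\bigl|\hat{S}[d_k](x_k) - S(x_k)\bigr| \;\le\; \bigl|\hat{S}[d_k](x_k) - \hat{S}[d](x_k)\bigr| \;+\; \bigl|\hat{S}[d](x_k) - S(x_k)\bigr| ,
\]
and the first term tends to $0$ in probability by Theorem~\ref{thm:stability}, applied with the same sequence of radius sequences. So it remains to show $\hat{S}[d](x_k) - S(x_k) \to 0$ in probability; this is still a probabilistic statement, since $X_k$ is random even though the distances are exact.

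Recall that $\hat S[d](x)$ equals $-6(n+2)$ times the coefficient of $r^2$ in the least-squares fit of the maximum-likelihood normalized-volume responses $\hat y[d](x,r)$ against $r$ over the radii $r \in [\rmink,\rmaxk]$ in the sequence; this coefficient is an explicit linear functional $L$ of the response vector, determined by the (Vandermonde-type) design matrix. I would write each response as $\hat y[d](x,r) = \mu(x,r) + \varepsilon(x,r)$ with population target $\mu(x,r) := \volM{x}{r}/(v_n r^n)$. Taylor-expanding $\volM{x}{r}$ via \eqref{eq:scalar_ball} — together with the expansion of $\int_{\BM{x}{r}}\rho\,d\text{vol}$, whose leading correction relative to $\rho(x)\volM{x}{r}$ is $\mathcal{O}(r^2)$ and is exactly what the companion estimates $\widehat{\mu_\rho}$ entering $\hat y$ are designed to remove — gives $\mu(x,r) = 1 - \frac{S(x)}{6(n+2)}r^2 + \mathcal{O}(r^4)$, with the $\mathcal{O}(r^4)$ constant uniform over $x\in M$ by compactness (bounds on sectional curvature, injectivity radius, and derivatives of $\rho$). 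For the fluctuation, $\Ndxr{d}{x}{r}$ is Binomial with mean $\asymp |X_k|\int_{\BM{x}{r}}\rho\,d\text{vol} \asymp |X_k|\,\rmink^{\,n}$, so a Bernstein/Chernoff bound plus consistency of the density estimates gives $|\varepsilon(x,r)| = \mathcal{O}\bigl((|X_k|\,\rmink^{\,n})^{-1/2}\bigr)$ with high probability, uniformly over the finitely many radii.

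Now apply $L$. Since $L$ annihilates the constant and linear parts of the fitted model, returns $-\frac{S(x)}{6(n+2)}$ on the $r^2$ part, and is bounded by $\|L\|_{\mathrm{op}}$ times the sup-norm of its input, the image of the population expansion is $-\frac{S(x)}{6(n+2)} + \mathcal{O}(\|L\|_{\mathrm{op}}\rmaxk^{4})$ and the image of the fluctuation is $\mathcal{O}(\|L\|_{\mathrm{op}}(|X_k|\rmink^{\,n})^{-1/2})$. Taking the radii evenly spaced (controlling $\Drk$ relative to $\rmaxk$) keeps the rescaled design matrix well conditioned, so $\|L\|_{\mathrm{op}} = \mathcal{O}(\rmaxk^{-2})$; hence $|\hat S[d](x_k)-S(x_k)| = \mathcal{O}(\rmaxk^{2}) + \mathcal{O}(|X_k|^{-1/2}\rmaxk^{-2-n/2})$ with high probability, uniformly in $x_k\in X_k$. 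A sequence of radius sequences with $\rmaxk\to 0$, $\rmink \asymp \rmaxk$, and $\rmaxk \gg |X_k|^{-1/(n+4)}$ — which exists since $|X_k|\to\infty$ — sends both terms to $0$, and combined with the first paragraph this proves Theorem~B.

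I expect the main obstacle to be the tension in that last step: the $r^2$-coefficient behaves like a discrete second derivative in $r$, so the least-squares inversion inflates response errors by a factor $\asymp \rmaxk^{-2}$, and one must show simultaneously that this inflation stays controlled (uniform conditioning of the design over admissible radius choices) and that the $\mathcal{O}(r^4)$ truncation bias shrinks fast enough — only a window of radius scalings achieves both. The remaining work is bookkeeping: verifying that every constant (volume expansion, density expansion, density estimator, conditioning bound) is uniform over the compact manifold $M$.
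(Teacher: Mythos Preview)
Your overall decomposition—triangle inequality into a stability term handled by Theorem~\ref{thm:stability} and a consistency-with-exact-metric term—is exactly the paper's strategy (the paper packages the second term as Proposition~\ref{prop:convergence_exact}, proved via Lemma~\ref{lem:Cconverge}). The difference lies in how you control the fluctuation in the second term, and this is where a genuine gap appears.

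You bound the fluctuation by $\|L\|_{\mathrm{op}}\cdot\sup_r|\varepsilon(x,r)|$, and since $\mathrm{var}(\hat y(x,r))\asymp (|X_k|\,r^n)^{-1}$ is worst at $r=\rmink$, you take $\rmink\asymp\rmaxk$ to keep the sup under control. But the stability theorem you invoke for the first term carries the explicit hypothesis $\rmink/\rmaxk^{3}\to 0$ (condition~(5) in Theorem~\ref{thm:stability}), which forces $\rmink=o(\rmaxk^{3})\ll\rmaxk$. Your choice $\rmink\asymp\rmaxk$ is therefore incompatible with applying Theorem~\ref{thm:stability} to the same radius sequence, so as written the two halves of the argument cannot be run simultaneously.

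The paper avoids this by not passing through a uniform sup-norm bound: in Lemma~\ref{lem:Cconverge} it estimates $\mathrm{var}\bigl(\sum_j r_j^{2}\hat y_j\bigr)$ directly via $\bigl(\sum_j r_j^{2}\sqrt{\mathrm{var}(\hat y_j)}\bigr)^{2}\le \tfrac{A'}{|X_k|}\bigl(\sum_j r_j^{2-n/2}\bigr)^{2}$ and then applies Chebyshev. The $r_j^{2}$ weight in the functional partially cancels the $r_j^{-n/2}$ blow-up of the noise at small radii, so the argument tolerates $\rmink\ll\rmaxk$ (in particular $\rmink=o(\rmaxk^{3})$) under the milder condition $|X_k|(\rmink+\Drk)^{n}\to\infty$. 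Your approach is salvageable—either by adopting this weighted-variance estimate in place of the sup-norm step, or by independently establishing stability under $\rmink\asymp\rmaxk$ (the paper does not)—but the specific radius prescription you give does not close the argument.
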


We test our scalar-curvature estimator on point clouds that are sampled from several different manifolds (see Section \ref{sec:experiments}). Broadly, the experiments demonstrate that our method is accurate on manifolds of constant scalar curvature, especially on low-dimensional manifolds. 
Additionally, our method is robust with respect to additive isotropic noise.

The primary limitation of our method is that we typically cannot accurately estimate scalar curvature on regions of the manifold where the scalar curvature has high variation or where it achieves a minimum or maximum. For example, in our experiments on the torus and hyperboloid, we find that we cannot accurately estimate curvature near points where scalar curvature is minimized.  However, our results are qualitatively correct even when the pointwise estimates are inaccurate. On the torus and the hyperboloid, we correctly detect the scalar-curvature sign and relative changes in scalar curvatures across the surfaces.

\subsection*{Applications}

We estimate scalar curvature using only metric data (i.e., pairwise geodesic distances between points that are sampled from on a manifold). Consequently, our estimator can be applied to any finite metric space.  We expect that our estimator can be used to inform a choice of low-dimensional embedding space. For example, if one finds that the scalar curvature of a data set is negative (respectively, positive) everywhere, then this would suggest embedding the points into hyperbolic space (respectively, a sphere). In recent years, there has been much research on non-Euclidean embeddings, such as hyperbolic embeddings \cite{poincare, tradeoffs, hierarchy}.

Another application is the generalization of curvature to metric spaces that do not obviously come from manifolds.  A notable example is given by a network (specifically, a weighted graph), with the metric given by the shortest-path distance. This yields a definition of discrete scalar curvature that is defined on the vertices of a network. 

\subsection*{Related Work} 
To the best of our knowledge, there is only one other paper on scalar-curvature estimation for manifolds of any dimension. Sritharan et al. \cite{pnas} developed a different method to estimate scalar curvature by using the second fundamental form and the Gauss--Codazzi equation. However, their method requires an embedding of the points in Euclidean space. Furthermore, it is particularly sensitive to noise because it involves tangent-space estimation. In one experiment, in which points were sampled from a Klein bottle, their method did not recover the correct sign for the scalar curvature after only a small amount of Gaussian noise was added (standard deviation $\sigma = .01$).  In contrast, we are able to obtain higher accuracy on noisy data sets.

There are many methods to estimate Gaussian curvature from point clouds that are sampled from surfaces. Guerrero et al.~\cite{pcpnet} estimated Gaussian curvature by designing a neural network with a PointNet-inspired architecture~\cite{pointnet}. Topological data analysis can also be used to detect curvature. Bubenik et al.~\cite{bubenik} used persistent homology to classify point clouds by the Gaussian curvature of the constant-curvature surface from which they were sampled. Another Gaussian curvature estimation method is given by Cazals and Pouget~\cite{osculating}. However, none of these methods have a straightforward generalization to scalar curvature estimation.

Bhaskar et al.~\cite{smita} defined ``diffusion curvature,'' which is a new (unsigned) measure of local curvature for point clouds that are sampled from a manifold (with any dimension). Although diffusion curvature is not the same as Gaussian or scalar curvature, numerical experiments in~\cite{smita} suggest that it is correlated with Gaussian curvature. However, unlike Gaussian and scalar curvature, diffusion curvature is always positive, so it cannot be used to infer whether scalar curvature is positive or negative. By contrast, our scalar-curvature estimates are signed, so our method can be used to distinguish between regions of positive and negative curvature.

Chazal et al.~\cite{chazal} considered curvature measures (which are distinct from curvature). They showed that curvature measures can be estimated stably. However, as in~\cite{pnas}, their method requires an embedding of the points in Euclidean space. Moreover, their method is not feasible for point clouds in high dimensions because it requires computing and storing the boundaries and intersections of a set of balls in the ambient space; Chazal et al. implemented and tested their method only in $\mathbb{R}^3$. By contrast, the accuracy and computational complexity of our method in the present paper depends only on the intrinsic dimension of the manifold; it does not depend on the dimension of the ambient space.

Lastly, we note that there is an important relationship to discrete network curvature \cite{network_curvature_review}. Our scalar-curvature estimator can be applied to networks with the shortest-path metric. There are two other definitions of discrete scalar curvature for networks, both of which are defined as ``contractions'' of discrete Ricci curvature \cite{curvature_cancer, other_scalar}.  More precisely, the discrete scalar curvature at a node is defined to be the sum of its adjacent edges' discrete Ricci curvature.  Sandhu et al.~\cite{curvature_cancer} defined scalar curvature at a vertex as the contraction of Ollivier--Ricci curvature and Sreejith et al.~\cite{other_scalar} defined scalar curvature as the contraction of Forman--Ricci curvature. These are justified by the fact that scalar curvature is the trace of Ricci curvature. However, it has not been proven that either notion of discrete scalar curvature converges to the scalar curvature of the manifold when the network is a geometric network on a manifold.

\subsection*{Organization} We briefly review the basics of Riemannian geometry and scalar curvature in Section \ref{sec:background}. We discuss our method for estimating scalar curvature in Section \ref{sec:method}. We prove stability (Theorem \ref{thm:stability}) in Section \ref{sec:stability} and convergence (Theorem \ref{thm:convergence}) in Section \ref{sec:convergence}. Finally, we discuss our numerical experiments in Section \ref{sec:experiments}.

\subsection*{Acknowledgements}

We thank Yining Liu, Michael Mandell, and  Mason Porter for helpful conversations.


\section{Background}\label{sec:background}

In this section, we briefly review relevant background on Riemannian geometry and scalar curvature. For a detailed treatment, we recommend any standard textbook, e.g.~\cite{petersen}.

A \emph{Riemannian manifold} $(M, g)$ is a smooth manifold $M$ with a Riemannian metric. A \emph{Riemannian metric} $g$ is an assignment of a symmetric bilinear form $g_x : T_x M \times T_x M \to \mathbb{R}$ for each point $x \in M$. The Riemannian metric defines an inner product on each tangent space $T_x M$. For example, in Euclidean space, the canonical Riemannian metric is the usual Euclidean inner product.

The Riemannian metric induces a metric on the manifold as follows. The \emph{norm} of a vector $v$ in the tangent space $T_xM$ at $x$ is $\norm{v} := g_x(v, v)^{1/2}$. The \emph{length} of a continuously differentiable path $\gamma: [a, b] \to M$ is $L(\gamma) := \int_a^b \norm{\gamma'(t)}\mathrm{d}t$. The \emph{geodesic distance} between points $x$ and $y$ in the same connected component of $M$ is
\begin{equation*}
    d_{M, g}(x, y) := \inf \{L(\gamma) \mid \gamma:[a, b] \to M \text{ is a $C^1$ path such that } \gamma(a) = x \text{ and }\gamma(b) = y\}.
\end{equation*}
The closed \emph{geodesic ball} centered at $x \in M$ with radius $r \geq 0$ is
\begin{equation*}
    \BM{x}{r} := \{y \in M \mid d_{M, g}(x, y) \leq r\}\,.
\end{equation*}

Scalar curvature characterizes the rate at which the volume of a geodesic ball $\BM{x}{r}$ grows as $r$ grows. Equation \eqref{eq:scalar_ball} gives the relationship between the scalar curvature $S(x)$ at $x \in M$ and the geodesic ball volume $\volM{x}{r}$ as $r \to 0$. For example, if $S(x)$ is negative (respectively, positive), then the volume of a small geodesic ball that is centered at $x$ tends to be larger (respectively, smaller) than the volume of an $n$-dimensional Euclidean ball of the same radius.

In this paper, we estimate scalar curvature from point clouds that are sampled randomly from $M$. Probability density functions $\rho\colon M \to \bR$ (for sampling points on $M$) are defined as follows. The induced \emph{Riemannian volume form} $dV$ is given in local coordinates by
\begin{equation*}
    dV = \sqrt{\vert g \vert} dx^1 \wedge \cdots \wedge dx^n\,.
\end{equation*}
Equivalently, the volume form $dV$ is defined to be the unique $n$-form on $M$ that equals $1$ on all positively-oriented orthnormal bases. The Riemannian volume form induces the \emph{Riemannian volume measure} $\mu$ on $M$; the measure of a Borel subset $A$ is
\begin{equation*}
    \mu(A) = \int_A dV\,.
\end{equation*}
A random point $x$ that is sampled from $M$ has \emph{probability density function} (pdf) $\rho: M \to \bR$ if
\begin{equation*}
    \p[x \in A] = \int_A \rho(y)dV
\end{equation*}
for all Borel subsets $A$ and $\p[M] = 1$. For example, the uniform pdf on $M$ is $\rho(x) \equiv \fracc{\text{vol}(M)}$, where $\text{vol}(M) := \mu(M)$ is the volume of $M$. For a more thorough introduction to statistics on Riemannian manifolds, see \cite{Rman_stats}.

\section{Estimating scalar curvature via geodesic ball-volume estimation}\label{sec:method}

Suppose that we are given a \emph{distance matrix} $d_X$, which is an $N \times N$ matrix whose $(i, j)$th entry is the distance between $x_i$ and $x_j$ for points $x_i, x_j \in X$, where $X$ is a point cloud such that $|X| = N$. By a slight abuse of notation, we will write $d_X(x, y)$ to denote the distance between points $x \in X$ and $y \in X$. We assume that:
\begin{enumerate}
    \item $(X, d_X)$ is a metric subspace of an unknown Riemannian manifold $(M, g)$ of unknown dimension $n$ and
    \item $X$ is sampled randomly from an unknown probability density function $\rho :M \to \mathbb{R}_+$.
\end{enumerate}
Let $d$ denote the geodesic distance on $M$. Importantly, we do not assume that we are given coordinates for the point cloud $X$; we assume only that we have the distance matrix $d_X$. However, it is possible to begin with a point cloud $X$ (instead of its distance matrix $d_X$), from which one can estimate geodesic distances using, for example, the graph-approximation technique of Tenenbaum et al. \cite{isomap, graph_approx}.

We summarize our scalar-curvature estimation method in Figure \ref{fig:pipeline}. To estimate the scalar curvature $S(x)$ at a point $x \in X$, the idea of our approach is to estimate $\volM{x}{r}$ for a sequence of increasing $r$ and then estimate $S(x)$ by fitting a quadratic polynomial to the estimated ball-volume ratios $\volM{x}{r}/(v_nr^n)$.

\begin{figure}
    \centering
    \includegraphics[width = .9\textwidth]{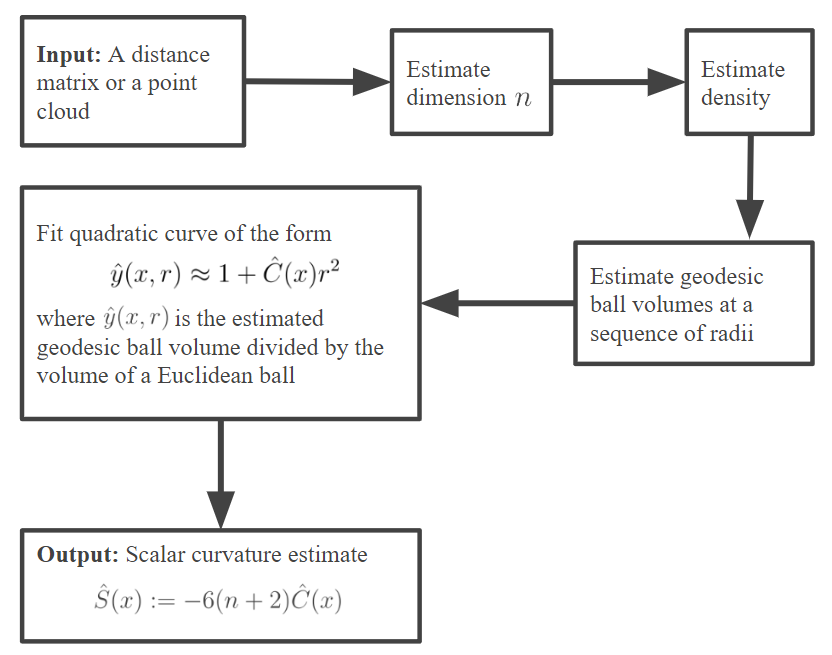}
    \caption{The pipeline for our scalar-curvature estimation method.}
    \label{fig:pipeline}
\end{figure}

\subsection{Maximum-likelihood estimator of ball volume}\label{sec:mle}
For a given radius $r$ and a point $x \in X$, we estimate $\volM{x}{r}$ as follows. Let $N$ be the number of points in $X$, and let $\Ndxr{d_X}{x}{r}$ denote the number of points in $\BM{x}{r} \cap (X \setminus \{x\})$. That is,
\begin{equation*}
    \Ndxr{d_X}{x}{r} := \vert \{y \in X\setminus \{x\} \mid d_X(x, y) \leq r\} \vert\,.
\end{equation*}
When the metric $d_X$ is clear from context, we omit it from the notation and write $N(x, r)$. Let $\avgrho{x}{r}$ denote the mean density within $\BM{x}{r}$. That is,
\begin{equation*}
    \avgrho{x}{r} := \fracc{\volM{x}{r}}\int_{z \in \BM{x}{r}} \rho(z)dV \,,
\end{equation*}
where $dV$ is the volume form on $M$. When $\rho(x) \equiv \rho$ is constant, $\avgrho{x}{r} = \rho$. In Section \ref{sec:empirical_mle}, we discuss a method to estimate $\avgrho{x}{r}$ empirically, without prior knowledge of $\volM{x}{r}$.

Our likelihood function for $\volM{x}{r}$ is
\begin{align*}
    L(v) &= \p[N(x, r) \mid \volM{x}{r} = v]\\ 
    &= {\Nothers \choose N(x, r)}\Big(\avgrho{x}{r} v\Big)^{N(x, r)}\Big(1 - \avgrho{x}{r} v\Big)^{\Nothers - N(x, r)}
\end{align*}
because the random variable $N(x, r)$ is a binomial random variable with $\Nothers$ trials and success probability $\avgrho{x}{r} \cdot \volM{x}{r}$. Solving $0 = L'(v)$, we find that the maximum-likelihood estimator is
\begin{equation}\label{eq:mle}
    v_* = \frac{N(x, r)}{(\Nothers)\avgrho{x}{r}}\,.
\end{equation}
The expectation of $v_*$ is
\begin{equation}\label{eq:volume_density}
    \E[v_*] = \frac{\E[N(x, r)]}{(\Nothers)\avgrho{x}{r}} = \volM{x}{r}.
\end{equation}

\subsection{Dimension estimation}

Our scalar-curvature estimation method requires an estimate $\hat{n} \in \mathbb{N}$ of the manifold dimension $n$; there are a wide variety of methods to do this --- see~\cite{dim_est_review} for a review. One method to estimate dimension is the maximum-likelihood method of Levina and Bickel~\cite{bickel}, which requires only the distance matrix $d_X$ as input. (See Section~\ref{sec:dim_est} for details.) When the distance matrix $d_X$ is not clear from context, we denote our dimension estimate by $\hat{n}[d_X]$. We assume that $\hat{n} = n$ in our theoretical results (Sections \ref{sec:stability} and \ref{sec:convergence}). In our numerical experiments (\Cref{sec:experiments}), we use~\cite{bickel} to calculate a dimension estimate $\hat{n}$.

\subsection{Density estimation}
In Equation \eqref{eq:mle}, the mean density $\avgrho{x}{r}$ in the ball must be estimated empirically. To do so, we first empirically estimate the density at each point $z \in X$. One method for doing so is kernel density estimation (KDE) on a manifold \cite{kde_submanifold}, which requires only the distance matrix $d_X$ and an estimate $\hat{n}$ of the manifold dimension as input. 

\begin{remark}
We denote a choice of density estimator by $\hat{\rho}$, and we denote our density estimate at $z \in X$ by $\hat{\rho}[d_X, \hat{n}](z)$. For example, in our numerical experiments in Section \ref{sec:experiments}, the density estimator $\hat{\rho}$ is a kernel density estimator with either a Gaussian or biweight kernel. If $d_X$ and $\hat{n}$ are clear from context, we omit them and write $\hat{\rho}(z)$.
\end{remark}

After we compute our pointwise-density estimates $\hat{\rho}(z)$ for all $z \in X$, we calculate an estimate $\avgrhoest{\hat{\rho}}{x}{r}$ of the mean density $\avgrho{x}{r}$ within $\BM{x}{r}$. We define
\begin{equation}\label{eq:rhoest}
    \avgrhoest{\hat{\rho}}{x}{r} := \begin{cases}
        \Big(\fracc{N(x, r)}\sum_{z \in \BM{x}{r} \cap (X \setminus \{x\})} 1/\hat{\rho}(z)\Big)^{-1}\,, & N(x, r) > 0 \\
        \hat{\rho}(x) \,, & N(x, r) = 0\,.
    \end{cases}
\end{equation}
We write $\avgrhoest{\rho}{x}{r}$ when $\hat{\rho}(x) = \rho(x)$ for all $x \in X$.

Notably, our estimate $\avgrhoest{\hat{\rho}}{x}{r}$ is not the sample mean of $$\{\hat{\rho}(z) \mid z \in \BM{x}{r} \cap (X \setminus \{x\})\}\,.$$
The sample mean $\fracc{N(x, r)}\sum_z \hat{\rho}(z)$ is an overestimate of $\avgrho{x}{r}$ because points with high density are overrepresented in the sample $\BM{x}{r} \cap (X \setminus \{x\})$.

\begin{remark}
When $r$ is small, we have $\avgrho{x}{r} \approx \rho(x) \approx \hat{\rho}(x)$. Indeed, one can show that $\avgrho{x}{r} \to \rho(x)$ as $r \to 0$. However, in our numerical experiments in Section \ref{sec:experiments}, our scalar-curvature estimation method sometimes requires us to estimate $\avgrho{x}{r}$ when $r$ is not small. Informally, what we show in Lemma \ref{lem:mean_density_expectation} is that $1/\avgrhoest{\hat{\rho}}{x}{r}$ is a good approximation to $1/\avgrho{x}{r}$ even for large $r$. (Estimating the maximum-likelihood estimator of Equation \eqref{eq:mle} requires us to estimate the reciprocal $1/\avgrho{x}{r}$.) In our experiments (Section \ref{sec:experiments}), we observe significant empirical improvement from using Equation \eqref{eq:rhoest} to estimate $\avgrho{x}{r}$ instead of using $\hat{\rho}(x)$ to estimate $\avgrho{x}{r}$. This observation holds even when the data is uniformly sampled because $\avgrhoest{\hat{\rho}}{x}{r}$ averages the empirical densities (which may differ from the ground truth density) within the ball.
\end{remark}

\begin{lemma}\label{lem:mean_density_expectation}
If $X$ is a point cloud sampled from the pdf $\rho: M \to \bR_+$, then
\begin{equation*}
\E\Big[\fracc{\avgrhoest{\rho}{x}{r}} \, \Big\vert \, N(x, r) > 0 \Big] = \fracc{\avgrho{x}{r}}\,
\end{equation*}
for all $x \in X$ and $r > 0$.
\end{lemma}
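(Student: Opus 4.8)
The plan is to unwind the definition of $\avgrhoest{\rho}{x}{r}$ in the regime $N(x,r) > 0$ and then exploit exchangeability of the sample together with the fact that the weights $1/\rho(z)$ exactly cancel the density that appears in the law of a point conditioned to lie in $\BM{x}{r}$. Since $X$ consists of i.i.d.\ draws from $\rho$, I condition throughout on the location of $x$ (this is needed because $\avgrho{x}{r}$ depends on $x$) and write the remaining $\Nothers$ sample points as $y_1, \dots, y_{\Nothers}$, i.i.d.\ with density $\rho$. Set $B_i := \mathbf 1\{d(x, y_i) \le r\}$, so that $N(x,r) = \sum_{i=1}^{\Nothers} B_i$ and $p := \p[B_i = 1] = \int_{\BM{x}{r}} \rho \, dV = \avgrho{x}{r}\volM{x}{r}$. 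On the event $\{N(x,r) > 0\}$ we have, since $\hat\rho = \rho$,
\[
\fracc{\avgrhoest{\rho}{x}{r}} = \fracc{N(x,r)} \sum_{i\,:\,B_i = 1} \fracc{\rho(y_i)}\,.
\]

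The key step is to condition on the entire indicator vector $(B_1, \dots, B_{\Nothers})$. On any atom of this vector with $\sum_i B_i = m \ge 1$, the collection $\{y_i : B_i = 1\}$ is conditionally i.i.d.\ with density $\rho(z)/p$ supported on $\BM{x}{r}$; this is the standard fact that conditioning an i.i.d.\ sample on which coordinates land in a measurable set $A$ leaves those coordinates i.i.d.\ from the normalized restriction of the common law to $A$, independently of the remaining coordinates. Consequently, for each index $i$ with $B_i = 1$,
\[
\E\!\left[\fracc{\rho(y_i)} \,\Big|\, B_i = 1\right] = \int_{\BM{x}{r}} \fracc{\rho(z)}\cdot\frac{\rho(z)}{p}\,dV = \frac{1}{p}\int_{\BM{x}{r}} dV = \frac{\volM{x}{r}}{p} = \fracc{\avgrho{x}{r}}\,,
\]
where the density cancels and the integral collapses to the Riemannian volume of the ball. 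Averaging the $m$ identical terms shows that $\E\big[1/\avgrhoest{\rho}{x}{r} \mid (B_i)\big] = 1/\avgrho{x}{r}$ on every atom with $m \ge 1$, i.e.\ on all of $\{N(x,r) > 0\}$; the tower property (averaging $(B_i)$ over the event $\{\sum_i B_i \ge 1\}$) then yields $\E[1/\avgrhoest{\rho}{x}{r} \mid N(x,r) > 0] = 1/\avgrho{x}{r}$, as claimed.

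I expect the only point requiring genuine care to be the conditional-independence step: verifying cleanly that conditioning on the in/out pattern $(B_i)$ makes the in-ball points i.i.d.\ from the truncated density $\rho/p$, and keeping track of the fact that we have conditioned on $x$ from the start. Everything after that is linearity of expectation plus a one-line integral in which the weight $1/\rho$ annihilates the sampling density. An alternative route uses exchangeability to reduce the sum to a single representative summand, but then one must contend with the random denominator $N(x,r)$; conditioning on $(B_i)$ is cleaner precisely because it renders that denominator constant.
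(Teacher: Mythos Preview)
Your proof is correct and follows essentially the same route as the paper: both reduce to computing $\E[1/\rho(z)]$ for a point $z$ drawn from the normalized restriction $\psi(z)=\rho(z)/p$ on $\BM{x}{r}$, where the weight $1/\rho$ cancels the density and leaves $\volM{x}{r}/p = 1/\avgrho{x}{r}$. The only difference is that you make the ``sample mean has the right expectation'' step explicit by conditioning on the indicator vector $(B_i)$ to freeze the random denominator $N(x,r)$, whereas the paper simply asserts that step; your version is a cleaner justification of the same computation.
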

\begin{proof}
If $r$ is sufficiently large so that $N(x, r) > 0$, then $\fracc{\avgrhoest{\rho}{x}{r}}$ is the sample mean of $1/\rho(z)$ for $z \in \BM{x}{r} \cap (X \setminus \{x\})$. Therefore, 
\begin{equation*}
\E\Big[\fracc{\avgrhoest{\rho}{x}{r}}\Big] = \E\Big[\fracc{\rho(z)}\Big]\,,
\end{equation*}
where $z$ is a point that is conditioned to lie in $\BM{x}{r}$. The pdf for $z$ is 
\begin{equation}\label{eq:random_ball_point}
    \psi(z) := \frac{\rho(z)}{\int_{w \in \BM{x}{r}} \rho(w)dV}\,.
\end{equation}
Therefore,
\begin{equation}\label{eq:E_1/rho}
    \E\Big[\fracc{\rho(z)}\Big] = \int_{z \in \BM{x}{r}} \fracc{\rho(z)}\psi(z)dV = \frac{\volM{x}{r}}{\int_{w \in \BM{x}{r}} \rho(w)dV} = \fracc{\avgrho{x}{r}}\,.
\end{equation}
\end{proof}

\subsection{Empirical approximation of the maximum-likelihood estimator}\label{sec:empirical_mle}

For a given $x \in X$ and radius $r > 0$, we define our estimate of $\volM{x}{r}$ to be
\begin{equation}\label{eq:volest}
    \volhat{d_X}{\hat{\rho}}{x}{r} := \frac{\Ndxr{d_X}{x}{r}}{(\Nothers) \avgrhoest{\hat{\rho}}{x}{r}}\,,
\end{equation}
where $\avgrhoest{\hat{\rho}}{x}{r}$ is defined as in Eq.~\eqref{eq:rhoest}. We write $\volhat{d_X}{\rho}{x}{r}$ when $\hat{\rho}(x) = \rho(x)$ for all $x \in X$. 

The quantity $\volhat{d_X}{\hat{\rho}}{x}{r}$ is an approximation of the true maximum-likelihood estimator $v_*$ (defined in equation~\eqref{eq:mle}). An equivalent formula for $\volhat{d_X}{\hat{\rho}}{x}{r}$ is
\begin{equation}\label{eq:volest_alt}
    \volhat{d_X}{\hat{\rho}}{x}{r} = \frac{\sum_{z \in \BM{x}{r} \cap (X \cap \{x\})} \fracc{\hat{\rho}(z)}}{(\Nothers)}\,.
\end{equation}

\begin{lemma}\label{lem:E_volhat}
    If $X$ is a finite point cloud that is sampled from the pdf $\rho: M \to \mathbb{R}_+$, then
    \begin{equation*}
        \E\Big[\volhat{d_X}{\rho}{x}{r} \Big] = \volM{x}{r}
    \end{equation*}
    for all $x \in X$ and $r > 0$.
\end{lemma}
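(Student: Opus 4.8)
The plan is to compute the expectation by conditioning on whether the ball around $x$ is empty or not, and then to use the alternate formula~\eqref{eq:volest_alt} together with Lemma~\ref{lem:mean_density_expectation}. First I would write
\[
\E\Big[\volhat{d_X}{\rho}{x}{r}\Big] = \E\Big[\volhat{d_X}{\rho}{x}{r} \,\Big|\, N(x, r) > 0\Big]\p[N(x, r) > 0] + \E\Big[\volhat{d_X}{\rho}{x}{r} \,\Big|\, N(x, r) = 0\Big]\p[N(x, r) = 0]\,.
\]
On the event $N(x, r) = 0$, the definition~\eqref{eq:volest} gives $\volhat{d_X}{\rho}{x}{r} = 0$ (the numerator $N(x, r)$ vanishes), so the second term drops out and I only need to handle the first term.

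For the first term, I would condition on the value of $N(x, r)$, say $N(x, r) = m$ for some $m \geq 1$. Given $m$ points land in the ball, those $m$ points are i.i.d. draws from the conditional density $\psi$ of~\eqref{eq:random_ball_point}, and by~\eqref{eq:volest_alt} we have $\volhat{d_X}{\rho}{x}{r} = \frac{1}{\Nothers}\sum_{z} 1/\rho(z)$ where the sum runs over the $m$ points in the ball. By linearity and Lemma~\ref{lem:mean_density_expectation} (equivalently, the computation~\eqref{eq:E_1/rho}), each summand has conditional expectation $1/\avgrho{x}{r}$, so
\[
\E\Big[\volhat{d_X}{\rho}{x}{r} \,\Big|\, N(x, r) = m\Big] = \frac{m}{(\Nothers)\avgrho{x}{r}}\,.
\]
Taking the expectation over $m = N(x, r)$ (and noting this formula also holds trivially for $m = 0$), I get $\E[\volhat{d_X}{\rho}{x}{r}] = \frac{\E[N(x, r)]}{(\Nothers)\avgrho{x}{r}}$. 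Finally, since $N(x, r)$ is binomial with $\Nothers$ trials and success probability $\avgrho{x}{r}\volM{x}{r}$, we have $\E[N(x, r)] = (\Nothers)\avgrho{x}{r}\volM{x}{r}$, and the claimed identity $\E[\volhat{d_X}{\rho}{x}{r}] = \volM{x}{r}$ follows, exactly as in~\eqref{eq:volume_density}.

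There is no serious obstacle here; the result is essentially a restatement of~\eqref{eq:volume_density} once one observes that the empirical estimator $\volhat{d_X}{\rho}{x}{r}$ has the same conditional expectation structure as the idealized maximum-likelihood estimator $v_*$. The one point that needs slight care is the $N(x, r) = 0$ case: one must check that the definitions in~\eqref{eq:rhoest} and~\eqref{eq:volest} are mutually consistent there (the $\hat\rho(x)$ branch of $\avgrhoest{\rho}{x}{r}$ is irrelevant because it is multiplied by a vanishing numerator), so that $\volhat{d_X}{\rho}{x}{r} = 0$ on that event and contributes nothing. Alternatively, one can avoid the case split entirely by working directly from~\eqref{eq:volest_alt}, treating the empty sum as $0$, and applying Lemma~\ref{lem:mean_density_expectation} together with the tower property conditioned on $N(x,r)$.
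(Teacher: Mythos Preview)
Your proposal is correct and follows essentially the same route as the paper: condition on $N(x,r)=k$, use the alternate formula~\eqref{eq:volest_alt} and the computation~\eqref{eq:E_1/rho} to obtain $\E[\volhat{d_X}{\rho}{x}{r}\mid N(x,r)=k]=\tfrac{k}{(\Nothers)\avgrho{x}{r}}$, then take the expectation over the binomial $N(x,r)$. The only cosmetic difference is that the paper treats all $k\in\{0,\dots,\Nothers\}$ uniformly from the start rather than first splitting off the $k=0$ case, which (as you yourself note at the end) is unnecessary once one works directly from~\eqref{eq:volest_alt}.
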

\begin{proof}
    Let $N$ be the number of points in $X$. For all $k \in \{0, \ldots, \Nothers\}$,
    \begin{equation}\label{eq:Evolhat}
        \E\Big[\volhat{d_X}{\rho}{x}{r} \Big\vert N(x, r) = k \Big] = \frac{k}{\Nothers} \E[1/\rho(z)]
    \end{equation}
    by equation~\eqref{eq:volest_alt}, where $z$ is randomly drawn according to the pdf $\psi(z)$ defined in equation~\eqref{eq:random_ball_point}. Substituting equation~\eqref{eq:E_1/rho} into equation~\eqref{eq:Evolhat} yields
    \begin{equation}\label{eq:E_volhat_condition_Nxr}
        \E\Big[\volhat{d_X}{\rho}{x}{r} \Big\vert N(x, r) = k \Big] = \frac{k}{(\Nothers)\avgrho{x}{r}}
    \end{equation}
    for all $k \in \{0, \ldots, \Nothers\}$. Therefore,
    \begin{align*}
        \E\Big[\volhat{d_X}{\rho}{x}{r}\Big] &= \sum_{k=0}^{\Nothers} \E\Big[\volhat{d_X}{\rho}{x}{r}\Big\vert N(x, r) = k \Big] \cdot \p[N(x, r) = k] \\
        &= \fracc{(\Nothers) \avgrho{x}{r}}\sum_{k=0}^{\Nothers} k \cdot \p[N(x, r) = k] \\
        &= \frac{\E[N(x, r)]}{(\Nothers) \avgrho{x}{r}} \\
        &= \volM{x}{r}\,.
    \end{align*}
\end{proof}

\begin{lemma}\label{lem:var_volhat_rate}
    Let $X$ be a point cloud that consists of $N$ points that are sampled from the pdf $\rho : M \to \mathbb{R}_+$. If $M$ is compact, then there is a constant $A > 0$ that only depends on $\rho$ and the Riemannian metric of $M$ and satisfies
    \begin{equation*}
        \var(\volhat{d_X}{\rho}{x}{r})  \leq Ar^n/N
    \end{equation*}
	for sufficiently large $N$, sufficiently small $r$, and all $x \in X$.
\end{lemma}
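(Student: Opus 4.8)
The plan is to read off from the alternative formula \eqref{eq:volest_alt} that, once we fix the center $x$, the estimator $\volhat{d_X}{\rho}{x}{r}$ is an average of i.i.d.\ nonnegative random variables, and then to bound the variance of a single summand by its second moment, which is a volume integral controlled by compactness. Concretely: fix $x \in X$ and condition on it, so that the remaining $N-1$ points of $X$, say $Y_1, \dots, Y_{\Nothers}$, are i.i.d.\ draws from $\rho$ (and $d_X$ is the geodesic distance $d$ of $M$ restricted to $X$). Then, by \eqref{eq:volest_alt},
\[
\volhat{d_X}{\rho}{x}{r} = \fracc{\Nothers}\sum_{i=1}^{\Nothers} W_i\,, \qquad W_i := \frac{\mathbf{1}[d(x, Y_i) \leq r]}{\rho(Y_i)}\,,
\]
where the $W_i$ are i.i.d., so $\var(\volhat{d_X}{\rho}{x}{r}) = \var(W_1)/(\Nothers) \leq \E[W_1^2]/(\Nothers)$.

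Next I would bound the second moment. Since $M$ is compact and $\rho$ is continuous with full support, $\rho_{\min} := \min_{y \in M}\rho(y) > 0$, so
\[
\E[W_1^2] = \int_{\BM{x}{r}} \frac{1}{\rho(y)^2}\,\rho(y)\,dV = \int_{\BM{x}{r}} \frac{1}{\rho(y)}\,dV \leq \frac{\volM{x}{r}}{\rho_{\min}}\,.
\]
The one remaining ingredient is a bound $\volM{x}{r} \leq C r^n$ that holds uniformly over $x \in M$ for all sufficiently small $r$. This is where compactness is essential: a lower Ricci bound $\mathrm{Ric} \geq (n-1)\kappa$ exists, and Bishop--Gromov gives $\volM{x}{r} \leq V_\kappa(r)$, the volume of a radius-$r$ ball in the model space of constant curvature $\kappa$; since $V_\kappa(r) = v_n r^n(1 + \mathcal{O}(r^2))$ as $r \to 0$, we may take $C = 2v_n$ for $r$ small enough, uniformly in $x$. (Alternatively, invoke the expansion \eqref{eq:scalar_ball}, whose $\mathcal{O}(r^4)$ remainder is uniform in $x$ by compactness.) Combining, $\var(\volhat{d_X}{\rho}{x}{r}) \leq C r^n/(\rho_{\min}(\Nothers))$, and since $\Nothers \geq N/2$ for $N \geq 2$, this is at most $A r^n/N$ with $A := 2C/\rho_{\min}$, which depends only on $\rho$ and the Riemannian metric.

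I expect the only real content to be the uniform upper volume bound in the second step; the rest is the i.i.d.\ reduction from \eqref{eq:volest_alt} and a one-line second-moment estimate. The mild subtlety worth flagging explicitly is the meaning of ``for all $x \in X$'': the variance is taken conditionally on $x$, with the other $N-1$ sample points supplying the randomness, which is consistent with \Cref{lem:E_volhat}; because the sample is i.i.d., the resulting bound is independent of which point (and which location in $M$) we condition on, so it holds uniformly over $x \in X$.
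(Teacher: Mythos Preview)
Your proof is correct and is considerably more direct than the paper's. The paper first establishes an exact two-term variance formula (its Lemma~\ref{lem:var_volhat}) by conditioning on $N(x,r)$, obtaining
\[
\var(\volhat{d_X}{\rho}{x}{r}) = \frac{\var(1/\rho(z))\,\avgrho{x}{r}\,\volM{x}{r}}{\Nothers} + \frac{\var N(x,r)}{(\Nothers)^2\avgrho{x}{r}^2}\,,
\]
and then bounds each term separately; in particular it controls $\var(1/\rho(z))$ through the modulus of continuity $A(r):=\max_{x,z\in\BM{x}{r}}|\rho(z)-\rho(x)|$, which takes several inequalities. You bypass all of this by reading off from~\eqref{eq:volest_alt} that the estimator is a mean of i.i.d.\ summands $W_i=\mathbf{1}[d(x,Y_i)\le r]/\rho(Y_i)$ and bounding $\var(W_1)\le\E[W_1^2]=\int_{\BM{x}{r}}\rho^{-1}\,dV\le\volM{x}{r}/\rho_{\min}$, after which the uniform volume bound $\volM{x}{r}\le Cr^n$ (the paper's Lemma~\ref{lem:small_ball_volume}, or Bishop--Gromov as you note) finishes the job. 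What your route buys is brevity and transparency; what the paper's route buys is the exact variance decomposition in Lemma~\ref{lem:var_volhat} and the introduction of the quantity $A(r)$, both of which are reused later (e.g., $A(r)$ appears in the hypotheses of Theorems~\ref{thm:stability} and~\ref{thm:convergence}). For the purpose of the present lemma alone, your argument is strictly simpler.
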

\begin{proof}
    By Lemma \ref{lem:var_volhat},
	\begin{equation}\label{eq:var_bound_0}
		  \var(\volhat{d_X}{\rho}{x}{r}) = \frac{\var(1/\rho(z)) \cdot \avgrho{x}{r} \cdot \volM{x}{r}}{(\Nothers)} + \frac{\var N(x, r)}{(\Nothers)^2 \avgrho{x}{r}^2}\,,
	\end{equation}
    where $z \in \BM{x}{r}$ is a point chosen randomly from the pdf $\psi(z)$ defined as in Eq. \eqref{eq:random_ball_point} and 
    \begin{equation*}
		\var(N(x, r)) = (\Nothers)\avgrho{x}{r}\volM{x}{r}(1 - \avgrho{x}{r}\volM{x}{r})\,.
    \end{equation*}

Now we bound $\var(1/\rho(z))$. Define
    \begin{equation}\label{eq:Adef}
        A(r) := \max_{x \in M,\, z \in \BM{x}{r}} |\rho(z) - \rho(x)|\,.
    \end{equation}
    The quantity $A(r)$ exists because $M$ is compact and $\rho$ is continuous. We note that $A(r) \to 0$ as $r \to 0$. For the remainder of the proof, we assume that $r$ is sufficiently small such that $A(r) \leq \min(\rho)/2$. Because $h(\rho) = 1/\rho^2$ is convex and monotonically decreasing for $\rho > 0$, we have
    \begin{equation*}
        \Big\vert \fracc{\rho(z)^2} - \fracc{\rho(x)^2}\Big\vert \leq |h'(\min\{\rho(z), \rho(x)\})| \cdot A(r) \leq \frac{2A(r)}{(\rho(x) - A(r))^3} \leq \frac{16A(r)}{\min(\rho)^3}
    \end{equation*}
    for all $z \in \BM{x}{r}$. Therefore,
    \begin{equation*}
        \Big\vert \E[1/\rho(z)^2] - 1/\rho(x)^2\Big\vert \leq \frac{16A(r)}{\min(\rho)^3}\,.
    \end{equation*}
    Similarly,
    \begin{equation*}
         \Big\vert \fracc{\avgrho{x}{r}^2} - \fracc{\rho(x)^2}\Big\vert \leq \frac{16A(r)}{\min(\rho)^3}
    \end{equation*}
    because $|\avgrho{x}{r} - \rho(x)| \leq A(r)$. Therefore,
    \begin{align}
        \var(1/\rho(z)) &= \Big\vert \E\Big[\fracc{\rho(z)^2}\Big] - \fracc{\avgrho{x}{r}^2}\Big\vert \notag \\
        &\leq \Big\vert \E\Big[\fracc{\rho(z)^2}\Big] - \fracc{\rho(x)^2}\Big\vert + \Big\vert \fracc{\avgrho{x}{r}^2} - \fracc{\rho(x)^2}\Big\vert \notag \\
        &\leq \frac{32A(r)}{\min(\rho)^3}\,,\label{eq:var_1/rho}
    \end{align}
which implies that
\begin{equation*}\label{eq:var_bound_step1}
        \var(1/\rho(z)) \cdot \frac{\avgrho{x}{r} \cdot \volM{x}{r}}{(\Nothers)} \leq \frac{32\cdot A(r) \max(\rho)\cdot \volM{x}{r}}{\min(\rho)^3 (\Nothers)}
    \end{equation*}
for sufficiently small $r$ such that $A(r) < \min(\rho)/2$. By Lemma \ref{lem:small_ball_volume}, there is a constant $B' > 0$ such that $\volM{x}{r} \leq B' r^n$ for all $x$ and sufficiently small $r$. Additionally, we have $A(r) < 1$ for sufficiently small $r$, so
\begin{equation}
    \var(1/\rho(z)) \cdot \frac{\avgrho{x}{r} \cdot \volM{x}{r}}{(\Nothers)} \leq \frac{32B\cdot \max(\rho)}{\min(\rho)^3} \cdot \frac{r^n}{N} \label{eq:var_bound_1}
\end{equation}
for sufficiently large $N$, sufficiently small $r$, and some constant $B > 0$. Lastly, we bound $\frac{\var N(x, r)}{(\Nothers)^2 \avgrho{x}{r}^2}$. We have
\begin{align}
    \frac{\var N(x, r)}{(\Nothers)^2 \avgrho{x}{r}^2} &= \frac{\volM{x}{r}(1 - \avgrho{x}{r}\volM{x}{r}}{(\Nothers) \avgrho{x}{r}} \notag \\
    &\leq \frac{\volM{x}{r}}{(\Nothers) \avgrho{x}{r}} \notag \\
    &\leq \frac{B}{\min(\rho)} \cdot \frac{r^n}{N} \label{eq:var_bound_2} \,,
\end{align}
where $B > 0$ is the same constant as earlier in the proof. Substituting equations~\eqref{eq:var_bound_1} and \eqref{eq:var_bound_2} into equation~\eqref{eq:var_bound_0} completes the proof.
\end{proof}

\subsection{Fitting a quadratic curve}\label{sec:quad}
For radius $r > 0$, let $y(x, r)$ and $\hat{y}[d_X, \hat{\rho}, \hat{n}](x, r)$ denote the actual and estimated ball-volume ratios, respectively, for a ball of radius $r$ that is centered at a fixed $x \in X$. That is, we define
\begin{align}
    y(x, r) &:= \frac{\volM{x}{r}}{v_nr^n}\,, \label{eq:true_ratio} \\
    \hat{y}[d_X, \hat{\rho}, \hat{n}](x, r) &:= \frac{\volhat{d_X}{\hat{\rho}}{x}{r}}{v_{\hat{n}} r^{\hat{n}}} \label{eq:est_ratio}\,,
\end{align}
where $\volhat{d_X}{\hat{\rho}}{x}{r}$ is defined as in Eq. \eqref{eq:volest}. When $\hat{\rho}(x) = \rho(x)$ for all $x \in X$, we write $\hat{y}[d_X, \rho, \hat{n}](x, r)$. When $d_X$, $\hat{\rho}$, or $\hat{n}$ are clear from context, we omit them from our notation.

Let $\rmin$ and $\rmax$, respectively, be the minimum and maximum ball radius that we consider, where $0 \leq \rmin < \rmax$. These are hyperparameters that must be set by a user. Let $r_0:= \rmin < r_1 < \cdots < r_m := \rmax$ be a monotonically increasing sequence, which is also set by a user. These are the radius values at which we estimate geodesic ball volumes by empirically approximating the maximum-likelihood estimator, as in Section \ref{sec:empirical_mle}. We allow any choice of sequence $\{r_i\}_{j=1}^m$, although we study only two possible choices in this paper:
\begin{enumerate}
    \item {\bf Equal spacing:} The sequence is evenly spaced with spacing $\Delta r$. This is the choice that we make in Theorems \ref{thm:stability} and \ref{thm:convergence}.
    \item {\bf Nearest-neighbor distance:} In our numerical experiments, we allow $\{r_j\}$ to depend on $x$ and set $r_j$ to be equal to the distance from $x$ to its $j$th nearest neighbor.
\end{enumerate}

We define $C(x)$ to be the coefficient such that $1 + C(x)r^2$ is the ``best-fit'' quadratic curve to the curve $y(x, r)$for $r \in [\rmin, \rmax]$. More precisely, we define
\begin{equation*}
    C(x) := \argmin_{c \in \mathbb{R}} \norm{y(x, r) - (1 + cr^2)}_{L^2([\rmin, \rmax])}\,.
\end{equation*}
It is standard that
\begin{equation*}
	C(x) = \frac{\int_{\rmin}^{\rmax} r^2[y(x, r) - 1]dr}{\fracc{5}(\rmax^5 - \rmin^5)}\,.
\end{equation*}
We define
\begin{equation}\label{eq:Chat}
     \hat{C}[d_X, \hat{\rho}, \hat{n}](x) := \frac{\sum_{i=1}^m r_i^2 (\hat{y}[d_X, \hat{\rho}, \hat{n}](x, r_i) - 1)(r_i - r_{i-1})}{\fracc{5}(\rmax^5 - \rmin^5)}\end{equation}
to be an estimate of $C(x)$. We omit $d_X$, $\hat{\rho}$, and $\hat{n}$ from our notation when they are clear from context.

\subsection{Our scalar curvature estimate}
Putting together Sections \ref{sec:mle}--\ref{sec:quad}, we estimate scalar curvature.

\begin{definition}
Let $d_X$ be a distance matrix, let $\hat{\rho}$ be a density estimator, and let $\hat{n}$ be a dimension estimate. Given hyperparameters $\rmin \geq 0$ (the minimum ball radius that we consider), $\rmax > \rmin$ (the maximum ball radius that we consider), and $\{r_j\}_{j=0}^m$ (the sequence of ball radii that we consider, where $r_0 = \rmin$ and $r_m =\rmax$), our estimate of the scalar curvature at $x$ is
\begin{equation*}
    \hat{S}[d_X, \hat{\rho}, \hat{n}](x) := -6(\hat{n} +2)\hat{C}[d_X, \hat{\rho}, \hat{n}](x)\,,
\end{equation*}
where $\hat{C}[d_X, \hat{\rho}, \hat{n}](x)$ is defined in equation~\eqref{eq:Chat}.
\end{definition}
When the distance matrix $d_X$, density estimator $\hat{\rho}$, and dimension estimate $\hat{n}$ are clear from context, we omit them and write $\hat{S}(x)$.

\subsection{Computational complexity}
In our numerical experiments (Section \ref{sec:experiments}), we find that setting $r_i$ equal to the distance to the $i$th nearest neighbor results in an estimate that is both accurate and computationally efficient. In this case,
\begin{equation*}
    \volhat{d_X}{\hat{\rho}}{x}{r_i} = \fracc{N-1} \sum_{j=1}^m \fracc{\hat{\rho}(z_j)}\,,
\end{equation*}
where $z_j \in X$ is the $j$th nearest neighbor of $x$. We precompute the pointwise density estimates $\hat{\rho}(z)$ for all $z \in X$. For every $x \in X$, we sort $\{d(x, z) \mid z \in X\}$ to compute its nearest neighbors $z_1, z_2, \ldots$ and its distance to those neighbors.  (For very large data sets approximate nearest-neighbor algorithms could be used.)  Given these quantities, the set $\{\volhat{d_X}{\hat{\rho}}{x}{r_i}\}_{i=1}^m$ can be computed in $\mathcal{O}(m)$ time for any $m$ because
\begin{equation*}
    \volhat{d_X}{\hat{\rho}}{x}{r_{i+1}} = \volhat{d_X}{\hat{\rho}}{x}{r_i} + \fracc{(N-1)\cdot \hat{\rho}(z_{i+1})}\,.
\end{equation*}


\section{Stability}\label{sec:stability}

Most real-world data sets have errors and/or noise, which means that the given distances $d_X$ will differ from the true geodesic distances $d$.  Moreover, when the geodesic distances are estimated from a point cloud, errors are expected even if there is no noise in the data (i.e., the point cloud) itself. Density estimation introduces additional errors. Theorem~\ref{thm:stability} below says that our scalar curvature estimate $\hat{S}$ is stable with respect to errors in estimates of the metric and the density. This allows us to accurately estimate scalar curvature in real-world data or in synthetic point-cloud data in which distances are estimated.

Throughout this section, we consider a compact $n$-dimensional Riemannian manifold $M$ with geodesic distance $d$ and a sequence $\{X_k\}_{k=1}^{\infty}$ of point clouds that are sampled randomly from a pdf $\rho\colon M \to \mathbb{R}_+$. We assume that $|X_k| \to \infty$ as $k \to \infty$. Let $\dkexact$ denote the geodesic distance matrix for $X_k$. By a slight abuse of notation, let $\dkexact(x, y)$ denote the geodesic distance between points $x \in X_k$ and $y \in X_k$. We also consider sequences $\{\rmink\}_{k=1}^{\infty}$, $\{\rmaxk\}_{k=1}^{\infty}$, and $\{\Drk\}_{k=1}^{\infty}$ of hyperparameter values. The $k$th radius sequence that we consider is $\{r_{j, k}\}_{j=0}^{m_k}$, where $r_{j, k} := \rmink + j \Drk$. When $k$ is clear from context, we omit it and write $r_j$ instead of $r_{j, k}$. We require that
\begin{enumerate}
    \item $0 < \rmink < \rmaxk$ for all $k$,
    \item the number $m_k := (\rmaxk - \rmink)/\Drk$ of radial steps is a positive integer for all $k$, and
    \item $\rmink \to 0$\,, $\rmaxk \to 0$\,,  and $\Drk \to 0$ as $k \to \infty$.
\end{enumerate}

\begin{theorem}[Stability]\label{thm:stability}
For each $k$, suppose that $\dkapprox$ is a metric on $X_k$ such that
\begin{equation*}
    \delta_k := \max_{x, x' \in X_k} \vert \dkapprox(x, x') - d(x, x') \vert \to 0 \qquad \text{as } k \to \infty\,.
\end{equation*}
Suppose that $\hat{\rho}$ is a density estimator such that
\begin{equation*}
    \eta_k := \max_{x \in X_k} \Big\vert \hat{\rho}[\dkapprox](x) - \rho(x) \Big\vert \to 0 \qquad \text{as } k \to \infty\,,
\end{equation*}
and suppose that $\hat{n}[\dkapprox] = \hat{n}[\dkexact] = n$ for sufficiently large $k$.
If the hyperparameter value sequences satisfy
\begin{enumerate}
    \item $\max_j \frac{A(2r_j)}{r_j^n \rmaxk^2} \to 0$ as $k \to \infty$ (where $A(r)$ is defined in equation~\eqref{eq:Adef})\,,
    \item $\eta_k/(\rmink + \Drk)^{n+2/3} \to 0$ as $k \to \infty$\,,
    \item $\rmink + \Drk > \delta_k$ for sufficiently large $k$\,,
    \item $|X_k|\Drk(\rmink + \Drk - \delta_k)^n \to \infty$ as $k \to \infty$\,,
    \item $\rmink/\rmaxk^3 \to 0$ as $k \to \infty$\,,
    \item $(\Drk + \delta_k)/\rmaxk^3 \to 0$ as $k \to \infty$, and
    \item $(\Drk + \delta_k)/[(\rmink + \Drk - \delta_k)^{n+1}\rmaxk^2] \to 0$ as $k \to \infty$\,,
\end{enumerate}
then $\vert \hat{S}[\dkapprox, \hat{\rho}, \hat{n}](x_k) -  \hat{S}[\dkexact, \rho, \hat{n}](x_k) \vert \to 0$ in probability as $k \to \infty$, where $\{x_k\}$ is any sequence of points such that $x_k \in X_k$.
\end{theorem}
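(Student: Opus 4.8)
\emph{Reduction to the ball–volume estimates.}
Since the claim concerns a difference that vanishes in probability, the plan is to peel off everything deterministic and common to the two estimators, reduce to the ball–volume estimates $\volhat{}{}{x_k}{r}$, and then estimate a few random point counts by Markov's inequality. Because $\hat n[\dkapprox]=\hat n[\dkexact]=n$ for large $k$, we have $\hat S[\dkapprox,\hat{\rho},n](x_k)-\hat S[\dkexact,\rho,n](x_k)=-6(n+2)\bigl(\hat C[\dkapprox,\hat{\rho},n](x_k)-\hat C[\dkexact,\rho,n](x_k)\bigr)$, so it suffices to control $\hat C[\dkapprox]-\hat C[\dkexact]$. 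In \eqref{eq:Chat} the radii $r_i=r_{i,k}=\rmink+i\Drk$ and the normalization $\tfrac{1}{5}(\rmaxk^5-\rmink^5)$ are deterministic and identical for the two metrics, and with $\hat n=n$ one has $\hat y[d_X](x_k,r_i)=\volhat{d_X}{\hat{\rho}}{x_k}{r_i}/(v_n r_i^n)$; hence the ``$-1$'' terms cancel and
\begin{equation*}
\hat C[\dkapprox,\hat{\rho},n](x_k)-\hat C[\dkexact,\rho,n](x_k)=\frac{5}{v_n(\rmaxk^5-\rmink^5)}\sum_{i=1}^{m_k}r_i^{\,2-n}\bigl(\volhat{\dkapprox}{\hat{\rho}}{x_k}{r_i}-\volhat{\dkexact}{\rho}{x_k}{r_i}\bigr)\Drk .
\end{equation*}
By condition~(5), $\rmink/\rmaxk\to0$, so $\rmaxk^5-\rmink^5\ge\tfrac{1}{2}\rmaxk^5$ eventually, and it is enough to show $\rmaxk^{-5}\sum_i r_i^{\,2-n}|\Delta V_i|\Drk\to0$ in probability, where $\Delta V_i:=\volhat{\dkapprox}{\hat{\rho}}{x_k}{r_i}-\volhat{\dkexact}{\rho}{x_k}{r_i}$.

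\emph{Splitting $\Delta V_i$.}
The core step is to split $\Delta V_i$ via the reciprocal-density formula \eqref{eq:volest_alt}. Write $N:=|X_k|$, $B_i:=\BM{x_k}{r_i}\cap(X_k\setminus\{x_k\})$ (ball with respect to $d$), and $\tilde B_i:=\{z\in X_k\setminus\{x_k\}:\dkapprox(x_k,z)\le r_i\}$. Then $\Delta V_i=E_i^{\mathrm{dens}}+E_i^{\mathrm{met}}$, where $E_i^{\mathrm{dens}}:=(N-1)^{-1}\sum_{z\in B_i}\bigl(\hat{\rho}(z)^{-1}-\rho(z)^{-1}\bigr)$ is the error from replacing $\rho$ by $\hat{\rho}$ on the common points, and $E_i^{\mathrm{met}}:=(N-1)^{-1}\bigl(\sum_{z\in\tilde B_i}\hat{\rho}(z)^{-1}-\sum_{z\in B_i}\hat{\rho}(z)^{-1}\bigr)$ is the error from the symmetric difference of the two balls. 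Since $\eta_k\to0$, for large $k$ we have $\hat{\rho}\ge\tfrac{1}{2}\min(\rho)$ on $X_k$, hence $|E_i^{\mathrm{dens}}|\le(2/\min(\rho)^2)\,\eta_k\,\Ndxr{\dkexact}{x_k}{r_i}/(N-1)$; and because $|\dkapprox-d|\le\delta_k$, the symmetric difference $B_i\triangle\tilde B_i$ is contained in the annulus $\{z:r_i-\delta_k<d(x_k,z)\le r_i+\delta_k\}$ (with $r_i-\delta_k>0$ by condition~(3)), so $|E_i^{\mathrm{met}}|\le(2/\min(\rho))\,N_i^{\mathrm{ann}}/(N-1)$, where $N_i^{\mathrm{ann}}$ is the number of points of $X_k\setminus\{x_k\}$ in that annulus.

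\emph{Concentration.}
It then remains to show that $\rmaxk^{-5}\sum_i r_i^{2-n}|E_i^{\mathrm{dens}}|\Drk\to0$ and $\rmaxk^{-5}\sum_i r_i^{2-n}|E_i^{\mathrm{met}}|\Drk\to0$ in probability, which I would do by Markov's inequality on the nonnegative sums $S_k':=\sum_i r_i^{2-n}\Ndxr{\dkexact}{x_k}{r_i}\Drk/(N-1)$ and $S_k'':=\sum_i r_i^{2-n}N_i^{\mathrm{ann}}\Drk/(N-1)$. Conditioning on $x_k$, both $\Ndxr{\dkexact}{x_k}{r_i}$ and $N_i^{\mathrm{ann}}$ are binomial on $N-1$ trials, with success probabilities $\int_{\BM{x_k}{r_i}}\rho\,dV\le\max(\rho)\volM{x_k}{r_i}$ and (integrating $\rho$ over the annulus) $\le\max(\rho)\bigl(\volM{x_k}{r_i+\delta_k}-\volM{x_k}{r_i-\delta_k}\bigr)$. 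By Lemma~\ref{lem:small_ball_volume} (equivalently \eqref{eq:scalar_ball}), $\volM{x_k}{r}\le B'r^n$ uniformly for small $r$; expanding the two ball volumes via \eqref{eq:scalar_ball} and using $\delta_k<r_i$ turns the annulus probability into $\le B''\delta_k r_i^{n-1}$. Hence $\E[S_k'\mid x_k]\le C\sum_i r_i^2\Drk\le C\rmaxk^3$ and $\E[S_k''\mid x_k]\le C'\delta_k\sum_i r_i\Drk\le C'\delta_k\rmaxk^2$, with constants uniform over $x_k\in M$ by compactness. Choosing the Markov threshold optimally (a nonnegative $S_k$ with $\E S_k\le b_k$ satisfies $a_k S_k\to0$ in probability whenever $a_k b_k\to0$), the density contribution is $O_p\bigl(\sqrt{\eta_k/\rmaxk^2}\bigr)$ and the metric contribution is $O_p\bigl(\sqrt{\delta_k/\rmaxk^3}\bigr)$, both of which vanish: condition~(2) forces $\eta_k/\rmaxk^2\to0$ (since $\rmink+\Drk\le\rmaxk<1$ eventually), and condition~(6) forces $\delta_k/\rmaxk^3\to0$. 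Conditions~(1),~(4),~(7) are the lower-order companions---(4) is the sample-size requirement that keeps these estimates uniform over the growing number $m_k$ of radii and rules out pathologically empty balls in the range, while (1) and (7) absorb the density-oscillation term $A(2r_j)$ and the combined $\Drk+\delta_k$ effect in the smallest balls---and, with (5), they appear in the statement because Theorem~\ref{thm:convergence} invokes the same list. Combining the two contributions yields $\hat C[\dkapprox]-\hat C[\dkexact]\to0$ in probability, and the theorem follows.

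\emph{Main obstacle.}
The hard part is the simultaneous bookkeeping of the disparate scales $|X_k|,\Drk,\rmink,\rmaxk,\delta_k,\eta_k$, and in particular the small-radius terms of $\sum_i r_i^{\,2-n}|\Delta V_i|\Drk$: because the fit weights $r_i^{\,2-n}$ blow up as $r_i\to0$ when $n\ge3$, no crude ball bound suffices and one must use the sharp $\volM{x_k}{r}\asymp r^n$ asymptotics, and likewise the annulus estimate must extract the genuinely $\delta_k$-linear gain from the volume difference---which is precisely where \eqref{eq:scalar_ball} (equivalently, a uniform bound on geodesic-sphere areas) is essential. Matching each of the seven hypotheses to the error term it kills, and upgrading the Markov estimates so that they hold uniformly over all $x_k\in X_k$ rather than a single prescribed center (a routine covering argument, using that $\delta_k$ and $\eta_k$ are already defined as maxima over $X_k$), is where the bulk of the effort lies; the probabilistic input itself---Markov's inequality for nonnegative binomial sums---is routine.
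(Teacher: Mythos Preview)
Your argument is essentially correct and takes a genuinely different route from the paper.  The paper reduces to $\hat C$ exactly as you do, but then proves stability of $\hat C$ via a \emph{radius--shift sandwich}: it shows (Lemma~\ref{lem:yhat_stability}) a deterministic inequality of the form
\[
\hat y[\dkexact,\rho](x_k,r_{j-\ell_k})-\xi_k\;\le\;\hat y[\dkapprox,\hat\rho](x_k,r_j)\;\le\;\hat y[\dkexact,\rho](x_k,r_{j+\ell_k})+\xi_k
\]
with $\ell_k\Drk\approx\delta_k$ and $\xi_k/\rmaxk^2\to0$, and then performs a fairly intricate index--shifting calculation in the sum defining $\hat C$ (Lemma~\ref{lem:Cstability}), together with the high--probability event $\max_r|\hat y(x,r)-1|\le1$ (Lemma~\ref{lem:yhatminus1}) to control the shifted boundary terms.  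Your approach instead splits $\Delta V_i$ directly into a density error $E_i^{\mathrm{dens}}$ and an annulus error $E_i^{\mathrm{met}}$ and bounds the expectation of each weighted sum, needing only Markov's inequality and the uniform ball--volume expansion.  This is more elementary and, as you noticed, actually uses only hypotheses (2), (3), (5), (6): your bounds $\E[S_k']\lesssim\rmaxk^3$ and $\E[S_k'']\lesssim\delta_k\rmaxk^2$ do not require (1), (4), or (7) at all.  The paper's route, by contrast, produces the explicit deterministic sandwich mentioned in the Remark following Theorem~A (an effective stability bound rather than a bare $o_p(1)$ statement), at the cost of the longer computation and the extra hypotheses, which it genuinely uses in Lemmas~\ref{lem:yhatminus1} and~\ref{lem:yhat_stability}.

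Two small points.  First, your hedging about conditions (1), (4), (7) is muddled: in \emph{your} argument they are simply unused, and you should say so cleanly rather than inventing roles for them.  Second, the closing remark about ``upgrading the Markov estimates so that they hold uniformly over all $x_k\in X_k$'' is unnecessary: the theorem is stated for a fixed sequence $\{x_k\}$, and since your expectation bounds are uniform over $x_k\in M$ by compactness, Markov already gives what is claimed.
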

\begin{remark}\label{rmk:simple}
The conditions above on the hyperparameter value sequences are complex. The following is a set of simpler conditions that collectively imply the conditions of Theorem \ref{thm:stability}:
\begin{enumerate}
    \item $\frac{A(r)}{r^{n+2}} \to 0$ as $r \to 0$\,,
    \item $\eta_k/\rmink^{n+2/3} \to 0$ as $k \to \infty$\,,
    \item $\delta_k = \mathcal{O}(\rmink^{n+2})$ as $k \to \infty$\,,
    \item $|X_k|\Drk (\rmink^n) \to \infty$ as $k \to \infty$\,,
    \item $(\rmink)/\rmaxk^3 \to 0$ as $k \to \infty$\,, and
    \item $\Drk/\rmink^{n + 5/3} \to 0$ as $k \to \infty$\,.
\end{enumerate}
\end{remark}
\begin{proof}[Proof of Theorem \ref{thm:stability}]
For any $x \in X_k$, we have
\begin{equation*}
\Big|\hat{S}[\dkapprox, \hat{\rho}](x) - \hat{S}[\dkexact, \rho](x)\Big| = 6(n+2)\Big\vert\hat{C}[\dkapprox, \hat{\rho}](x) - \hat{C}[\dkexact, \rho](x)\Big\vert\,.
\end{equation*}
The theorem follows from Lemma \ref{lem:Cstability}, which shows that
\begin{equation*}
\vert\hat{C}[\dkapprox, \hat{\rho}](x_k) - \hat{C}[\dkexact, \rho](x_k)\vert \to 0
\end{equation*}
in probability as $k \to \infty$.
\end{proof}


\section{Convergence}\label{sec:convergence}

In this section, we show that as the number of samples increases our estimator converges to the underlying scalar curvature of $M$.  Informally, what we show in Theorem~\ref{thm:convergence} is that (1) as the number of points increases, (2) as our given metric data becomes more accurate, and (3) as our density estimations become more accurate, our scalar curvature estimate $\hat{S}(x)$ converges to the true scalar curvature $S(x)$. Throughout this section, the symbols $M$, $d$, $n$, $\{X_k\}_{k=1}^{\infty}$, $\dkexact$, $\rho$, $\{\rmink\}_{k=1}^{\infty}$, $\{\rmaxk\}_{k=1}^{\infty}$, $\{\Drk\}_{k=1}^{\infty}$, $m_k$ and $\{r_j\}_{j=1}^{m_k}$ are defined as in Section \ref{sec:stability}.

Theorem \ref{thm:convergence} is an immediate consequence of Theorem~\ref{thm:stability} (stability) above and Proposition~\ref{prop:convergence_exact} below; the latter states that if we are given perfect metric data and the exact density, then our scalar curvature estimate $\hat{S}(x)$ converges to $S(x)$ as the number of points increases. The challenge is that we must take $\rmaxk \to 0$ for equation~\eqref{eq:scalar_ball} to hold, but (as we show in Proposition~\ref{prop:MSE} below) the mean squared error of the ball-ratio estimate $\yhat{d_X}{\rho}{x}{r}$ grows as $\mathcal{O}(1/(Nr^n))$ as $r \to 0$, where $N$ is the number of points in the point cloud.

\begin{proposition}\label{prop:MSE}
Let $X$ be a point cloud that consists of $N$ points that are drawn from the pdf $\rho\colon M \to \mathbb{R}_+$. Then there is a constant $A > 0$ that only depends on $\rho$ and the Riemannian metric of $M$ such that
\begin{equation*}
    \text{MSE}(\yhat{d_X}{\rho}{x}{r}) = \var(\yhat{d_X}{\rho}{x}{r}) \leq \frac{A}{Nr^n}
\end{equation*}
for sufficiently large $N$, sufficiently small $r$, and all $x \in X$.
\end{proposition}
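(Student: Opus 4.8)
The plan is to reduce the claim to two facts already established: the unbiasedness of the volume estimator (Lemma~\ref{lem:E_volhat}) and the variance bound for it (Lemma~\ref{lem:var_volhat_rate}). Since $\hat n = n$ by assumption, the ratio estimator $\yhat{d_X}{\rho}{x}{r} = \volhat{d_X}{\rho}{x}{r}/(v_n r^n)$ is just a deterministic rescaling of $\volhat{d_X}{\rho}{x}{r}$ by the positive constant $1/(v_n r^n)$. Hence, by Lemma~\ref{lem:E_volhat}, $\E\big[\yhat{d_X}{\rho}{x}{r}\big] = \volM{x}{r}/(v_n r^n) = y(x, r)$, so the ratio estimator is unbiased for the true ratio. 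By the bias--variance decomposition, its mean squared error therefore equals its variance, which gives the first equality in the statement.

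Next I would convert the variance bound. Scaling a random variable by a constant $c$ scales its variance by $c^2$, so
\[
\var\big(\yhat{d_X}{\rho}{x}{r}\big) = \frac{1}{v_n^2 r^{2n}}\,\var\big(\volhat{d_X}{\rho}{x}{r}\big).
\]
By Lemma~\ref{lem:var_volhat_rate}, there is a constant $A' > 0$ depending only on $\rho$ and the Riemannian metric of $M$ with $\var\big(\volhat{d_X}{\rho}{x}{r}\big) \leq A' r^n / N$ for all $x \in X$, all sufficiently large $N$, and all sufficiently small $r$. Substituting yields
\[
\var\big(\yhat{d_X}{\rho}{x}{r}\big) \leq \frac{A' r^n}{v_n^2 r^{2n} N} = \frac{A'}{v_n^2 N r^n},
\]
so the proposition holds with $A := A'/v_n^2$, which again depends only on $\rho$ and the Riemannian metric of $M$.

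There is essentially no obstacle here beyond bookkeeping: all of the real content lives in Lemma~\ref{lem:var_volhat_rate}, whose proof already handles the two sources of randomness (the binomial fluctuation of $N(x,r)$ and the fluctuation of the reciprocal densities $1/\rho(z)$ over points in the ball) and invokes the small-ball volume estimate $\volM{x}{r} = \mathcal{O}(r^n)$. The one point to be careful about is to track exactly where ``sufficiently small $r$'' and ``sufficiently large $N$'' are used, so that the final constant $A$ is genuinely uniform over $x \in X$; this uniformity is inherited verbatim from Lemma~\ref{lem:var_volhat_rate}.
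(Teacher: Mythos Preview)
Your proposal is correct and follows essentially the same argument as the paper: invoke Lemma~\ref{lem:E_volhat} to identify MSE with variance, then apply Lemma~\ref{lem:var_volhat_rate} and divide by $v_n^2 r^{2n}$ to obtain the bound with $A = A'/v_n^2$. The paper's proof is slightly terser but the logic and the constants are identical.
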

\begin{proof}
By Lemma \ref{lem:E_volhat},
 \begin{equation*}
     \text{MSE}(\yhat{d_X}{\rho}{x}{r}) = \var(\yhat{d_X}{\rho}{x}{r})\,.
 \end{equation*}
By Lemma \ref{lem:var_volhat_rate}, there is a constant $A' > 0$ such that
\begin{equation*}
    \var(\volhat{d_X}{\rho}{x}{r})  \leq \frac{A'r^n}{N}
\end{equation*}
for sufficiently large $N$, sufficiently small $r$, and all $x \in X$. Therefore,
\begin{equation*}
    \var(\yhat{d_X}{\rho}{x}{r}) = \frac{\var(\volhat{d_X}{\rho}{x}{r}}){v_n^2 r^{2n}} \leq \frac{A'}{v_n^2 Nr^n}
\end{equation*}
for sufficiently large $N$, sufficiently small $r$, and all $x \in X$.	
\end{proof}

\begin{proposition}\label{prop:convergence_exact}
Suppose that the estimated dimension $\hat{n}[\dkexact] = n$ for sufficiently large $k$. If the hyperparameter value sequences satisfy 
\begin{enumerate}
    \item $\Drk/\rmaxk^3 \to 0$ as $k \to \infty$\,,
    \item $|X_k|(\rmink + \Drk)^n \to \infty$ as $k \to \infty$\,, and
    \item $\rmink/\rmaxk^3 \to 0$ as $k \to \infty$\,,
\end{enumerate}
then $|\hat{S}[\dkexact, \rho, \hat{n}](x_k) \to S(x_k)| \to 0$ as $k \to \infty$, where $\{x_k\}$ is any sequence of points such that $x_k \in X_k$.
\end{proposition}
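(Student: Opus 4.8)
The plan is to decompose the error $|\hat{S}[\dkexact,\rho,\hat n](x_k) - S(x_k)|$ into a bias term, coming from the fact that $1 + C(x)r^2$ is only an approximation to the true ball-volume ratio $y(x,r)$ on $[\rmink,\rmaxk]$, and a statistical/discretization term, coming from the difference between $\hat C[\dkexact,\rho,\hat n](x_k)$ and the exact integral-defined coefficient $C(x_k)$. Since $\hat S = -6(\hat n+2)\hat C$ and $S(x) = -6(n+2)C_{\mathrm{true}}(x)$ in the $r\to 0$ limit, and $\hat n = n$ for large $k$, it suffices to control (a) $|C(x_k) - \tfrac{-S(x_k)}{6(n+2)}|$ and (b) $|\hat C[\dkexact,\rho,\hat n](x_k) - C(x_k)|$ in probability.

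For the bias term (a): by equation~\eqref{eq:scalar_ball}, $y(x,r) = 1 - \tfrac{S(x)}{6(n+2)}r^2 + \mathcal{O}(r^4)$ uniformly in $x$ (using compactness of $M$ to get a uniform $\mathcal{O}(r^4)$ constant). Plugging this into the closed form $C(x) = \tfrac{\int_{\rmink}^{\rmaxk} r^2[y(x,r)-1]\,dr}{\tfrac15(\rmaxk^5-\rmink^5)}$, the leading term integrates to exactly $\tfrac{-S(x)}{6(n+2)}$ and the $\mathcal{O}(r^4)$ remainder contributes $\mathcal{O}(\rmaxk^2)$ after dividing by the $r^5$-scale denominator; here condition~(3), $\rmink/\rmaxk^3 \to 0$, together with $\rmaxk \to 0$, ensures the denominator is comparable to $\rmaxk^5$ and the whole bias term is $\mathcal{O}(\rmaxk^2) \to 0$. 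For the discretization-plus-noise term (b): I would write $\hat C - C$ as the sum of (i) a Riemann-sum error, replacing $\int_{\rmink}^{\rmaxk} r^2[y(x,r)-1]\,dr$ by $\sum_i r_i^2[y(x,r_i)-1]\Drk$, which by the usual right-endpoint Riemann-sum bound is $\mathcal{O}(\Drk \cdot \sup |\tfrac{d}{dr}(r^2(y-1))| \cdot \rmaxk)$ over the denominator $\approx \rmaxk^5$, i.e. $\mathcal{O}(\Drk/\rmaxk^3)$, which $\to 0$ by condition~(1); and (ii) the estimation error $\sum_i r_i^2(\yhat{\dkexact}{\rho}{x}{r_i} - y(x,r_i))\Drk$ over the same denominator. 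For (ii) I would bound its variance using Proposition~\ref{prop:MSE}: each term has variance $\mathcal{O}(1/(|X_k|r_i^n))$, and after Cauchy--Schwarz (or directly summing, handling possible correlations between overlapping balls by the crude triangle-inequality bound $\cov \le \sqrt{\var \cdot \var}$) the total variance is controlled by something like $\Big(\sum_i r_i^2 \Drk\, |X_k|^{-1/2} r_i^{-n/2}\Big)^2 / \rmaxk^{10}$; bounding $r_i \ge \rmink + \Drk$ below and $r_i \le \rmaxk$ above and using $m_k\Drk = \rmaxk - \rmink \le \rmaxk$, this is $\mathcal{O}\big(\rmaxk^{-6}|X_k|^{-1}(\rmink+\Drk)^{-n}\big)$, which $\to 0$ by condition~(2) once one checks $\rmaxk^6$ stays bounded below suitably — actually since $\rmaxk\to 0$ this needs the condition to absorb $\rmaxk^{-6}$, so more care is needed and one likely uses that $\rmink/\rmaxk^3\to0$ forces $\rmaxk$ not too small relative to $\rmink$; I would track the exact exponent bookkeeping here rather than hand-wave it. Then Chebyshev converts the variance bound into convergence in probability.

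The main obstacle I anticipate is part (ii): getting the variance of the weighted sum of ball-ratio estimators to go to zero requires carefully balancing three competing scales — the radial step $\Drk$, the minimal radius $\rmink + \Drk$ (which controls the per-term variance blowup from Proposition~\ref{prop:MSE}), and the maximal radius $\rmaxk$ (which appears through the denominator $\tfrac15(\rmaxk^5 - \rmink^5)$ and through the $r_i^2$ weights). The correlations between $\yhat{\dkexact}{\rho}{x}{r_i}$ for different $i$ (the balls are nested, so these are highly dependent) must be handled, most simply via the bound $\var(\sum_i a_i Z_i) \le (\sum_i |a_i|\sqrt{\var Z_i})^2$. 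Assembling these into exactly the hypotheses (1)–(3) as stated is the delicate part; conditions (1) and (3) are clearly designed to kill the bias and Riemann-sum errors, while (2), phrased as $|X_k|(\rmink+\Drk)^n \to \infty$, is what makes the statistical term vanish, and I would organize the proof so that each hypothesis is "used up" by exactly one error source.
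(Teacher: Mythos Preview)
Your proposal is correct and follows essentially the same route as the paper: the bias term (a) is handled exactly as the paper does (yielding $\mathcal{O}(\rmaxk^2)$), and term (b) is precisely the content of Lemma~\ref{lem:Cconverge}, which splits into the same Riemann-sum error and statistical error you describe, with the latter controlled via Chebyshev, the covariance inequality $\var(\sum a_i Z_i)\le (\sum |a_i|\sqrt{\var Z_i})^2$, and Proposition~\ref{prop:MSE}.

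One correction to your expectation: the hypotheses are \emph{not} each ``used up'' by a single error source. Your crude variance bound $\mathcal{O}(\rmaxk^{-6}|X_k|^{-1}(\rmink+\Drk)^{-n})$ (actually $\rmaxk^{-4}$ after arithmetic) does not go to zero from condition~(2) alone, as you suspected. The paper resolves this by splitting on whether the exponent $2-n/2$ in $\sum_j r_j^{2-n/2}$ is nonnegative (Case~1: $2\le n\le 4$, bound by $\rmaxk^{2-n/2}$) or negative (Case~2: $n>4$, bound by $(\rmink+\Drk)^{2-n/2}$); in both cases the resulting expression is rewritten as a product of a factor controlled by condition~(2) and a factor of the form $\big((\rmink+\Drk)/\rmaxk^3\big)^\alpha$ or $\big((\rmink+\Drk)/\rmaxk^2\big)^\alpha$, which requires conditions~(1) and~(3) together. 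So all three hypotheses enter the variance estimate, not just~(2).
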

\begin{proof}
Let $x$ be any point in $X_k$. By Eq. \eqref{eq:scalar_ball}, we have
	\begin{equation*}
		C(x) = \frac{\int_{\rmink}^{\rmaxk} \Big[- \frac{S(x)}{6(n+2)} r^4 + \mathcal{O}(r^6)\Big]dr}{\fracc{5}(\rmaxk^5 - \rmink^5)} = - \frac{S(x)}{6(n+2)} + \mathcal{O}(\rmaxk^2)\,.
	\end{equation*}
The absolute difference $\vert \hat{S}[\dkexact, \rho](x) - S(x) \vert$ is
\begin{equation*}
	\vert \hat{S}[\dkexact, \rho](x) - S(x) \vert = 6(n+2)\vert \hat{C}[\dkexact, \rho](x) - C(x)\vert + \mathcal{O}(\rmaxk^2) \,.
\end{equation*}
 Applying Lemma \ref{lem:Cconverge}, which controls $\vert \hat{C}[\dkexact, \rho](x) - C(x)\vert$, yields the desired result.
\end{proof}

Theorem~\ref{thm:convergence} now follows from Theorem~\ref{thm:stability} and Proposition~\ref{prop:convergence_exact}.

\begin{theorem}\label{thm:convergence}
For each $k$, suppose that $\dkapprox$ is a metric on $X_k$ such that
\begin{equation*}
    \delta_k := \max_{x, x' \in X_k} \vert \dkapprox(x, x') - d(x, x') \vert \to 0 \qquad \text{as } k \to \infty\,.
\end{equation*}
Suppose that $\hat{\rho}$ is a density estimator such that
\begin{equation*}
    \eta_k := \max_{x \in X_k} \Big\vert \hat{\rho}[\dkapprox](x) - \rho(x) \Big\vert \to 0 \qquad \text{as } k \to \infty\,.
\end{equation*}
Suppose that $\hat{n}[\dkapprox] = \hat{n}[\dkexact] = n$ for sufficiently large $k$.
If the hyperparameter value sequences satisfy
\begin{enumerate}
    \item $\max_j \frac{A(2r_j)}{r_j^n \rmaxk^2} \to 0$ as $k \to \infty$ (where $A(r)$ is defined in equation~\eqref{eq:Adef})\,,
    \item $\eta_k/(\rmink + \Drk)^{n+2/3} \to 0$ as $k \to \infty$\,,
    \item $\rmink + \Drk > \delta_k$ for sufficiently large $k$\,,
    \item $|X_k|\Drk(\rmink + \Drk - \delta_k)^n \to \infty$ as $k \to \infty$\,,
    \item $\rmink/\rmaxk^3 \to 0$ as $k \to \infty$\,,
    \item $(\Drk + \delta_k)/\rmaxk^3 \to 0$ as $k \to \infty$, and
    \item $(\Drk + \delta_k)/[(\rmink + \Drk - \delta_k)^{n+1}\rmaxk^2] \to 0$ as $k \to \infty$
\end{enumerate}
then $\vert \hat{S}[\dkapprox, \hat{\rho}, \hat{n}](x_k) -  S(x_k) \vert \to 0$ in probability as $k \to \infty$, where $\{x_k\}$ is any sequence of points such that $x_k \in X_k$.
\end{theorem}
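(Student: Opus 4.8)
The plan is to obtain Theorem~\ref{thm:convergence} as a corollary of the two results already in place: Theorem~\ref{thm:stability} and Proposition~\ref{prop:convergence_exact}. The key device is the triangle inequality
\begin{equation*}
\bigl|\hat{S}[\dkapprox, \hat{\rho}, \hat{n}](x_k) - S(x_k)\bigr| \leq \bigl|\hat{S}[\dkapprox, \hat{\rho}, \hat{n}](x_k) - \hat{S}[\dkexact, \rho, \hat{n}](x_k)\bigr| + \bigl|\hat{S}[\dkexact, \rho, \hat{n}](x_k) - S(x_k)\bigr|\,,
\end{equation*}
so that it suffices to show each summand tends to $0$ in probability and then use that a sum of two sequences of random variables, each converging to $0$ in probability, also converges to $0$ in probability.

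For the first summand I would apply Theorem~\ref{thm:stability} directly. Its hypotheses are precisely what is assumed here: $\delta_k \to 0$, $\eta_k \to 0$, $\hat{n}[\dkapprox] = \hat{n}[\dkexact] = n$ for large $k$, and hyperparameter conditions (1)--(7), which are verbatim conditions (1)--(7) of Theorem~\ref{thm:convergence}. Hence the first summand converges to $0$ in probability.

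For the second summand I would invoke Proposition~\ref{prop:convergence_exact}, which needs $\hat{n}[\dkexact] = n$ eventually together with its three conditions: $\Drk/\rmaxk^3 \to 0$, $|X_k|(\rmink + \Drk)^n \to \infty$, and $\rmink/\rmaxk^3 \to 0$. The first follows from condition (6) of Theorem~\ref{thm:convergence} because $0 \leq \Drk \leq \Drk + \delta_k$; the third is condition (5) verbatim. For the middle one, condition (3) gives $0 \leq \rmink + \Drk - \delta_k \leq \rmink + \Drk$, and $\Drk \to 0$ gives $\Drk \leq 1$ for large $k$, so $\Drk(\rmink + \Drk - \delta_k)^n \leq (\rmink + \Drk)^n$ eventually; hence condition (4) of Theorem~\ref{thm:convergence} forces $|X_k|(\rmink + \Drk)^n \to \infty$. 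Thus Proposition~\ref{prop:convergence_exact} applies and the second summand converges to $0$.

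There is no substantive obstacle here; the real content sits in Theorem~\ref{thm:stability} (which rests on Lemma~\ref{lem:Cstability}) and in Proposition~\ref{prop:convergence_exact} (which rests on Proposition~\ref{prop:MSE} and Lemma~\ref{lem:Cconverge}). The only place demanding a little care is the hyperparameter bookkeeping sketched above — confirming that the deliberately redundant conditions of Theorem~\ref{thm:convergence} imply those of Proposition~\ref{prop:convergence_exact}, especially that condition (4) with its $(\cdot - \delta_k)$ correction still dominates the growth condition $|X_k|(\rmink + \Drk)^n \to \infty$ that controls the variance of the ball-ratio estimator.
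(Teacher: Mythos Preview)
Your proposal is correct and mirrors the paper's own argument exactly: the paper states that Theorem~\ref{thm:convergence} follows from Theorem~\ref{thm:stability} and Proposition~\ref{prop:convergence_exact}, which is precisely your triangle-inequality decomposition. The hyperparameter bookkeeping you supply (deriving the three conditions of Proposition~\ref{prop:convergence_exact} from conditions (4)--(6) of Theorem~\ref{thm:convergence}) is more explicit than what the paper writes, and it is sound.
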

\begin{remark}
The simpler set of conditions from Remark \ref{rmk:simple} collectively implies the conditions of Theorem \ref{thm:convergence}.
\end{remark}


\section{Numerical Experiments}\label{sec:experiments}

\subsection{Data sets}

We generate synthetic data by sampling uniformly at random from manifolds with known scalar curvature.

First, we sample $N = 10^4$ points each from three constant-curvature surfaces:
\begin{enumerate}
    \item A disk in the Euclidean plane with radius $2$. The scalar curvature is $S(x) \equiv 0$.
    \item A unit $2$-sphere. The scalar curvature is $S(x) \equiv 2$.
    \item A disk in the hyperbolic plane with hyperbolic radius $2$. The scalar curvature is $S(x) \equiv -2$\,.
\end{enumerate}
For the last of these, we use the Poincar\'e disk model. Notably, the points that we sample from the hyperbolic plane are not embedded in Euclidean space, which means that it is not possible to use the scalar-curvature estimation method of \cite{pnas}. To avoid boundary effects, we only estimate curvature at points within the unit disk in the Euclidean sample and within hyperbolic radius $1$ in the hyperbolic sample. Additionally, we sample point clouds from $S^2$ with noise. For $\sigma \in \{.001, .003, .01, .03\}$, we sample $N = 10^4$ points from $S^2$ and add isotropic Gaussian noise with standard deviation $\sigma$.

Next, we sample point clouds from several other manifolds. We sample $N = 10^4$ points each from the higher-dimensional unit spheres $S^n$ for $n = 3, 5,$ and $7$. Lastly, we sample one point cloud each from two surfaces with non-constant scalar curvature:

\begin{enumerate}
    \item A $2$-torus. We sample $N = 10^4$ points from a $2$-torus with parameters $r = 1$, $R = 2$.
    \item A one-sheet hyperboloid. The points $(x, y, z) \in \bR^3$ are given by the equations
    \begin{align*}
        x &= 2\sqrt{1 + u^2} \cos(\theta)\,, \\
        y &= 2\sqrt{1 + u^2} \sin(\theta)\,,\\
        z &= u
    \end{align*}
    for $u \in \bR$ and $\theta \in [0, 2\pi)$. We sample points uniformly at random from the subset of the hyperboloid such that $|z| \leq 2$ until we have $N = 10^4$ points within the subset such that $|z| \leq 1$. To avoid boundary effects, we only estimate curvature at points on the hyperboloid such that $|z| \leq 1$.
\end{enumerate}

\subsection{Dimension estimation}\label{sec:dim_est}
To estimate dimension, we use the maximum-likelihood method of Levina and Bickel \cite{bickel}. Our estimate of the dimension of a point cloud $X$ is the nearest integer $\hat{n}$ to
\begin{equation*}
    \fracc{k_2 - k_1 + 1} \sum_{k = k_1}^{k_2} \hat{n}_k\,,
\end{equation*}
where $k_1$ and $k_2$ are hyperparameters and
\begin{align*}
    \hat{n}_k &:= \fracc{N} \sum_{i=1}^N \hat{n}_k(x_i)\,,\\
    \hat{n}_k(x_i) &:= \Big[ \fracc{k-1}\sum_{j=1}^{k-1} \log \Big( \frac{T_k(x_i)}{T_j(x_i)}\Big) \Big]^{-1}\,,
\end{align*}
where $T_j(x_i)$ is the distance from $x_i$ to its $j$th nearest neighbor in $X$. For all data sets, we set $k_1 = 20$ and calculate $\hat{n}$ for $k_2 \in \{30, \ldots, 100\}$. We obtain $\hat{n} = n$, where $n$ is the ground-truth dimension, for all data sets and all choices of $k_2$.

We make one modification to \cite{bickel}, which is that instead of using Euclidean distance to measure distances to nearest neighbors, as was done in \cite{bickel}, we use geodesic distance.\footnote{In cases where we can calculate both exact and estimated geodesic distances, we use both; otherwise, we use whichever is available. For $S^2$, $S^3$, $S^5$, $S^7$, and the Euclidean disk, we possess both exact and estimated geodesic distances. For the Poincar\'e disk, we have only exact geodesic distances. For all other data sets, we have only estimated geodesic distances.} This choice reduces overall computation time because computing geodesic nearest-neighbor distances is also part of our scalar-curvature estimation pipeline. In addition, using geodesic distance improves the accuracy of the approximations that were made in \cite{bickel} and allows us to estimate the dimension of our Poincar\'e-disk data, which is not embedded in Euclidean space.

\subsection{Density estimation}

We use kernel density estimation to obtain pointwise estimates of density, using the dimension estimates obtained in Section \ref{sec:dim_est}. We test two choices of kernel: (1) a Gaussian kernel because it is a very common choice for density estimation and (2) a  biweight kernel because it is compactly supported. As input, the kernel function takes geodesic distances (either exact or estimated), rather than Euclidean distances.

\subsection{Geodesic-distance estimation}

On the spheres, the Euclidean disk, the torus, and the hyperboloid, we estimate pairwise geodesic distances using the method of Tenenbaum et al. \cite{isomap, graph_approx}. For each point cloud, we construct the $k$-nearest neighbor graph $G$ with $k = 20$ for $n = 2$, with $k = 50$ for $n = 3$, with $k = 100$ for $n = 5$, and with $k = 200$ for $n = 7$. Edge weights are Euclidean distances. Our estimation of the geodesic distance between points $x_1$ and $x_2$ is the length of a shortest weighted path in $G$.

\subsection{Hyperparameter choices}
Our method requires a choice of minimum ball radius $\rmin$, maximum ball radius $\rmax$, and radius sequence $\{r_j\}_{j = 0}^m$ such that $r_0 = \rmin$ and $r_m = \rmax$. For a given point $x$ in a data set, we set $r_i$ equal to the distance from $x$ to its $i$th nearest neighbor (as measured by the given distance matrix $d_X$), for the subset of neighbors such that $\rmin \leq r_i \leq \rmax$. We set $\rmin = 0$ for all data sets.

Our choice of $\rmax$ differs across data sets because the scales and sampling densities are different in different data sets. For the spheres (including the point clouds with noise), we set $\rmax = \pi/2$. For the Euclidean and Poincar\'e disks, we set $\rmax = 1$. For the torus, we set $\rmax = \pi$. For the hyperboloid, we set $\rmax = 2$. These values were chosen to minimize the amount of noise in our curvature estimation results and to ensure that our geodesic balls $\BM{x}{r}$ do not intersect the boundary of the manifold $M$.

\subsection{Results}\label{sec:results}

First, we apply our method to our constant-curvature data sets. For the two surfaces that are embedded into Euclidean space ($S^2$ and the Euclidean disk), we test our method in two different ways. First, we use the exact geodesic distances for our distance matrix. Second, we estimate geodesic distances from the point clouds. In Figure \ref{fig:constant}, we show our results.

\begin{figure}
    \centering
    \subfloat[]{\includegraphics[width = .45\textwidth]{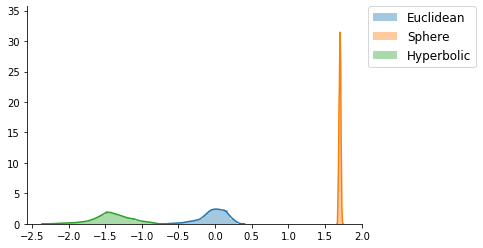}}
    \hspace{5mm}
    \subfloat[]{\includegraphics[width = .45\textwidth]{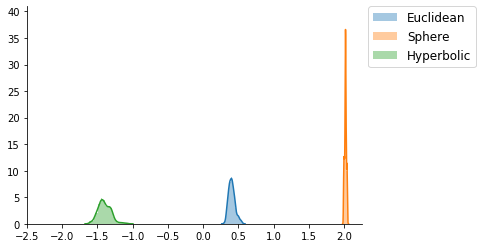}} \\
    \subfloat[]{\includegraphics[width = .45\textwidth]{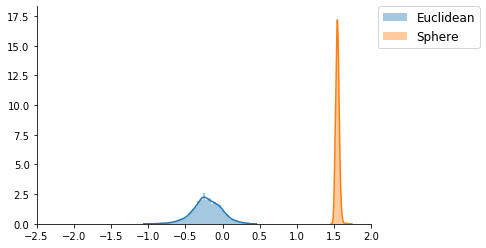}}
    \hspace{5mm}
    \subfloat[]{\includegraphics[width = .45\textwidth]{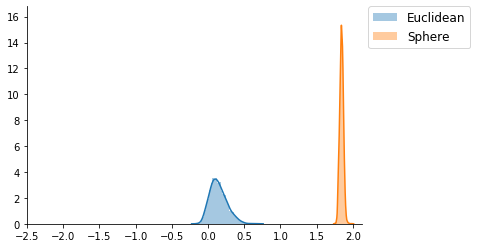}}
    \caption{Histograms for our scalar-curvature estimates on three surfaces of constant curvature, given (A--B) exact geodesic distances and (C--D) point clouds, from which geodesic distances were estimated. In (A) and (C), we use a Gaussian kernel to estimate density, and in (B) and (D), we use a biweight kernel to estimate density. The ground-truth scalar curvatures values are $-2$ in the hyperbolic disk, $0$ in the Euclidean disk, and $2$ on the sphere.  (Note that we only have exact distances on the hyperbolic disk.)}
    \label{fig:constant}
\end{figure}

We next test our method on the point clouds that are sampled from higher-dimensional spheres. Again, we test our method in two scenarios: (1) given as input exact geodesic distances and (2) using estimated geodesic distances from the point clouds. In early experiments, we found that on the highest-dimensional spheres ($n \geq 5$), using a biweight kernel to estimate density led to significantly better performance than using a Gaussian kernel, so we use a biweight kernel for density estimation. In Figure \ref{fig:spheres}, we show our results. Unexpectedly, we find in Figure \ref{fig:spheres}(A) that scalar curvature is systematically underestimated (although still reasonably accurate) when we start with the exact geodesic distances. In both experiments, the accuracy of our estimates decreases as the dimension $n$ increases, but the performance is comparable to that in \cite{pnas}. The main reason that scalar curvature is more difficult for us to estimate in higher dimensions is that the mean squared error in our ball-ratio estimates increases exponentially in $n$ (see Proposition~\ref{prop:MSE}). Another reason is that the accuracy of geodesic-distance estimation decreases as $n$ increases and $N$ stays constant. Typically, the number of points $N$ must scale exponentially with $n$ to maintain the same ``resolution'' of the manifold, so it is unsurprising that our scalar curvature estimates become less accurate as $n$ increases for fixed $N$.
\begin{figure}
    \centering
    \subfloat[]{\includegraphics[width = .45\textwidth]{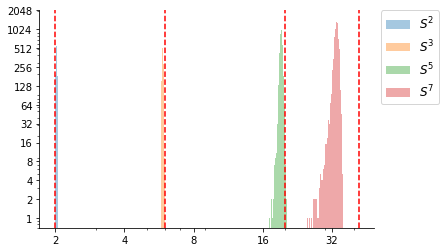}}
    \hspace{5mm}
    \subfloat[]{\includegraphics[width = .45\textwidth]{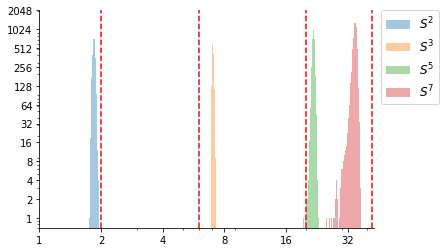}}
    \caption{Histograms for our scalar-curvature estimates on $S^n$ (for $n = 2, 3, 5, 7$) using (A) exact geodesic distances and (B) point clouds, from which geodesic distances were estimated. In (A) and (B), the histograms are plotted on a log--log scale. The ground-truth scalar curvature, which is indicated by the red dashed lines, is $S(x) \equiv n (n-1)$ for each $n$ and all $x \in S^n$.}\label{fig:spheres}
\end{figure}

To test our method on manifolds with non-constant scalar curvature, we apply our scalar-curvature estimator to our torus and hyperboloid data sets. On both surfaces, we find that using a Gaussian kernel for density estimation yields more accurate curvature estimates, so we use a Gaussian kernel. We show our results in Figure \ref{fig:nonconstant_surfaces}. On the torus, our estimator correctly distinguishes between regions of positive, negative, and zero scalar curvature. The estimates are accurate except near $\theta = \pi$, where scalar curvature is minimized. On the hyperboloid, our estimator correctly identifies the fact that scalar curvature is minimized (and negative) near $z = 0$ and increases as $z$ increases. As in the torus, the estimates are accurate except near $z = 0$, where scalar curvature is minimized.

\begin{figure}
    \centering
    \subfloat[]{\includegraphics[width = .4\textwidth]{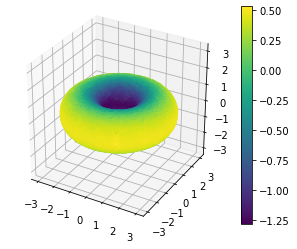}}
    \hspace{10mm}
    \subfloat[]{\includegraphics[width = .4\textwidth]{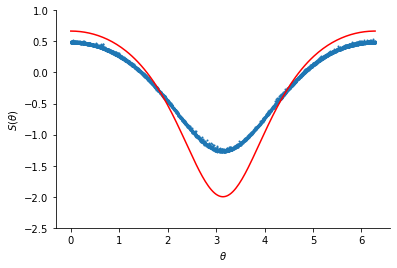}}
    \\
    \subfloat[]{\includegraphics[width = .4\textwidth]{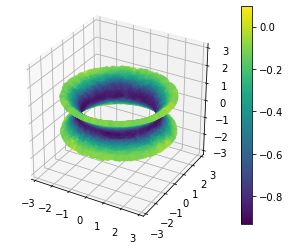}}
    \hspace{10mm}
    \subfloat[]{\includegraphics[width = .4\textwidth]{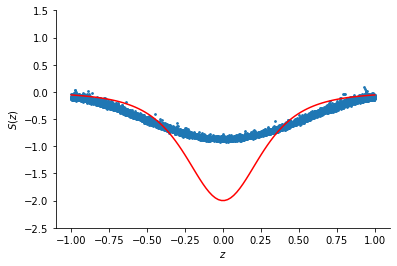}}
    \caption{(A) Scalar-curvature estimation on a torus. (B) Scalar curvature on the torus as a function of angle $\theta$. In red, we show the exact scalar curvature values; in blue, we show the estimated scalar curvature values.
    (C) Scalar-curvature estimation on a one-sheet hyperboloid. (D) Scalar curvature on the hyperboloid as a function of the $z$ coordinate. In red, we show the exact scalar curvature values; in blue, we show the estimated scalar curvature values.
    }
    \label{fig:nonconstant_surfaces}
\end{figure}

We investigate the stability of our estimator by estimating curvature on our noisy-sphere data sets. We show our results in Figure \ref{fig:noise}. In Figures \ref{fig:noise}(A) and (B), we show our results when we use Gaussian and biweight kernels, respectively, for density estimation and we input the estimated geodesic distances to the kernel. At the highest noise level (standard deviation $\sigma = .03$), our scalar curvature estimates have the wrong sign when we use a biweight kernel, but all other curvature estimates have the correct sign. In Figures \ref{fig:noise}(C) and (D), we test our estimator by inputting Euclidean distances to the kernel for density estimation. We find that performance is significantly improved, especially at the highest noise level ($\sigma = .03$). This suggests that if a point cloud has a high noise level, then one should input the Euclidean distances to the kernel instead of inputting the estimated geodesic distances, which may not be accurate enough.

\begin{figure}
    \centering
    \subfloat[]{\includegraphics[width = .45\textwidth]{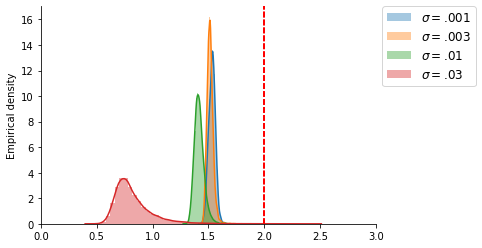}}
    \hspace{5mm}
    \subfloat[]{\includegraphics[width = .45\textwidth]{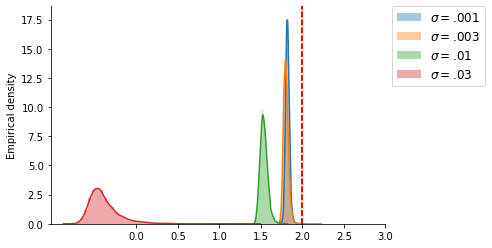}}\\
    \subfloat[]{\includegraphics[width = .45\textwidth]{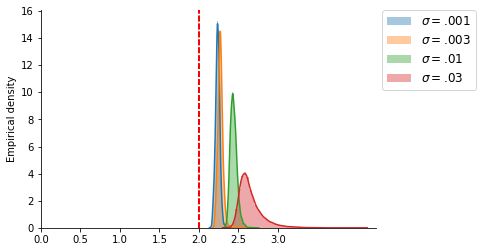}}
    \hspace{5mm}
    \subfloat[]{\includegraphics[width = .45\textwidth]{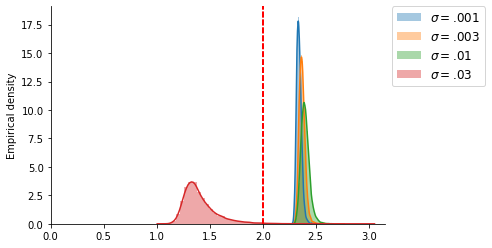}}
    \caption{Scalar-curvature estimation on $S^2$ with isotropic Gaussian noise (standard deviation $\sigma$) added to the point cloud. (A) We use a Gaussian kernel for density estimation. The kernel takes the estimated geodesic distances as input. (B) We use a biweight kernel that takes estimated geodesic distances as input. (C) We use a Gaussian kernel that takes Euclidean distances as input. (D) We use a biweight kernel that takes Euclidean distances as input.}
    \label{fig:noise}
\end{figure}


\section{Conclusions}

In this paper, we described a new method to estimate scalar curvature in discrete data. The only information that our approach requires is the set of pairwise distances between the points. By contrast, prior methods were limited to surfaces in $\bR^3$ or to point clouds embedded in Euclidean space. Because our method depends only on metric data, one can use it to estimate curvature not only in point-cloud data (from which geodesic distances can be estimated using the approach in~\cite{isomap, graph_approx}, for example), but also at vertices in a graph that is equipped with the shortest-path metric or at finite samples from an arbitrary metric space (e.g., the Billera-Holmes-Vogtmann space of phylogenetic trees~\cite{BHV}). We proved that under suitable conditions, our estimator is stable (Theorem~\ref{thm:stability}) and that it converges to the ground-truth scalar curvature (Theorem~\ref{thm:convergence}). 

We validated our method on several synthetic data sets in Section \ref{sec:experiments}. Notably, our experiments included a data set (a point cloud that is sampled from the Poincar\'e disk) for which we possessed only the pairwise exact geodesic distances, not an embedding of the points in Euclidean space. Our experiments on point-cloud data embedded in Euclidean space are equivalent to experiments on \emph{geometric graphs}, which are graphs in which vertices are sampled from a manifold, edges connect nearby points, and edge weights are given by distances. This is because we estimated geodesic distance in our point clouds by constructing a nearest-neighbor graph (which is a type of geometric graph) and computing shortest-path lengths. Therefore, our method for scalar-curvature estimation on a point cloud is equivalent to scalar-curvature estimation on the nearest-neighbor graph equipped with the shortest-path metric. Our experiments show that one can achieve reasonable accuracy even without having or using a Euclidean embedding of the data.

The primary limitation of our estimator is that it can be inaccurate on regions with non-constant curvature, especially near points on a manifold where a local extremum in the curvature is attained. (For example, see our experiments on the torus and hyperboloid in Section \ref{sec:experiments}.) The reason is that when the radius $r$ is small, we cannot reliably estimate the ratio between $\volM{x}{r}$ and the volume of a Euclidean ball of radius $r$ (see Proposition~\ref{prop:MSE}). We addressed this by using a relatively high $\rmax$ parameter, which controls the maximum ball radius that we consider. However, requiring $r$ to be relatively large has the drawback that we are unable to detect local variation in scalar curvature; we are effectively smoothing out the curvature. In future work, we plan to investigate strategies to increase the accuracy of our method on manifolds with non-constant scalar curvature.

We expect that our scalar-curvature estimator will improve with improvements in state-of-the-art methods for density and geodesic-distance estimation on manifolds. Our method involves density estimation  on a manifold as an intermediary step, and it also requires geodesic-distance estimation when we are given a point cloud embedded in Euclidean space instead of a distance matrix $d_X$. There are several other methods for geodesic-distance estimation that we did not use in our experiments; see \cite{arvan, spherelets}, for example. Improvements to the intermediary steps of our pipeline will lead to better performance of our scalar-curvature estimator.

It would also be interesting to incorporate machine learning into our curvature-estimation pipeline. For example, at each point, we estimate a sequence of ball-volume ratios (see Eq. \eqref{eq:est_ratio}); this is a vector that one can feed into a neural network, rather than using the method in Section \ref{sec:quad} for estimating a quadratic coefficient. One could also use a graph neural network in which the graph is the nearest-neighbor graph for the data set and the initial node features are the vectors of ball-volume ratio estimates. Using machine learning would allow one to sidestep the choices of hyperparameters (the maximum ball radius $\rmax$, the minimum ball radius $\rmin$, and the radius sequence $\{r_j\}$), although those decisions would be replaced by different hyperparameter choices (e.g., a choice of learning rate). However, our current approach has the advantage that it is highly interpretable. We have designed our method so that, at minimum, one can reliably trust that the scalar curvature sign is accurate---in many cases, the sign of the curvature is the qualitative information that matters most---and that our method will generalize to manifolds that are not present in the training data set.


\section*{Appendix}
\renewcommand{\thesection}{A}
Here we prove some technical lemmas for proving our stability theorem (Theorem \ref{thm:stability}) and convergence theorem (Theorem \ref{thm:convergence}). The notation that we use is the same as in Sections \ref{sec:stability} and \ref{sec:convergence}.

\begin{lemma}\label{lem:small_ball_volume}
If $M$ is compact, then there are positive constants $B^{(1)}$ and $B^{(2)}$ such that 
\begin{equation}\label{eq:ball_bound}
    B^{(1)}r^n \leq \volM{x}{r} \leq B^{(2)}r^n
\end{equation}
for sufficiently small $r$ and all $x$ in $M$.
\end{lemma}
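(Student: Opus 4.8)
The plan is to prove the two-sided bound by a compactness argument that reduces the statement to a local estimate in geodesic normal coordinates. The key fact is that in normal coordinates centered at $x$, the Riemannian volume form has the expansion $dV = (1 + O(|u|^2))\,du^1 \cdots du^n$, where the implied constant depends on a bound for the curvature tensor. Since $M$ is compact, the curvature tensor and all its relevant derivatives are uniformly bounded, so there is a single constant controlling this expansion uniformly over all $x \in M$.

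First I would invoke compactness to fix a uniform radius: because $M$ is compact, the injectivity radius $\iota(M) = \inf_{x \in M} \iota(x)$ is positive, so for every $r < \iota(M)$ and every $x \in M$ the geodesic ball $\BM{x}{r}$ is the diffeomorphic image under $\exp_x$ of the Euclidean ball of radius $r$ in $T_x M$. Next I would write $\volM{x}{r} = \int_{|u| \leq r} \sqrt{|g(u)|}\, du$ in normal coordinates, where $g(u)$ is the metric in those coordinates. Using the standard expansion $\sqrt{|g(u)|} = 1 - \tfrac{1}{6}\mathrm{Ric}_x(u,u) + O(|u|^3)$ together with uniform bounds on $\mathrm{Ric}$ and its covariant derivatives (valid by compactness), there is a constant $c > 0$ and a radius $r_0 > 0$, both independent of $x$, such that $\tfrac{1}{2} \leq \sqrt{|g(u)|} \leq 2$ for all $|u| \leq r_0$. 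Integrating over $|u| \leq r$ then gives $\tfrac{1}{2} v_n r^n \leq \volM{x}{r} \leq 2 v_n r^n$ for all $r \leq \min(r_0, \iota(M))$ and all $x \in M$, which is the claim with $B^{(1)} = v_n/2$ and $B^{(2)} = 2 v_n$.

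The only mild obstacle is making the uniformity genuinely uniform in $x$: one must argue that the error term in the volume-form expansion is controlled by a constant that does not degenerate as $x$ ranges over $M$. This follows because the function $(x, u) \mapsto \sqrt{|g_x(u)|}$ is continuous (indeed smooth) on the compact set $\{(x, u) : x \in M,\ |u| \leq r_0\}$ and equals $1$ when $u = 0$; hence it is uniformly continuous, and one can choose $r_0$ small enough that it stays within $[\tfrac{1}{2}, 2]$ on that whole set. Alternatively, one can phrase this using Bishop--Gromov volume comparison: uniform lower and upper bounds on sectional (or Ricci) curvature on the compact manifold $M$ give $v_{n}^{\kappa_+}(r) \leq \volM{x}{r} \leq v_n^{\kappa_-}(r)$, where $v_n^{\kappa}(r)$ is the volume of a ball of radius $r$ in the model space of constant curvature $\kappa$, and these model volumes are themselves $v_n r^n (1 + O(r^2))$ as $r \to 0$, yielding the stated bound. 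Either route is routine; I would present the normal-coordinates version for concreteness.
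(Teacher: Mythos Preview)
Your proof is correct. The approach differs from the paper's in a useful way: the paper invokes the ball-volume expansion~\eqref{eq:scalar_ball} to obtain pointwise constants $B^{(1)}_x$, $B^{(2)}_x$, $r_x$ for each $x$, then asserts that these may be chosen to depend continuously on $x$ (because $g$ is smooth) and applies compactness to take $\max$/$\min$. You instead work one level below~\eqref{eq:scalar_ball}, bounding the volume density $\sqrt{|g(u)|}$ directly in normal coordinates and integrating; the uniformity in $x$ then comes from a clean uniform-continuity argument on the compact set $\{(x,u): |u|\le r_0\}$ (or, alternatively, from Bishop--Gromov). What your route buys is explicit constants ($B^{(1)}=v_n/2$, $B^{(2)}=2v_n$) and a fully justified uniformity step, at the cost of invoking the injectivity radius and the normal-coordinate expansion of the volume form. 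The paper's route is shorter but its continuity-of-constants step is stated rather than proved; your argument fills that gap cleanly.
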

\begin{proof}
    By equation~\eqref{eq:scalar_ball}, there are positive constants $B^{(1)}_x$, $B^{(2)}_x$, and $r_x$  for each $x \in M$ such that
\begin{equation*}
B^{(1)}_xr^n \leq \volM{x}{r} \leq B^{(2)}_xr^n \qquad \text{for } r < r_x\,.
\end{equation*}
Because the Riemannian metric $g$ is smooth, the quantities $r'_x$, $B^{(1)}_x$, and $B^{(2)}_x$ can be chosen for each $x \in M$ such that the functions $x \mapsto r'_x$, $x \mapsto B^{(1)}_x$, and $x \mapsto B^{(2)}_x$ are continuous. If $M$ is compact, then $B^{(i)} := \max_{x \in M} B^{(i)}_x$ and $r_* := \min_{x \in M} r_x$ exist, so Eq.~\eqref{eq:ball_bound} holds for $r < r_*$ and all $x$ in $M$.
\end{proof}

\begin{lemma}\label{lem:var_volhat}
Let $X$ be a point cloud that consists of $N$ points drawn from pdf $\rho: M \to \mathbb{R}_+$. Then
	\begin{equation*}
		\var(\volhat{d_X}{\rho}{x}{r}) = \frac{\var(1/\rho(z)) \cdot \avgrho{x}{r} \cdot \volM{x}{r}}{\Nothers} + \frac{\var N(x, r)}{(\Nothers)^2 \avgrho{x}{r}^2}\,,
	\end{equation*}
	where $z \in \BM{x}{r}$ is a point chosen randomly from the pdf $\psi(z)$ defined in Eq. \eqref{eq:random_ball_point} and
	\begin{equation*}
		\var(N(x, r)) = (\Nothers)\avgrho{x}{r}\volM{x}{r}(1 - \avgrho{x}{r}\volM{x}{r})\,.
	\end{equation*}
\end{lemma}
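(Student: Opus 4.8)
The plan is to apply the law of total variance, conditioning on the number $N(x, r)$ of sample points that land inside $\BM{x}{r}$. First I would invoke the alternative formula~\eqref{eq:volest_alt}, which writes $\volhat{d_X}{\rho}{x}{r}$ as $\fracc{\Nothers}$ times the sum $\sum_{z \in \BM{x}{r} \cap (X \setminus \{x\})} 1/\rho(z)$. The structural fact that drives everything is that, conditioned on the event $\{N(x, r) = k\}$, the $k$ points of $X \setminus \{x\}$ that lie in $\BM{x}{r}$ are i.i.d.\ with pdf $\psi$ from equation~\eqref{eq:random_ball_point}. Granting this, conditional on $\{N(x, r) = k\}$ the estimator $\volhat{d_X}{\rho}{x}{r}$ has the law of $\fracc{\Nothers}\sum_{i=1}^{k} W_i$, where the $W_i$ are independent copies of $1/\rho(z)$ for $z \sim \psi$.

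From there the conditional moments are immediate. By equation~\eqref{eq:E_1/rho}, $\E[W_i] = \E[1/\rho(z)] = 1/\avgrho{x}{r}$, and since the $W_i$ are independent,
\begin{equation*}
\E\big[\volhat{d_X}{\rho}{x}{r} \,\big|\, N(x, r) = k\big] = \frac{k}{(\Nothers)\avgrho{x}{r}}\,, \qquad \var\big(\volhat{d_X}{\rho}{x}{r} \,\big|\, N(x, r) = k\big) = \frac{k}{(\Nothers)^2}\var(1/\rho(z))\,.
\end{equation*}
Then I would apply the law of total variance,
\begin{equation*}
\var(\volhat{d_X}{\rho}{x}{r}) = \E\big[\var(\volhat{d_X}{\rho}{x}{r} \mid N(x, r))\big] + \var\big(\E[\volhat{d_X}{\rho}{x}{r} \mid N(x, r)]\big)\,.
\end{equation*}
For the first term, substitute the conditional variance and use $\E[N(x, r)] = (\Nothers)\avgrho{x}{r}\volM{x}{r}$, which holds because $N(x, r)$ is $\text{Binomial}(\Nothers,\, \avgrho{x}{r}\volM{x}{r})$ as noted in Section~\ref{sec:mle}; this yields exactly the first summand in the claim. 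For the second term, substitute the conditional mean to get $\var\big(N(x,r)/((\Nothers)\avgrho{x}{r})\big) = \var(N(x,r))/((\Nothers)^2\avgrho{x}{r}^2)$, the second summand. Finally, the stated expression for $\var(N(x, r))$ is just the variance of that binomial random variable.

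The only genuinely delicate point is the conditional i.i.d.\ claim. I would justify it by first conditioning on which index subset $S \subseteq X \setminus \{x\}$ of sample points falls in the ball: on $\{S = \{i_1, \dots, i_k\}\}$ the points $x_{i_1}, \dots, x_{i_k}$ are i.i.d.\ with pdf $\psi$ and independent of the remaining points, so $\sum_{i \in S} 1/\rho(x_i)$ has, conditionally, the law of a sum of $k$ independent $\psi$-draws of $1/\rho$; since this distribution depends on $S$ only through $|S| = k$, the conclusion passes to conditioning on $\{N(x, r) = k\}$. I expect this bookkeeping with the partition into in-ball subsets to be the main (though still routine) obstacle; the rest is the elementary moment computation above combined with the two facts $\E[1/\rho(z)] = 1/\avgrho{x}{r}$ and $N(x, r) \sim \text{Binomial}(\Nothers,\, \avgrho{x}{r}\volM{x}{r})$.
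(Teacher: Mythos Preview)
Your proposal is correct and uses the same ingredients as the paper: conditioning on $N(x,r)$, the conditional i.i.d.\ structure with pdf $\psi$, the identity $\E[1/\rho(z)]=1/\avgrho{x}{r}$, and the binomial law of $N(x,r)$. The only difference is packaging: you invoke the law of total variance directly, whereas the paper expands $\E[(\volhat{d_X}{\rho}{x}{r}-\volM{x}{r})^2\mid N(x,r)=k]$ by hand and then recognizes the pieces; your route is a bit cleaner but not substantively different.
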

\begin{proof}
	By Lemma \ref{lem:E_volhat},
	\begin{equation*}
		\var(\volhat{d_X}{\rho}{x}{r}) = \E\Big[(\volhat{d_X}{\rho}{x}{r}- \volM{x}{r})^2\Big]\,.
	\end{equation*}
By equation~\eqref{eq:E_volhat_condition_Nxr},
	\begin{align}
		&\E\Big[(\volhat{d_X}{\rho}{x}{r} - \volM{x}{r})^2 \Big \vert N(x, r) = k \Big] \notag \\
		&\qquad = \E[\volhat{d_X}{\rho}{x}{r}^2 \mid N(x, r) = k] 
		- \frac{2 k\volM{x}{r}}{(\Nothers)\avgrho{x}{r}} + \volM{x}{r}^2\label{eq:volhat_MSE}
	\end{align}
for all $k \in \{0, \ldots, \Nothers\}$. By equation~\eqref{eq:volest_alt},
	\begin{equation}\label{eq:volhatsquared}
		\E[\volhat{d_X}{\rho}{x}{r}^2 \mid N(x, r) = k]  = \frac{k^2}{(\Nothers)^2} \cdot \E\Big[\Big(\fracc{k} \sum_{i=1}^k 1/\rho(z_i) \Big)^2\Big]\,,
	\end{equation}
	where $\{z_i\}_{i=1}^k = \BM{x}{r} \cap (X \setminus \{x\})$. If $k \geq 1$, the quantity $\fracc{k} \sum_{i=1}^k 1/\rho(z_i)$ is a sample mean. Therefore,
	\begin{equation}\label{eq:samplemeansquared}
		\E\Big[\Big(\fracc{k} \sum_{i=1}^k 1/\rho(z_i) \Big)^2\Big] = \frac{\var(1/\rho(z))}{k} + \E[1/\rho(z)]^2 = \frac{\var(1/\rho(z))}{k} + \fracc{\avgrho{x}{r}^2}\,,
	\end{equation}
	where $z$ is chosen from the pdf $\psi(z)$ defined in equation~\eqref{eq:random_ball_point}. The last equality follows by equation~\eqref{eq:E_1/rho}. Substituting equation~\eqref{eq:samplemeansquared} into equation~\eqref{eq:volhatsquared} and equation~\eqref{eq:volhatsquared} into~\eqref{eq:volhat_MSE}, we obtain
	\begin{align}
		&\E\Big[(\volhat{d_X}{\rho}{x}{r} - \volM{x}{r})^2 \Big \vert N(x, r) = k \Big] \notag \\
		&= \Bigg[ \frac{k}{(\Nothers)^2} \cdot \var\Big(\fracc{\rho(z)}\Big) \notag \\
        &\qquad + \frac{k^2 - 2k(\Nothers) \avgrho{x}{r} \volM{x}{r} + (\Nothers)^2 \avgrho{x}{r}^2 \volM{x}{r}^2}{(\Nothers)^2 \avgrho{x}{r}^2} \Bigg]\label{eq:volhat_var}
	\end{align}
	for $k \in \{1, \ldots, \Nothers\}$. When $k = 0$, equation~\eqref{eq:volhat_var} holds because the righthand side equals $\volM{x}{r}$ and $\E[(\volhat{d_X}{\rho}{x}{r} - \volM{x}{r})^2 \mid N(x, r) = 0] = \volM{x}{r}$.
 
 To simplify the righthand side of equation~\eqref{eq:volhat_var}, we observe that $N(x, r)$ is a binomial random variable with $\Nothers$ trials and success probability $\avgrho{x}{r}\volM{x}{r}$, so
	\begin{equation*}
		\E[N(x, r)] = (\Nothers) \avgrho{x}{r}\volM{x}{r}
	\end{equation*}
	and
	\begin{align*}
		&\E[(N(x, r) - \E N(x, r))^2 \mid N(x, r) = k] \\
		&\qquad = k^2 - 2k(\Nothers)\avgrho{x}{r}\volM{x}{r} + (\Nothers)^2 \avgrho{x}{r}^2\volM{x}{r}\,.
	\end{align*}
	Therefore
	\begin{align*}
		&\E\Big[(\volhat{d_X}{\rho}{x}{r} - \volM{x}{r})^2 \Big \vert N(x, r) = k \Big] \\
		&\qquad = \frac{k}{(\Nothers)^2} \cdot \var(1/\rho(z))
		+ \fracc{(\Nothers)^2 \avgrho{x}{r}^2} \cdot \E[(N(x, r) - \E N(x, r))^2 \mid N(x, r) = k]\,.
	\end{align*}
	Putting it all together, we have that $\var(\volhat{d_X}{\rho}{x}{r})$ equals
	\begin{align*}
		&\E\Big[(\volhat{d_X}{\rho}{x}{r} - \volM{x}{r})^2\Big] \\
		&\qquad = \sum_{k=0}^{\Nothers} \E\Big[(\volhat{d_X}{\rho}{x}{r}- \volM{x}{r})^2 \Big\vert N(x, r) = k\Big]\p[N(x, r) = k]\\
		&\qquad = \Bigg( \frac{\var(1/\rho(z))}{(\Nothers)^2}\sum_{k=0}^{\Nothers} k \cdot \p[N(x, r) = k] \\
		&\qquad\qquad + \fracc{(\Nothers)^2 \avgrho{x}{r}^2} \sum_{k=0}^{\Nothers} \E[(N(x, r) - \E N(x, r))^2 \mid N(x, r) = k] \p[N(x, r) = k] \Bigg)\\
		&\qquad = \frac{\var(1/\rho(z))}{(\Nothers)^2}\E[N(x, r)] + \frac{\var(N(x, r))}{(\Nothers)^2 \avgrho{x}{r}^2} \\
		&\qquad = \frac{\var(1/\rho(z)) \cdot \avgrho{x}{r} \cdot \volM{x}{r}}{(\Nothers)} + \frac{\var (N(x, r))}{(\Nothers)^2 \avgrho{x}{r}^2}\,,
	\end{align*}
where
 \begin{equation*}
     \var(N(x, r)) = (\Nothers)\avgrho{x}{r}\volM{x}{r}(1 - \avgrho{x}{r}\volM{x}{r})
 \end{equation*}
 because $N(x, r)$ is a binomial random variable with parameters $\Nothers$ and $\avgrho{x}{r} \volM{x}{r}$.
\end{proof}

\begin{lemma}\label{lem:yhatminus1}
Assume that $\hat{n}[\dkexact] = n$ for sufficiently large $k$. Let $\{a_k\}_{k=1}^{\infty}$ and $\{b_k\}_{k=1}^{\infty}$ be sequences of positive real numbers such that 
\begin{enumerate}
    \item $ 0< a_k < b_k$ for all $k$, and
    \item $a_k, b_k \to 0$ as $k \to \infty$\,.
\end{enumerate}
For each $k$, let $R_k$ be a finite subset of $[a_k, b_k]$ such that $\frac{|R_k|}{|X_k|a_k^n} \to 0$ as $k \to 0$. Then
\begin{equation*}
   \max_{r \in R_k} \vert \yhat{\dkexact}{\rho}{x_k}{r} - 1 \vert \to 0
\end{equation*}
in probability as $k \to \infty$, where $\{x_k\}$ is any sequence of points such that $x_k \in X_k$.
\end{lemma}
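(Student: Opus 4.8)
The plan is to compare $\yhat{\dkexact}{\rho}{x_k}{r}$ with its expectation, which by Lemma~\ref{lem:E_volhat} is exactly the true ball‑ratio $y(x_k, r) = \volM{x_k}{r}/(v_n r^n)$, and then to use the fact that $y(x_k, r)$ is close to $1$ because every $r \in R_k$ satisfies $r \le b_k \to 0$. For a fixed $\epsilon > 0$ I would split, for each $r \in R_k$,
\begin{equation*}
\vert \yhat{\dkexact}{\rho}{x_k}{r} - 1\vert \;\le\; \vert \yhat{\dkexact}{\rho}{x_k}{r} - y(x_k, r)\vert \;+\; \vert y(x_k, r) - 1\vert\,,
\end{equation*}
and control the two terms separately. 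For the deterministic term, equation~\eqref{eq:scalar_ball} combined with the compactness argument behind Lemma~\ref{lem:small_ball_volume} (smoothness of $g$ makes $S$ and the implied constant in the $\mathcal{O}(r^4)$ remainder bounded over $M$) gives $\sup_{x \in M}\sup_{0 < r \le b_k}\vert y(x,r) - 1\vert \to 0$ as $k \to \infty$; hence for $k$ large this term is below $\epsilon/2$ for every $r \in R_k$ and every realization of $X_k$.

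For the random term I would condition on the position of $x_k$, so that the remaining $|X_k| - 1$ points are i.i.d.\ draws from $\rho$ and Lemma~\ref{lem:E_volhat} and Proposition~\ref{prop:MSE} apply with center $x_k$. These give $\E[\yhat{\dkexact}{\rho}{x_k}{r}] = y(x_k, r)$ and, once $k$ is large enough that $b_k$ is small and $|X_k|$ exceeds the thresholds in those results, $\var(\yhat{\dkexact}{\rho}{x_k}{r}) \le A/(|X_k| r^n) \le A/(|X_k| a_k^n)$, uniformly in the center point and in $r \ge a_k$. A union bound over $R_k$ followed by Chebyshev's inequality then yields
\begin{equation*}
\p\Big[\max_{r \in R_k}\vert \yhat{\dkexact}{\rho}{x_k}{r} - y(x_k, r)\vert > \epsilon/2\Big] \;\le\; \sum_{r \in R_k}\frac{4\,\var(\yhat{\dkexact}{\rho}{x_k}{r})}{\epsilon^2} \;\le\; \frac{4A}{\epsilon^2}\cdot\frac{|R_k|}{|X_k|\,a_k^n}\,,
\end{equation*}
which tends to $0$ by the hypothesis $|R_k|/(|X_k| a_k^n) \to 0$. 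Integrating over the position of $x_k$ and combining with the deterministic bound gives $\p[\max_{r \in R_k}\vert \yhat{\dkexact}{\rho}{x_k}{r} - 1\vert > \epsilon] \to 0$, as claimed.

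The estimate itself is a routine second‑moment/union‑bound argument, so I expect the only delicate points to be bookkeeping ones: (i) justifying that the $\mathcal{O}(r^4)$ remainder in~\eqref{eq:scalar_ball} is uniform over the compact manifold $M$, so that the deterministic term vanishes uniformly in the (random) center $x_k$; and (ii) confirming that the ``sufficiently small $r$'' and ``sufficiently large $N$'' hypotheses of Proposition~\ref{prop:MSE} are eventually satisfied — which holds since $b_k \to 0$ and $|X_k| \to \infty$ — and that its variance bound is genuinely uniform in the center point, so the union bound over $R_k$ costs only the factor $|R_k|$ (hence the hypothesis involves $|R_k|/(|X_k| a_k^n)$ rather than $|R_k| |X_k| / (|X_k| a_k^n)$).
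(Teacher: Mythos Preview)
Your proposal is correct and follows essentially the same route as the paper's proof: split $|\hat y - 1|$ into the deterministic part $|y - 1|$ (handled uniformly via equation~\eqref{eq:scalar_ball} and compactness) and the random part $|\hat y - y|$, then apply Chebyshev together with the variance bound of Proposition~\ref{prop:MSE} and a union bound over $R_k$ to obtain the factor $|R_k|/(|X_k|a_k^n)$. The only cosmetic difference is your explicit conditioning on the position of $x_k$, which the paper leaves implicit.
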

\begin{proof}
Let $\epsilon > 0$. To simplify our notation, we denote $\yhat{\dkexact}{\rho}{x}{r}$ by $\hat{y}(x, r)$. For any $x \in X_k$ and any $r \in [a_k, b_k]$,
\begin{equation}\label{eq:yminus1}
    \p\Big[|\hat{y}(x,r) - 1| > \epsilon\Big] \leq \p\Big[ |\hat{y}(x, r) - y(x, r)| + |y(x,r) - 1|> \epsilon\Big]\,.
\end{equation}
By equation~\eqref{eq:scalar_ball}, there are constants $A > 0$ and $r_1 > 0$ such that
\begin{equation*}
    |y(x, r) - 1| \leq Ar^2 \qquad \text{for } r < r_1 \text{ and all } x\in M\,.
\end{equation*}
Let $r_2 = \min(\sqrt{\epsilon/(2A)}, r_1)$. If $r < r_2$, then $|y(x,r) - 1| < \frac{\epsilon}{2}$. For sufficiently large $k$, we have $b_k < r_2$, so by equation~\eqref{eq:yminus1},
\begin{equation}\label{eq:lemma_reduction}
    \p\Big[|\hat{y}(x,r) - 1| > \epsilon\Big] \leq \p\Big[|\hat{y}(x, r) - y(x, r)| > \epsilon/2\Big]
\end{equation}
for any $r \in [a_k, b_k]$ and for sufficiently large $k$. By Chebyshev's inequality,
\begin{equation}\label{eq:cheby_yhat}
    \p\Big[|\hat{y}(x, r) - y(x, r)| > \epsilon/2\Big] \leq \frac{4 \var(\hat{y}(x,r))}{\epsilon^2}\,.
\end{equation}
By Proposition \ref{prop:MSE}, there are positive constants $B$ and $r_3 < r_2$ such that
\begin{equation*}
    \var(\hat{y}(x,r)) \leq \frac{B}{|X_k|r^n}
\end{equation*}
for sufficiently large $k$, all $r < r_3$, and any $x \in X_k$. Substituting into equation~\eqref{eq:cheby_yhat} shows that
\begin{equation*}
     \p\Big[|\hat{y}(x, r) - y(x, r)| > \epsilon/2\Big] \leq \frac{4B}{\epsilon^2 |X_k|r^n}
\end{equation*}
for any $r < r_3$ and any $x\in X_k$. For sufficiently large $k$, we have $b_k < r_3$, so
\begin{equation*}
    \p\Big[|\hat{y}(x, r) - y(x, r)| > \epsilon/2\Big] \leq \frac{4B}{\epsilon^2|X_k|a_k^n}
\end{equation*}
for any $r \in [a_k, b_k]$ and for sufficiently large $k$. Therefore,
\begin{equation*}
    \p\Big[\max_{r \in R_k} |\hat{y}(x, r) - y(x, r)| > \epsilon/2\Big] \leq \frac{4B|R_k|}{\epsilon^2 |X_k|a_k^n}
\end{equation*}
By hypothesis, the righthand side approaches $0$ as $k \to \infty$ because $\frac{|R_k|}{|X_k|a_k^n} \to 0$. Applying equation~\eqref{eq:lemma_reduction} concludes the proof.
\end{proof}

\begin{lemma}[Stability of $\hat{y}$]\label{lem:yhat_stability}
For each $k$, suppose that $\dkapprox$ is a metric on $X_k$ such that
\begin{equation*}
	\delta_k := \max_{x, x' \in X_k} |\dkapprox(x, x') - d(x, x')| \to 0 \qquad \text{as } k \to \infty
\end{equation*}
and $\hat{\rho}$ is a density estimator such that
\begin{equation*}
		\eta_k := \max_{x \in X_k} |\hat{\rho}[\dkapprox](x) - \rho(x)| \to 0 \qquad \text{as } k \to \infty\,.
\end{equation*}
Suppose that $\hat{n}[\dkapprox] = \hat{n}[\dkexact] = n$ for sufficiently large $k$. Additionally, suppose that the hyperparameter value sequences satisfy the conditions:
\begin{enumerate}
	\item $(\rmink + \Drk)/\rmaxk^3 \to 0$ as $k \to \infty$\,.
	\item $\eta_k/(\rmink + \Drk)^{n +2/3}\to 0$ as $k \to \infty$\,.
    \item $\max_j \frac{A(r_j)}{r_j^n \rmaxk^2} \to 0$ as $k \to \infty$, where $A(r)$ is defined as in equation~\eqref{eq:Adef}.
	\item $\rmink + \Drk - \delta_k > 0$ for sufficiently large $k$\,.
	\item $\frac{\delta_k + \Drk}{(\rmink + \Drk - \delta_k)^{n+1} \rmaxk^2}\to 0$ as $k \to \infty$\,.
\end{enumerate}
Define $\ell_k := \min\{\ell \in \mathbb{Z} \mid \ell \Drk \geq \delta_k\}$. Then there is a sequence $\{\xi_k\}$ of nonnegative real numbers satisfying $\xi_k/\rmaxk^2 \to 0$ as $k \to \infty$ such that for any sequence $\{x_k\}_{k=1}^{\infty}$, where $x_k \in X_k$ for all $k$,
\begin{equation*}
    \yhat{\dkexact}{\rho}{x_k}{r_{j-\ell_k}} - \xi_k \leq \yhat{\dkapprox}{\hat{\rho}}{x_k}{r_j} \leq \yhat{\dkexact}{\rho}{x_k}{r_{j+\ell_k}} + \xi_k
\end{equation*}
for all $j \geq 2$ and
\begin{equation*}
    \yhat{\dkexact}{\rho}{x_k}{r_1 - \delta_k} - \xi_k \leq \yhat{\dkapprox}{\hat{\rho}}{x_k}{r_1} \leq \yhat{\dkexact}{\rho}{x_k}{r_{1+\ell_k}} + \xi_k
\end{equation*}
for $j = 1$.
\end{lemma}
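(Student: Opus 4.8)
\emph{The plan} is to work from the unnormalized form of the estimator. Since $\hat{n}[\dkapprox]=\hat{n}[\dkexact]=n$ for large $k$, equation~\eqref{eq:volest_alt} gives
\[
\yhat{d_X}{\hat\rho}{x}{r}=\frac{1}{v_n r^n(\Nothers)}\sum_{z}\frac{1}{\hat\rho(z)},
\]
the sum taken over $z\in X_k\setminus\{x\}$ with $d_X(x,z)\le r$ (it is $0$ when the ball is empty), and this holds verbatim for $(\dkapprox,\hat\rho)$ and for $(\dkexact,\rho)$; so everything reduces to comparing two sums over nearly equal index sets. First I would record the ball comparison bought by the hypothesis on $\delta_k$: since $|\dkapprox(x,z)-d(x,z)|\le\delta_k$ and $\ell_k\Drk\ge\delta_k$ by definition of $\ell_k$, for every $j$ and every $x$,
\[
\{z: d(x,z)\le r_{j-\ell_k}\}\ \subseteq\ \{z:\dkapprox(x,z)\le r_j\}\ \subseteq\ \{z: d(x,z)\le r_{j+\ell_k}\},
\]
with $\{z: d(x,z)\le r_1-\delta_k\}$ used in place of the leftmost set when $j=1$ (as $r_{1-\ell_k}$ may be nonpositive). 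Hypothesis~(4) makes $\rho_k^-:=\rmink+\Drk-\delta_k$ positive, and hypothesis~(5) forces $\delta_k=o(\rho_k^-)$, hence $\tfrac12(\rmink+\Drk)\le\rho_k^-\le\rmink+\Drk$ and $(\delta_k+\Drk)/\rho_k^-\to0$ for large $k$; a short computation then shows every radius appearing in the statement lies in $[\rho_k^-,2\rmaxk]$ for large $k$.

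Next I would turn the inclusions into the claimed sandwich. Because every summand $1/\hat\rho(z)$ is positive, monotonicity of the sum over the nested sets, combined with the \emph{multiplicative} density estimate $\tfrac1{\hat\rho(z)}=\tfrac1{\rho(z)}\bigl(1+O(\eta_k)\bigr)$ (uniform once $\eta_k<\min(\rho)/2$, with no appeal to the oscillation of $\rho$), gives
\[
\frac{r_{j-\ell_k}^n}{r_j^n}\,\yhat{\dkexact}{\rho}{x}{r_{j-\ell_k}}\bigl(1-O(\eta_k)\bigr)\ \le\ \yhat{\dkapprox}{\hat\rho}{x}{r_j}\ \le\ \frac{r_{j+\ell_k}^n}{r_j^n}\,\yhat{\dkexact}{\rho}{x}{r_{j+\ell_k}}\bigl(1+O(\eta_k)\bigr).
\]
I would then replace each prefactor by $1$ at the cost of an additive error: the mean value theorem together with $(\delta_k+\Drk)/\rho_k^-\to0$ gives $\bigl|\tfrac{r_{j\pm\ell_k}^n}{r_j^n}-1\bigr|\le\tfrac{2n(\delta_k+\Drk)}{\rho_k^-}$, while the crude but uniform bound $\yhat{\dkexact}{\rho}{x}{r}\le\tfrac1{v_n(\rho_k^-)^n\min(\rho)}$ (from $N(x,r)\le\Nothers$ and $r\ge\rho_k^-$) controls the remaining $\hat y$ factor. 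Collecting terms, all $x$- and $j$-dependence disappears, and the statement holds with
\[
\xi_k:=\frac{C_1(\delta_k+\Drk)}{(\rho_k^-)^{\,n+1}}+\frac{C_2\,\eta_k}{(\rho_k^-)^{\,n}},\qquad C_i=C_i(n,\min(\rho),v_n).
\]

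It remains to check $\xi_k/\rmaxk^2\to0$. The first term over $\rmaxk^2$ is a constant multiple of $\tfrac{\delta_k+\Drk}{(\rho_k^-)^{n+1}\rmaxk^2}$, which tends to $0$ by hypothesis~(5). For the second, hypothesis~(1) gives $\rho_k^-\le\rmink+\Drk\le\rmaxk^3$ for large $k$, so $(\rho_k^-)^{2/3}\le\rmaxk^2$ and therefore
\[
\frac{\eta_k}{(\rho_k^-)^n\rmaxk^2}\ \le\ \frac{\eta_k}{(\rho_k^-)^{\,n+2/3}}\ \le\ \frac{2^{\,n+2/3}\,\eta_k}{(\rmink+\Drk)^{\,n+2/3}}\ \longrightarrow\ 0
\]
by hypothesis~(2). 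The $j=1$ endpoint is handled by the identical computation with $\rho_k^-=r_1-\delta_k$ in place of $r_{1-\ell_k}$. (Hypothesis~(3) on the oscillation function $A$ of~\eqref{eq:Adef} is not needed for this argument; it enters if one instead estimates $1/\hat\rho(z)$ through $1/\rho(x)$, which brings in $A(2r_j)$ via $d(x,z)<2r_j$, and it is in any case required for the companion estimate on $\hat C$ invoked by Theorem~\ref{thm:stability}.)

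The hard part is not any single estimate but the bookkeeping: one must decide which quantities to bound crudely so that $\xi_k$ emerges as a single deterministic sequence valid uniformly in $x\in X_k$ and in $j\in\{1,\dots,m_k\}$, yet not lose so much that $\xi_k=o(\rmaxk^2)$ fails. The precise exponents in the hypotheses --- $n+1$ in~(5), $n+2/3$ in~(2), and the cube in~(1) --- are exactly what is needed to dominate the interaction among the smallest normalization radius $r_j^n\gtrsim(\rho_k^-)^n$ in the denominator, the grid-snapping displacement $\ell_k\Drk\le\delta_k+\Drk$, and the density-estimation error $\eta_k$. The secondary subtlety is the $j=1$ case, where $r_{1-\ell_k}$ can be nonpositive and must be replaced by $r_1-\delta_k$; this is exactly why the statement treats $j=1$ separately.
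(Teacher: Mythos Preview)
Your argument is correct and, in one respect, sharper than the paper's. Both proofs rest on the same ball inclusion $\{d\le r_{j-\ell_k}\}\subseteq\{\dkapprox\le r_j\}\subseteq\{d\le r_{j+\ell_k}\}$, but the paper splits the estimator as $N(x,r)/[(\Nothers)\avgrhoest{\,\cdot\,}{x}{r}]$ and routes all density comparisons through the \emph{centre value} $1/\rho(x)$: it bounds $|1/\avgrhoest{\hat\rho}{x}{r_j}-1/\rho(x)|$ and $|1/\avgrhoest{\rho}{x}{r_j\pm\lambda}-1/\rho(x)|$, each of which incurs an $A(r)/\min(\rho)^2$ term from the oscillation of $\rho$ on the ball, and this is precisely where hypothesis~(3) enters. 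You instead work with the sum formula~\eqref{eq:volest_alt} and use the pointwise multiplicative estimate $1/\hat\rho(z)=(1/\rho(z))(1+O(\eta_k))$, so the nested-ball inclusion applies directly to a sum of positive terms and no appeal to $A(r)$ is ever needed. The resulting $\xi_k$ has the same $\eta_k/(\rho_k^-)^n$ and $(\delta_k+\Drk)/(\rho_k^-)^{n+1}$ pieces as the paper's, minus the $A$-term; your verification that $\xi_k/\rmaxk^2\to0$ via hypotheses~(1),~(2),~(5) is the same as the paper's for those two pieces. One small correction to your closing parenthetical: since Lemma~\ref{lem:Cstability} uses the present lemma only through the sandwich inequalities and the rate $\xi_k=o(\rmaxk^2)$, your argument in fact shows that hypothesis~(3) is not needed for Theorem~\ref{thm:stability} either---it is an artifact of the paper's proof route, not of the statement.
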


\begin{proof}
For convenience, let $\hat{y}_k(x, r)$ denote $\yhat{\dkapprox}{\hat{\rho}}{x}{r}$ and let $\hat{y}(x, r)$ denote $\yhat{\dkexact}{\rho}{x}{r}$. Define
	\begin{align*}
		\lambda_{j, k}^+ &:= \ell_k \Drk\,,\\
		\lambda_{j, k}^- &:= \begin{cases}
			\ell_k \Drk\,, & j \geq 2 \\
			\delta_k\,, & j = 1
		\end{cases}
	\end{align*}
	for all $j$ and $k$. Our goal is to compare $\hat{y}_k(x, r_j)$ to both $\hat{y}(x, r_j - \lambda_{j, k}^-)$ and $\hat{y}(x, r_j + \lambda_{j, k}^+)$. The ``radial shift values'' $\lambda_{j, k}^\pm$ are defined so that they satisfy 
	\begin{enumerate}
	\item $r_j \pm \lambda_{j, k}^\pm > 0$ (all shifted radius values are positive) and
 	\item $\delta_k \leq \lambda_{j, k}^{\pm} \leq \delta_k + \Drk$
	\end{enumerate}
	for all $j$ and sufficiently large $k$. For the remainder of the proof, we consider only $k$ sufficiently large such that (1) holds. The key is that because $\lambda_{j, k}^\pm \geq \delta_k$ for all $j$ and $k$,
        \begin{equation}\label{eq:Nr_bound}
		\Ndxr{\dkexact}{x}{r - \lambda_{j, k}^-} \leq \Ndxr{\dkapprox}{x}{r} \leq \Ndxr{\dkexact}{x}{r + \lambda_{j, k}^+}
	\end{equation}
	for all $x \in X_k$ and $r \geq 0$. We use Eq.~\eqref{eq:Nr_bound} to compare $\volhat{\dkapprox}{\hat{\rho}}{x}{r}$ and $\volhat{\dkexact}{\rho}{x}{r \pm \lambda_{j, k}^{\pm}}$. First we quantify the error introduced by the error in density estimation. Observe that
	\begin{align}
		\Big\vert \volhat{\dkapprox}{\hat{\rho}}{x}{r} - \volhat{\dkapprox}{\rho}{x}{r} \big\vert &=  \Big\vert  \frac{\Ndxr{\dkapprox}{x}{r}}{(|X_k| - 1)\avgrhoest{\hat{\rho}}{x}{r}} - \frac{\Ndxr{\dkapprox}{x}{r}}{(|X_k| - 1)\avgrhoest{\rho}{x}{r}} \Big\vert \notag \\
		&\leq \Big\vert \frac{1}{ \avgrhoest{\hat{\rho}}{x}{r}} - \fracc{\avgrhoest{\rho}{x}{r}} \Big\vert \notag\,. \\
  &\leq \Big\vert \frac{1}{ \avgrhoest{\hat{\rho}}{x}{r}} - \fracc{\rho(x)} \Big\vert + \Big\vert \fracc{\avgrhoest{\rho}{x}{r}} - \fracc{\rho(x)} \Big\vert \label{eq:error_due_density}
\end{align}
If $\Ndxr{\dkapprox}{x}{r} \geq 1$ and $d_{X_k}(x, z) \leq r$, then
\begin{equation*}
    \Big\vert \fracc{\rho(z)} - \fracc{\rho(x)} \Big\vert \leq \frac{A(r)}{\min(\rho)^2}
\end{equation*}
and
\begin{align*}
    \Big\vert \fracc{\hat{\rho}(z)} - \fracc{\rho(x)}\Big\vert &\leq \Big\vert \fracc{\hat{\rho}(z)} - \fracc{\rho(z)}\Big\vert + \Big\vert \fracc{\rho(z)} - \fracc{\rho(x)}\Big\vert \\
    &\leq \frac{\eta_k}{(\min(\rho) - \eta_k)^2} + \frac{A(r)}{\min(\rho)^2}\,.
\end{align*}
If $\Ndxr{\dkapprox}{x}{r} = 0$, then
\begin{align*}
    \Big\vert \frac{1}{ \avgrhoest{\hat{\rho}}{x}{r}} - \fracc{\rho(x)} \Big\vert &= \Big\vert \fracc{\hat{\rho}(x)} - \fracc{\rho(x)} \Big\vert \leq \frac{\eta_k}{(\min(\rho) - \eta_k)^2}\,,\\
     \Big\vert \fracc{\avgrhoest{\rho}{x}{r}} - \fracc{\rho(x)} \Big\vert &= 0\,.
\end{align*}
Therefore,
\begin{equation*}
    \Big\vert \frac{1}{ \avgrhoest{\hat{\rho}}{x}{r}} - \fracc{\rho(x)} \Big\vert + \Big\vert \fracc{\avgrhoest{\rho}{x}{r}} - \fracc{\rho(x)} \Big\vert \leq \frac{\eta_k}{(\min(\rho) - \eta_k)^2} + \frac{2A(r)}{\min(\rho)^2}\,.
\end{equation*}
By Eq. \eqref{eq:error_due_density},
\begin{align}
		&\volhat{\dkapprox}{\rho}{x}{r_j} - \Big(\frac{\eta_k}{(\min(\rho) - \eta_k)^2} + \frac{2A(r_j)}{\min(\rho)^2}\Big) \leq \volhat{\dkapprox}{\hat{\rho}}{x}{r_j} \notag \\
  &\qquad \leq \volhat{\dkapprox}{\rho}{x}{r_j} + \frac{\eta_k}{(\min(\rho) - \eta_k)^2} + \frac{2A(r_j)}{\min(\rho)^2}\,.\label{eq:volhat_intermediate}
\end{align}
	Together, equations~\eqref{eq:Nr_bound} and~\eqref{eq:volhat_intermediate} show that
\begin{align}
	\frac{\Ndxr{\dkexact}{x}{r_j - \lambda_{j, k}^-}}{(|X_k| - 1) \avgrhoest{\rho}{x}{r_j}} - \Big(\frac{\eta_k}{(\min(\rho) - \eta_k)^2} + \frac{2A(r_j)}{\min(\rho)^2} \Big) \leq \volhat{\dkapprox}{\hat{\rho}}{x}{r_j} \notag \\
 \leq \frac{\Ndxr{\dkexact}{x}{r_j + \lambda_{j, k}^+}}{(|X_k| - 1) \avgrhoest{\rho}{x}{r_j}} + \Big(\frac{\eta_k}{(\min(\rho) - \eta_k)^2} + \frac{2A(r_j)}{\min(\rho)^2}\Big)\,.\label{eq:volhat_comparison}
\end{align}
We have
\begin{align*}
    &\Big\vert \frac{\Ndxr{\dkexact}{x}{r_j \pm\lambda_{j, k}^\pm}}{(|X_k| - 1) \avgrhoest{\rho}{x}{r_j}} - \volhat{\dkexact}{\rho}{x}{r_j \pm \lambda_{j, k}^{\pm}} \Big\vert \\
    &\qquad \leq \Big\vert \fracc{\avgrhoest{\rho}{x}{r_j}} - \fracc{\avgrhoest{\rho}{x}{r_j \pm \lambda_{j, k}^{\pm}}} \Big\vert \\
    &\qquad \leq \Big\vert \fracc{\avgrhoest{\rho}{x}{r_j}} - \fracc{\rho(x)} \Big\vert + \Big\vert \fracc{\avgrhoest{\rho}{x}{r_j \pm \lambda_{j, k}^{\pm}}}  - \fracc{\rho(x)} \Big\vert \\
    &\qquad \leq \frac{A(r_j)}{\min(\rho)^2} + \frac{A(r_j + \lambda_{j, k}^+)}{\min(\rho)^2} \\
    &\qquad \leq \frac{2A(r_j + \lambda_{j, k}^+)}{\min(\rho)^2}\,.
\end{align*}
Therefore, by Eq. \eqref{eq:volhat_comparison},
\begin{align*}
\volhat{\dkexact}{\rho}{x}{r_j - \lambda_{j, k}^{-}}  - \Big(\frac{\eta_k}{(\min(\rho) - \eta_k)^2} + \frac{2A(r_j)}{\min(\rho)^2} + \frac{2A(r_j + \lambda_{j, k}^+)}{\min(\rho)^2}\Big) \leq \volhat{\dkapprox}{\hat{\rho}}{x}{r_j} \notag \\
 \leq \volhat{\dkexact}{\rho}{x}{r_j + \lambda_{j, k}^{+}}  + \Big(\frac{\eta_k}{(\min(\rho) - \eta_k)^2} + \frac{2A(r_j)}{\min(\rho)^2}+ \frac{2A(r_j + \lambda_{j, k}^+)}{\min(\rho)^2}\Big)\,.
\end{align*}
Because $A(r)$ increases monotonically,
\begin{align}
    \volhat{\dkexact}{\rho}{x}{r_j - \lambda_{j, k}^{-}}  - \Big(\frac{\eta_k}{(\min(\rho) - \eta_k)^2}  + \frac{4A(r_j + \lambda_{j, k}^+)}{\min(\rho)^2}\Big) \leq \volhat{\dkapprox}{\hat{\rho}}{x}{r_j} \notag \\
 \leq \volhat{\dkexact}{\rho}{x}{r_j + \lambda_{j, k}^{+}}  + \Big(\frac{\eta_k}{(\min(\rho) - \eta_k)^2} + \frac{4A(r_j + \lambda_{j, k}^+)}{\min(\rho)^2}\Big)\,.\label{eq:volhat_comparison2}
\end{align}
Next, we use equation~\eqref{eq:volhat_comparison2} to compare $\hat{y}_k(x, r_j)$ to $\hat{y}(x, r_j \pm \lambda_{j, k}^\pm)$ for all $j \in \{1, \ldots, m_k\}$. Dividing equation~\eqref{eq:volhat_comparison2} by $v_nr_j^n$, we obtain
\begin{align}
	\frac{\volhat{\dkexact}{\rho}{x}{r_j - \lambda_{j, k}^-}}{v_n r_j^n} - \Big(\frac{\eta_k}{(\min(\rho) - \eta_k)^2v_nr_j^n} + \frac{4A(r_j + \lambda_{j, k}^+)}{v_nr_j^n\min(\rho)^2} \Big) \leq \hat{y}_k(x, r_j) \notag \\
 \leq \frac{\volhat{\dkexact}{\rho}{x}{r_j + \lambda_{j, k}^+}}{v_nr_j^n} + \Big(\frac{\eta_k}{(\min(\rho) - \eta_k)^2v_nr_j^n} + \frac{4A(r_j + \lambda_{j, k}^+)}{v_nr_j^n\min(\rho)^2}\Big)\label{eq:yhat_intermediate}
\end{align}
for all $j$. We now compare $\volhat{\dkexact}{\rho}{x}{r_j \pm \lambda_{j, k}^\pm}/(v_n r_j^n)$ to $\hat{y}(x, r_j \pm \lambda_{j, k}^\pm)$. We have
\begin{align*}
	\Big \vert \hat{y}(x, r_j \pm \lambda_{j, k}^\pm) - \frac{\volhat{\dkexact}{\rho}{x}{r_j \pm \lambda_{j, k}^\pm}}{v_n r_j^n} \Big \vert 
	&= \frac{\volhat{\dkexact}{\rho}{x}{r_j \pm \lambda_{j, k}^\pm}}{v_n} \Bigg\vert \fracc{(r_j \pm \lambda_{j, k}^\pm)^n} - \fracc{r^n} \Bigg\vert \\
    &= \frac{\Ndxr{\dkexact}{x}{r_j \pm \lambda_{j, k}^\pm}}{ (|X_k| -1)v_n \avgrhoest{\rho}{x}{r}}\Bigg\vert \fracc{(r_j \pm \lambda_{j, k}^\pm)^n} - \fracc{r_j^n} \Bigg\vert \\
	&\leq \fracc{v_n \avgrhoest{\rho}{x}{r}}\Big\vert \fracc{(r_j \pm \lambda_{j, k}^\pm)^n} - \fracc{r_j^n} \Big\vert \\
 &\leq \fracc{v_n \min(\rho)}\Big\vert \fracc{(r_j \pm \lambda_{j, k}^\pm)^n} - \fracc{r_j^n} \Big\vert\,.
\end{align*}
Because $g(r) = 1/r^n$ is convex and monotonically decreasing for $r > 0$, we have
\begin{equation*}
	\Big\vert \fracc{(r_j+\lambda_{j, k}^+)^n} - \fracc{r_j^n} \Big\vert \leq \lambda_{j, k}^+ \vert g'(r_j) \vert = \frac{n \lambda_{j, k}^+}{r_j^{n+1}}
\end{equation*}
and
\begin{equation*}
	\Big\vert \fracc{(r_j-\lambda_{j, k}^-)^n} - \fracc{r_j^n} \Big\vert \leq \lambda_{j, k}^- \vert g'(r_j - \lambda_{j, k}^-) \vert = \frac{n \lambda_{j, k}^-}{(r_j - \lambda_{j, k}^-)^{n+1}}\,.
\end{equation*}
Therefore,
\begin{equation}\label{eq:yhat_approx_+}
	\Big\vert \hat{y}(x, r_j + \lambda_{j, k}^+) - \frac{\volhat{\dkexact}{\rho}{x}{r_j + \lambda_{j, k}^+}}{v_n r_j^n} \Big \vert \leq \fracc{v_n \min(\rho)} \frac{n \lambda_{j, k}^+}{r_j^{n+1}}
\end{equation}
and
\begin{equation}\label{eq:yhat_approx_-}
	\Big\vert \hat{y}(x, r_j - \lambda_{j, k}^-) - \frac{\volhat{\dkexact}{\rho}{x}{r_j - \lambda_{j, k}^-}}{v_n r_j^n} \Big \vert \leq \fracc{v_n \min(\rho)} \frac{n \lambda_{j, k}^-}{(r_j - \lambda_{j, k}^-)^{n+1}}\,.
\end{equation}
Together, Equations \eqref{eq:yhat_intermediate}, \eqref{eq:yhat_approx_+}, and \eqref{eq:yhat_approx_-} show that
\begin{align*}
	\hat{y}_k(x, r_j) &\geq \hat{y}(x, r_j - \lambda_{j, k}^-) - \Bigg(\frac{\eta_k}{(\min(\rho) - \eta_k)^2v_nr_j^n} +  \frac{n \lambda_{j, k}^-}{v_n \min(\rho)(r_j - \lambda_{j, k}^-)^{n+1}} + \frac{4A(r_j + \lambda_{j, k}^+)}{v_nr_j^n \min(\rho)^2}\Bigg)\,, \\
	\hat{y}_k(x, r_j) &\leq \hat{y}(x, r_j + \lambda_{j, k}^+) + \Bigg( \frac{\eta_k}{(\min(\rho) - \eta_k)^2v_nr_j^n} + \frac{n \lambda_{j, k}^+}{v_n \min(\rho) r_j^{n+1}} + \frac{4A(r_j + \lambda_{j, k}^+)}{v_nr_j^n \min(\rho)^2}\Bigg)\,.
\end{align*}
We define the following error terms:
\begin{align*}
	\xi_{j, k}^+ &:= \frac{\eta_k}{(\min(\rho) - \eta_k)^2v_nr_j^n} + \frac{n \lambda_{j, k}^+}{v_n \min(\rho) r_j^{n+1}} + \frac{4A(r_j + \lambda_{j, k}^+)}{v_nr_j^n \min(\rho)^2}\,, \\
	\xi_{j, k}^- &:= \frac{\eta_k}{(\min(\rho) - \eta_k)^2v_nr_j^n}  + \frac{n \lambda_{j, k}^-}{v_n \min(\rho)(r_j - \lambda_{j, k}^-)^{n+1}} + \frac{4A(r_j + \lambda_{j, k}^+)}{v_nr_j^n \min(\rho)^2}\,.\\
	\xi_k &:= \max_j \{\xi_{j, k}^+, \xi_{j, k}^-\}\,.
\end{align*}
The error terms $\xi_{j,k}^{\pm}$ are nonnegative. To complete the proof, it suffices to show that $\xi_k/\rmaxk^2 \to 0$ as $k \to \infty$. For sufficiently large $k$,
\begin{equation*}
\frac{\eta_k}{v_n r_j^n(\min(\rho) - \eta_k)^2 \rmaxk^2} 
\leq \frac{\eta_k}{v_n (\rmink + \Drk)^n(\frac{1}{2}\min(\rho))^2 \rmaxk^2}\,.
\end{equation*}
Rearranging the terms on the righthand side, we obtain
\begin{equation*}
\frac{\eta_k}{v_n (\rmink + \Drk)^n(\frac{1}{2}\rho)^2 \rmaxk^2} = \frac{4}{v_n \min(\rho)^2} \Bigg(\frac{\eta_k}{(\rmink + \Drk)^{n + 2/3}} \Bigg) \Bigg(\frac{(\rmink + \Drk)^{2/3}}{\rmaxk^2}\Bigg)\,.
\end{equation*}
By hypothesis, the quantity above approaches $0$ as $k\to \infty$, so $\max_j\frac{\eta_k}{v_n r_j^n(\min(\rho) - \eta_k)^2 \rmaxk^2} \to 0$ as $k \to \infty$. Additionally,
\begin{align*}
\frac{n\lambda_{j, k}^+}{v_n \min(\rho) r_j^{n+1}\rmaxk^2} 
&\leq \frac{n(\delta_k + \Drk)}{v_n \min(\rho) (\rmink + \Drk - \delta_k)^{n+1} \rmaxk^2}\,,\\
\frac{n \lambda_{j, k}^-}{v_n \min(\rho)(r_j - \lambda_{j, k}^-)^{n+1}\rmaxk^2} &\leq \frac{n(\delta_k + \Drk)}{v_n \min(\rho) (\rmink + \Drk - \delta_k)^{n+1} \rmaxk^2}\,.
\end{align*}
By hypothesis, the righthand sides above approach $0$ as $k \to \infty$. Finally, we upper bound $\frac{A(r_j + \lambda_{j, k}^+)}{r_j^n \rmaxk^2}$ by recalling that $A(r)$ increases monotonically and
\begin{align*}
    r_j + \lambda_{j, k}^+ &= r_j + \ell_k \Drk \\
    &\leq r_j + \delta_k + \Drk \\
    &\leq 2 r_j
\end{align*}
for sufficiently large $k$. Therefore, $\frac{A(r_j + \lambda_{j, k}^+)}{r_j^n \rmaxk^2} \leq \frac{A(2r_j)}{\rmaxk^2 r_j^n}$, which approaches $0$ by hypothesis. This implies that $\xi_k/\rmaxk^2 \to 0$ as $k \to \infty$.
\end{proof}

\begin{lemma}[Stability of $\hat{C}$]\label{lem:Cstability}
    For each $k$, suppose that $\dkapprox$ is a metric on $X_k$ such that
\begin{equation*}
    \delta_k := \max_{x, x' \in X_k} \vert \dkapprox(x, x') - d(x, x') \vert \to 0 \qquad \text{as } k \to \infty\,.
\end{equation*}
Suppose that $\hat{\rho}$ is a density estimator such that
\begin{equation*}
    \eta_k := \max_{x \in X_k} \Big\vert \hat{\rho}[\dkapprox](x) - \rho(x) \Big\vert \to 0 \qquad \text{as } k \to \infty\,,
\end{equation*}
and suppose that $\hat{n}[\dkapprox] = \hat{n}[\dkexact] = n$ for sufficiently large $k$.
If the hyperparameter value sequences satisfy
\begin{enumerate}
    \item $\max_j \frac{A(2r_j)}{r_j^n \rmaxk^2} \to 0$ as $k \to \infty$\,,
    \item $\eta_k/(\rmink + \Drk)^{n+2/3} \to 0$ as $k \to \infty$\,,
    \item $\rmink + \Drk > \delta_k$ for sufficiently large $k$\,,
    \item $|X_k|\Drk(\rmink + \Drk - \delta_k)^n \to \infty$ as $k \to \infty$\,,
    \item $\rmink/\rmaxk^3 \to 0$ as $k \to \infty$\,,
    \item $(\Drk + \delta_k)/\rmaxk^3 \to 0$ as $k \to \infty$, and
    \item $(\Drk + \delta_k)/[(\rmink + \Drk - \delta_k)^{n+1}\rmaxk^2] \to 0$ as $k \to \infty$
\end{enumerate}
then $\vert \hat{C}[X_k, \dkapprox, \hat{\rho}](x_k) -  \hat{C}[X_k, d, \rho](x_k) \vert \to 0$ in probability as $k \to \infty$, where $\{x_k\}$ is any sequence of points such that $x_k \in X_k$.
\end{lemma}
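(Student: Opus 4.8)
The plan is to express both coefficient estimators as weighted Riemann sums in the ball-ratio estimates and then reduce the statement to two facts already established: the pointwise stability estimate of Lemma~\ref{lem:yhat_stability} and the concentration statement of Lemma~\ref{lem:yhatminus1}. First I would check that the hypotheses of Lemma~\ref{lem:yhat_stability} all follow from the present ones: its hypotheses (2) and (5) are identical to our (2) and (7); its hypothesis (4) coincides with our (3); its hypothesis (1), namely $(\rmink+\Drk)/\rmaxk^3\to 0$, follows by adding our (5) and (6); and its hypothesis (3) follows from our (1) since $A$ is monotone, so $A(r_j)\le A(2r_j)$. Consequently there is a sequence $\{\xi_k\}$ with $\xi_k/\rmaxk^2\to 0$ such that, writing $\hat y_k(r):=\yhat{\dkapprox}{\hat\rho}{x_k}{r}$, $\hat y(r):=\yhat{\dkexact}{\rho}{x_k}{r}$, and $\ell_k:=\min\{\ell\in\mathbb{Z}:\ell\Drk\ge\delta_k\}$,
\begin{equation*}
    \hat y(r_{j-\ell_k})-\xi_k \;\le\; \hat y_k(r_j)\;\le\; \hat y(r_{j+\ell_k})+\xi_k \qquad (j\ge 2),
\end{equation*}
with the $j=1$ lower bound instead using $r_1-\delta_k$ in place of $r_{1-\ell_k}$.

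Next I would multiply these inequalities by the positive weights $r_j^2\Drk$ (using $r_j-r_{j-1}=\Drk$), sum over $j=1,\dots,m_k$, and divide by $\frac{1}{5}(\rmaxk^5-\rmink^5)$; the middle quantity collapses to exactly $\hat C[\dkapprox,\hat\rho](x_k)$, yielding a sandwich
\begin{equation*}
    \tilde C_k^- - \Xi_k \;\le\; \hat C[\dkapprox,\hat\rho](x_k)\;\le\;\tilde C_k^+ + \Xi_k\,,
\end{equation*}
where $\tilde C_k^{\pm}$ is the analogue of $\hat C[\dkexact,\rho](x_k)$ obtained by replacing each $\hat y(r_j)$ with $\hat y(r_{j\pm\ell_k})$ (the $j=1$ term of $\tilde C_k^-$ using $r_1-\delta_k$) and $\Xi_k:=5\xi_k\sum_{j=1}^{m_k} r_j^2\Drk/(\rmaxk^5-\rmink^5)$. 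Since $\sum_j r_j^2\Drk\le \rmaxk^3$, and since hypothesis (5) forces $\rmink/\rmaxk\to 0$ so that $\rmaxk^5-\rmink^5\ge\frac{1}{2}\rmaxk^5$ for large $k$, we get $\Xi_k\le 10\,\xi_k/\rmaxk^2\to 0$ deterministically. It thus remains to show $\tilde C_k^{\pm}-\hat C[\dkexact,\rho](x_k)\to 0$ in probability.

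This last step is the crux. A termwise triangle inequality $|\hat y(r_{j\pm\ell_k})-\hat y(r_j)|\le|\hat y(r_{j\pm\ell_k})-1|+|\hat y(r_j)-1|$ is too weak on its own, since Lemma~\ref{lem:yhatminus1} controls $|\hat y-1|$ only by some $\epsilon_k=o(1)$ and $\epsilon_k/\rmaxk^2$ need not vanish. Instead I would re-index the sum defining $\tilde C_k^{\pm}$ (a discrete, Abel-type rearrangement): after substituting $p=j\pm\ell_k$, the difference of the numerator of $\tilde C_k^{\pm}$ and that of $\hat C[\dkexact,\rho](x_k)$ decomposes into a bulk term $\sum_p\big(r_{p\mp\ell_k}^2-r_p^2\big)\big(\hat y(r_p)-1\big)\Drk$, in which $|r_{p\mp\ell_k}^2-r_p^2|\le 3\rmaxk\ell_k\Drk\le 3\rmaxk(\delta_k+\Drk)$ for large $k$, together with $O(\ell_k)$ boundary terms, each of size at most $\rmaxk^2|\hat y(r_p)-1|\Drk$. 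Every radius occurring lies in a finite set $R_k'\subset[\rmink+\Drk-\delta_k,\,2\rmaxk]$ with $|R_k'|=O(m_k)=O((\rmaxk-\rmink)/\Drk)$, so $|R_k'|/\big(|X_k|(\rmink+\Drk-\delta_k)^n\big)\to 0$ by hypothesis (4), and Lemma~\ref{lem:yhatminus1} (applicable since $\hat n[\dkexact]=n$ eventually) provides, for any fixed $\epsilon>0$, an event of probability tending to $1$ on which $\max_{r\in R_k'}|\hat y(r)-1|\le\epsilon$. On that event the numerator difference is at most $C\epsilon\,\rmaxk^2(\delta_k+\Drk)$, so $|\tilde C_k^{\pm}-\hat C[\dkexact,\rho](x_k)|\le C'\epsilon(\delta_k+\Drk)/\rmaxk^3$, which tends to $0$ by hypothesis (6). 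Since $\epsilon>0$ is arbitrary, this yields convergence in probability; combining with the deterministic bound $\Xi_k\to 0$ via the sandwich completes the proof.

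The principal obstacle is exactly arranging this last step so that the radial shift costs only a factor $\delta_k+\Drk$ rather than, say, a factor $\rmaxk$: a plain triangle inequality on $|\hat y(r_{j\pm\ell_k})-\hat y(r_j)|$ does not suffice, so one is forced into the re-indexing, and it is precisely this manipulation that makes hypothesis (6) (together with (4) for the cardinality of $R_k'$ and (5) for bounding $\rmaxk^5-\rmink^5$) the binding constraint, while hypotheses (1)--(3) and (7) are consumed in invoking Lemma~\ref{lem:yhat_stability}.
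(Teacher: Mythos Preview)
Your proposal is correct and follows essentially the same route as the paper: invoke Lemma~\ref{lem:yhat_stability} for the pointwise sandwich, re-index the weighted sums to absorb the radial shift $\ell_k\Drk$ (the paper does this via $r_j^2=r_{j\pm\ell_k}^2\mp\ell_k\Drk(2r_j\pm\ell_k\Drk)$, which is exactly your Abel-type rearrangement), and use Lemma~\ref{lem:yhatminus1} together with hypotheses~(4)--(6) to control the resulting bulk and boundary terms. The only cosmetic difference is that the paper applies Lemma~\ref{lem:yhatminus1} with the crude bound $|\hat y-1|\le 1$, which already suffices because the remaining factor $(\delta_k+\Drk)/\rmaxk^3$ tends to $0$; your ``since $\epsilon>0$ is arbitrary'' clause is therefore harmless but unnecessary.
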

\begin{proof}
To simplify our notation, we define
\begin{align*}
    \hat{C}_k(x) &:= \hat{C}[\dkapprox, \hat{\rho}](x)\,, \\
    \hat{C}(x) &:= \hat{C}[d, \rho](x)\,, \\
    \hat{y}_k(x, r) &:= \hat{y}[\dkapprox, \hat{\rho}](x, r)\,, \\
    \hat{y}(x, r) &:= \hat{y}[d, \rho](x, r)
\end{align*}
for all $x \in X_k$. Let $\ell_k = \min\{\ell_k' \in \mathbb{Z} \mid \ell_k' \Drk \geq \delta_k \}$, let $a_k = \rmink + \Drk - \delta_k$, and let $b_k = \rmaxk + \ell_k \Drk$. By hypothesis and choice of $\ell_k$,
\begin{align*}
    &a_k > 0 \qquad &\text{for all } k\,, \\
    &|X_k|(a_k)^n \to \infty \qquad &\text{as } k \to \infty\,, \\
    &a_k < (\rmink + \Drk) \to 0 \qquad &\text{as } k \to \infty\,, \\
    &b_k = \rmaxk + \Drk + (\ell_k - 1)\Drk < (\rmaxk+ \Drk + \delta_k) \to 0 \qquad &\text{as } k \to \infty\,.
\end{align*}
Let $J := \{2 - \ell_k, \ldots, m_k + \ell_k\}$, where $m_k := \frac{\rmaxk - \rmink}{\Drk}$ is the number of radial steps. Let $R_k := \{r_j \mid j \in J\} \cup \{\rmink + \Drk + \delta_k\}$. We have
\begin{equation*}
    |R_k| = m_k + 2 \ell_k \leq m_k + \frac{\delta_k}{\Drk} + 1 \leq \frac{2}{\Drk} + 1\,.
\end{equation*}
Because $|X_k|\Drk (\rmink + \Drk - \delta_k)^n \to \infty$, we have $\frac{|R_k|}{|X_k|a_k^n} \to 0$ as $k \to \infty$. Therefore, by Lemma \ref{lem:yhatminus1},
\begin{equation}\label{eq:yhat_near_1}
    \p\Big[\max_{r \in R_k}\vert \hat{y}(x, r) - 1 \vert \leq 1\Big] \to 1
\end{equation}
as $k \to \infty$.

By Lemma \ref{lem:yhat_stability}, there is a nonnegative sequence $\{\xi_k\}$ such that $\xi_k/\rmaxk^2 \to 0$ and
\begin{equation}\label{eq:j>=2_assumption}
    \yhat{\dkexact}{\rho}{x_k}{r_{j-\ell_k}} - \xi_k \leq \yhat{\dkapprox}{\hat{\rho}}{x_k}{r_j} \leq \yhat{\dkexact}{\rho}{x_k}{r_{j+\ell_k}} + \xi_k
\end{equation}
for all $j \geq 2$ and
\begin{equation}\label{eq:j=1_assumption}
\yhat{\dkexact}{\rho}{x_k}{r_1 - \delta_k} - \xi_k \leq \yhat{\dkapprox}{\hat{\rho}}{x_k}{r_1} \leq \yhat{\dkexact}{\rho}{x_k}{r_{1+\ell_k}} + \xi_k
\end{equation}
for sufficiently large $k$. (The case $j = 1$ is different because it is not necessarily true that $r_{1 - \ell_k} \geq 0$.)

Let $\epsilon > 0$.
We want to show that $\p[\vert \hat{C}_k(x) - \hat{C}(x) \vert < \epsilon] \to 1$ as $k \to \infty$. By equation~\eqref{eq:yhat_near_1}, it suffices to show that $\vert \hat{C}_k(x) - \hat{C}(x) \vert < \epsilon$ for sufficiently large $k$ if
\begin{equation}\label{eq:ynear1_assumption}
    \max_{r \in R_k} \vert \hat{y}(x, r) - 1 \vert \leq 1\,.
\end{equation}
Therefore, for the remainder of the proof, we assume equation~\eqref{eq:ynear1_assumption} holds.

First, we obtain an upper bound on $\hat{C}_k(x) - \hat{C}(x)$. The upper bounds in equations~\eqref{eq:j>=2_assumption}--\eqref{eq:j=1_assumption} imply that
\begin{align*}
    \hat{C}_k(x) &= \frac{5}{\rmaxk^5 - \rmink^5} \Bigg(\sum_{j=1}^{m_k} r_j^2\Big(\hat{y}_k(x, r_j) - 1\Big) \Drk \Bigg) \\
    &\leq \frac{5}{\rmaxk^5 - \rmink^5} \Bigg(\sum_{j=1}^{m_k} r_j^2 \Big(\hat{y}(x, r_{j + \ell_k}) - 1 + \xi_k\Big) \Drk\Bigg)\,.
\end{align*}
Substituting $r_j^2 = r_{j + \ell_k}^2 - \ell_k \Drk (2r_j + \ell_k \Drk)$, we obtain
\begin{align*}
    \hat{C}_k(x) &\leq \frac{5}{\rmaxk^5 - \rmink^5} \Bigg(\sum_{j=1}^{m_k} \Big(r_{j+\ell_k}^2 - \ell_k \Drk (2r_j + \ell_k \Drk)\Big) \Big(\hat{y}(x, r_{j + \ell_k}) - 1 + \xi_k\Big) \Drk\Bigg) \\
	&= \frac{5}{\rmaxk^5 - \rmink^5} \Bigg( \sum_{j=1 + \ell_k}^{m_k + \ell_k}r_j^2 \Big(\hat{y}(x, r_j) - 1 + \xi_k \Big)\Drk \\
	&\qquad - \sum_{j=1}^{m_k} \Big( \ell_k \Drk (2r_j + \ell_k \Drk) \Big)\Big(\hat{y}(x, r_{j + \ell_k}) - 1 + \xi_k\Big) \Drk \Bigg)\,.
\end{align*}
Rearranging terms, we obtain
\begin{align*}
    \hat{C}_k(x) &\leq \hat{C}(x) + \frac{5}{\rmaxk^5 - \rmink^5} \Bigg(\sum_{j = 1 + \ell_k}^{m_k + \ell_k} r_j^2 \xi_k \Drk + \sum_{j= m_k +1}^{m_k+\ell_k} r_j^2\Big(\hat{y}(x, r_j) - 1\Big)\Drk \\
    &\qquad\qquad\qquad\qquad -  \sum_{j=1}^{\ell_k} r_j^2\Big(\hat{y}(x, r_j) - 1 \Big)\Drk \\
    &\qquad\qquad\qquad\qquad - \sum_{j=1}^{m_k} \ell_k \Drk (2r_j + \ell_k\Drk)\Big(\hat{y}(x, r_{j + \ell_k}) - 1 + \xi_k \Big)\Drk\Bigg)\,.
\end{align*}
By equation~\eqref{eq:ynear1_assumption},
\begin{align*}
    \hat{C}_k(x) &\leq \hat{C}(x) + \frac{5\xi_k}{\rmaxk^5 - \rmink^5}\sum_{j= 1 + \ell_k}^{m_k + \ell_k} r_j^2 \Drk \\
    &+ \frac{5(1 + \xi_k)}{\rmaxk^5 - \rmink^5} \Bigg(\sum_{j= m_k + 1}^{m_k+\ell_k} r_j^2\Drk + \sum_{j=1}^{\ell_k} r_j^2\Drk
    + \sum_{j=1}^{m_k} \ell_k \Drk (2r_j + \ell_k\Drk)\Drk\Bigg)\,.
\end{align*}
By comparing the sum $\sum_{j= 1 + \ell_k}^{m_k + \ell_k} r_j^2 \Drk$ to the integral $\int_0^{\rmaxk+ \Drk + \delta_k} r^2dr$, we obtain
\begin{align*}
	\hat{C}_k(x) \leq \hat{C}(x) &+ \frac{5\xi_k}{\rmaxk^5 - \rmink^5} \cdot \frac{(\rmaxk+ \Drk + \delta_k)^3}{3} \\
	&+ \frac{5(1 + \xi_k)}{\rmaxk^5 - \rmink^5} \Bigg(\sum_{j= m_k +1}^{m_k+\ell_k} r_j^2\Drk + \sum_{j=1}^{\ell_k} r_j^2\Drk \\
    &\qquad\qquad\qquad\qquad\qquad\qquad + \sum_{j=1}^{m_k} \ell_k \Drk (2r_j + \ell_k\Drk)\Drk\Bigg)\,.
\end{align*}
By hypothesis, $\rmink \leq B \rmaxk$ for some $B < 1$ and $\Drk + \delta_k < \rmaxk$ for sufficiently large $k$, so
\begin{align*}
	\hat{C}_k(x) \leq \hat{C}(x) &+ \frac{40}{3(1 - B^5)} \cdot \frac{\xi_k}{\rmaxk^2} \\
	&+ \frac{5(1 + \xi_k)}{(1 -B^5)\rmaxk^5} \Bigg(\sum_{j= m_k +1}^{m_k+\ell_k} r_j^2\Drk + \sum_{j=1}^{\ell_k} r_j^2\Drk \\
 &\qquad\qquad\qquad\qquad\qquad\qquad + \sum_{j=1}^{m_k} \ell_k \Drk (2r_j + \ell_k\Drk)\Drk\Bigg)\,.
\end{align*}
Because $r_j$ increases monotonically with $j$,
\begin{align*}
		\hat{C}_k(x) &\leq \hat{C}(x) + \frac{40}{3(1 - B^5)} \cdot \frac{\xi_k}{\rmaxk^2} + \frac{5(1 + \xi_k)}{(1 -B^5)\rmaxk^5} \Bigg(\ell_k \Drk (\rmaxk+ \ell_k \Drk)^2 \\
  &\qquad + \ell_k \Drk (\rmink + \ell_k \Drk)^2 + (\rmaxk- \rmin)\ell_k \Drk (2\rmaxk+ \ell_k \Drk)\Bigg)\,.
\end{align*}
By choice of $\ell_k$, we have $\ell_k \Drk < \delta_k + \Drk$, so
\begin{align*}
    \hat{C}_k(x) &\leq \hat{C}(x)  + \frac{40}{3(1 - B^5)} \cdot \frac{\xi_k}{\rmaxk^2} + \frac{5(1 + \xi_k)(\delta_k + \Drk)}{(1 -B^5)\rmaxk^5} \Bigg( (\rmaxk+ (\delta_k + \Drk))^2 \\
    &\qquad + (\rmink + (\delta_k + \Drk))^2 + (\rmaxk- \rmink)(2\rmaxk+ (\delta_k + \Drk))\Bigg)\,.
\end{align*}
Because $0 \leq \rmink < \rmaxk$,
\begin{align*}
	\hat{C}_k(x) &\leq \hat{C}(x)  + \frac{40}{3(1 - B^5)} \cdot \frac{\xi_k}{\rmaxk^2}  + \frac{5(1 + \xi_k)(\delta_k + \Drk)}{(1 -B^5)\rmaxk^5} \Bigg(2(\rmaxk+ (\delta_k + \Drk))^2  \\
 &\qquad + \rmaxk (2\rmaxk+ \delta_k + \Drk)\Bigg)\,.
\end{align*}
By hypothesis, $\delta_k + \Drk < \rmaxk$ for sufficiently large $k$, so
\begin{equation*}
    \hat{C}_k(x) - \hat{C}(x) \leq  \frac{40}{3(1 - B^5)} \cdot \frac{\xi_k}{\rmaxk^2} + \frac{55(1 + \xi_k)}{(1 - B^5)} \cdot \frac{(\delta_k + \Drk)}{\rmaxk^3}\,
\end{equation*}
for sufficiently large $k$. The righthand side is positive and approaches $0$ as $k \to \infty$, so
\begin{equation}\label{eq:lowerS_prob}
    \hat{C}_k(x) - \hat{C}(x) < \epsilon
\end{equation}
for sufficiently large $k$.

Next, we obtain a lower bound on $\hat{C}_k(x) - \hat{C}(x)$. The calculation proceeds almost the same way as our calculation of an upper bound, except that the lower bound in equation~\eqref{eq:j=1_assumption} is of a slightly different form than the upper bound. The lower bounds in equations~\eqref{eq:j>=2_assumption}--\eqref{eq:j=1_assumption} imply that
\begin{align*}
	\hat{C}_k(x) &= \frac{5}{\rmaxk^5 - \rmink^5} \Bigg(\sum_{j=1}^{m_k} r_j^2\Big(\hat{y}_k(x, r_j) - 1\Big) \Drk \Bigg) \\
	&\geq \frac{5}{\rmaxk^5 - \rmink^5} \Bigg( r_1^2\Big(\hat{y}_k(x, r_1 - \delta_k) -1 -  \xi_k \Big)\Drk \\
 &\qquad + \sum_{j=2}^{m_k} r_j^2 \Big(\hat{y}(x, r_{j - \ell_k}) - 1 - \xi_k\Big) \Drk \Bigg)\,.
\end{align*}
Substituting $r_j^2 = r_{j - \ell_k}^2 + \ell_k \Drk (2r_j - \ell_k \Drk)$, we obtain
\begin{align*}
	\hat{C}_k(x) &\geq \frac{5}{\rmaxk^5 - \rmink^5} \Bigg( r_1^2\Big(\hat{y}(x, r_1 - \delta_k) -1 -  \xi_k \Big)\Drk \\
 &\qquad + \sum_{j=2 - \ell_k}^{m_k-\ell_k} r_j^2\Big(\hat{y}(x, r_j) - 1 - \xi_k \Big) \Drk \\
	&\qquad + \sum_{j=2}^{m_k} \ell_k \Drk (2r_j - \ell_k \Drk) \Big(\hat{y}(x, r_{j - \ell_k}) - 1 - \xi_k \Big) \Drk\Bigg)\,.
\end{align*}
Rearranging terms, we have
\begin{align*}
	\hat{C}_k(x) &\geq \hat{C}(x) + \frac{5}{\rmaxk^5 - \rmink^5} \Bigg(r_1^2\Big(\hat{y}(x, r_1 - \delta_k) -1 -  \xi_k \Big)\Drk \\
 &\qquad + \sum_{j = 2- \ell_k}^0 r_j^2\Big(\hat{y}(x, r_j) - 1\Big)\Drk
	- r_1^2\Big(\hat{y}(x, r_1) - 1\Big) \Drk \\
 &\qquad - \sum_{j = m_k - \ell_k + 1}^{m_k} r_j^2 \Big(\hat{y}(x, r_j) - 1\Big)\Drk
 - \sum_{j = 2 - \ell_k}^{m_k - \ell_k} r_j^2 \xi_k \Drk \\
	&\qquad + \sum_{j=2}^{m_k} \ell_k \Drk (2r_j - \ell_k \Drk) \Big(\hat{y}(x, r_{j - \ell_k}) - 1 - \xi_k \Big) \Drk\Bigg)\,.
\end{align*}
By equation~\eqref{eq:ynear1_assumption},
\begin{align*}
	\hat{C}_k(x) &\geq \hat{C}(x) - \frac{5\xi_k}{\rmaxk^5 - \rmink^5}\sum_{j = 2 - \ell_k}^{m_k - \ell_k} r_j^2 \Drk 
	- \frac{5(1 + \xi_k)}{\rmaxk^5 - \rmink^5} \Bigg( r_1^2\Drk 
	+ \sum_{j = 2- \ell_k}^1 r_j^2 \Drk\\
 &\qquad + \sum_{j = m_k - \ell_k + 1}^{m_k} r_j^2 \Drk
	+ \sum_{j=2}^{m_k} \ell_k \Drk (2r_j - \ell_k \Drk)  \Drk\Bigg)\,.
\end{align*}
By comparing the sum $\sum_{j = 2 - \ell_k}^{m_k - \ell_k} r_j^2 \Drk$ to the integral $\int_0^{\rmax}r^2 dr$, we obtain
\begin{align*}
	\hat{C}_k(x) &\geq \hat{C}(x) - \Bigg( \frac{5\xi_k}{\rmaxk^5 - \rmink^5} \cdot \frac{\rmaxk^3}{3}\Bigg) 
	- \frac{5(1 + \xi_k)}{\rmaxk^5 - \rmink^5}\Bigg( r_1^2\Drk 
	+ \sum_{j = 2- \ell_k}^1 r_j^2 \Drk \\
 &\qquad + \sum_{j = m_k - \ell_k + 1}^{m_k} r_j^2 \Drk
	+ \sum_{j=2}^{m_k} \ell_k \Drk (2r_j - \ell_k \Drk)  \Drk\Bigg)\,.
\end{align*}
By hypothesis, $\rmink \leq B \rmaxk$ for some $B < 1$, so
\begin{align*}
	\hat{C}_k(x) &\geq \hat{C}(x) - \Bigg( \frac{5}{3(1 - B^5)} \cdot \frac{\xi_k}{\rmaxk^2}\Bigg)
	- \frac{5(1 + \xi_k)}{(1 - B^5)\rmaxk^5}\Bigg( r_1^2\Drk 
	+ \sum_{j = 2- \ell_k}^1 r_j^2 \Drk\\
	&\qquad + \sum_{j = m_k - \ell_k + 1}^{m_k} r_j^2 \Drk + \sum_{j=2}^{m_k} \ell_k \Drk (2r_j - \ell_k \Drk)  \Drk\Bigg)\,.
\end{align*}
Because $r_j^2$ increases monotonically with $j \in \{2 - \ell_k, \ldots, m_k\}$ (and noting that $r_{2 - \ell_k} > 0$ by hypothesis),
\begin{align*}
	\hat{C}_k(x) &\geq \hat{C}(x) - \Bigg( \frac{5}{3(1 - B^5)} \cdot \frac{\xi_k}{\rmaxk^2}\Bigg) - \frac{5(1 + \xi_k)}{(1 - B^5)\rmaxk^5}\Bigg( r_1^2 \Drk 
	+ \ell_k r_1^2\Drk \\
	&\qquad 
	+ \ell_k \rmaxk^2 \Drk + (m_k -1)\Drk\ell_k \Drk (2\rmaxk - \ell_k \Drk)\Bigg)\,. \\
	&\geq \hat{C}(x) - \frac{5}{3(1 - B^5)} \cdot \frac{\xi_k}{\rmaxk^2}
	- \frac{5(1 + \xi_k)}{(1 - B^5)\rmaxk^5}\Bigg( (1+ \ell_k)r_1^2 \Drk 
	+ 3 \rmaxk^2 \Drk \ell_k\Bigg) \\
	&\geq \hat{C}(x) - \frac{5}{3(1 - B^5)} \cdot \frac{\xi_k}{\rmaxk^2} \\
	&\qquad - \frac{5(1 + \xi_k)}{(1 - B^5)\rmaxk^5}\Bigg( (1+ 2\ell_k)\rmaxk^2 \Drk
+ 3 \rmaxk^2 \Drk \ell_k\Bigg)\,. \\
\end{align*}
By choice of $\ell_k$, we have $\ell_k \Drk < \Drk + \delta_k$, which implies
\begin{align*}
\hat{C}_k(x) - \hat{C}(x) &\geq  - \Bigg( \frac{5}{3(1 - B^5)} \cdot \frac{\xi_k}{\rmaxk^2}\Bigg) - \frac{5(1 + \xi_k)}{(1 - B^5)\rmaxk^3}\Big(6\Drk + 5 \delta_k \Big)\\
&\geq
- \Bigg( \frac{5}{3(1 - B^5)} \cdot \frac{\xi_k}{\rmaxk^2}\Bigg) - \frac{30(1 + \xi_k)}{(1 - B^5)} \cdot \frac{\Drk + \delta_k}{\rmaxk^3}\,.
\end{align*}
The righthand side is negative and approaches $0$ as $k \to \infty$, so 
\begin{equation}\label{eq:upperS_prob}
    \hat{C}_k(x) - \hat{C}(x) > - \epsilon
\end{equation}
for sufficiently large $k$. Together, equations~\eqref{eq:lowerS_prob} and~\eqref{eq:upperS_prob} complete the proof.
\end{proof}

\begin{lemma}\label{lem:Cconverge}
If the hyperparameter value sequences satisfy
\begin{enumerate}
    \item $\Drk/\rmaxk^3 \to 0$ as $k \to \infty$\,,
    \item $|X_k|(\rmink + \Drk)^n \to \infty$\,, and
    \item $\rmink/\rmaxk^3 \to 0$ as $k \to \infty$\,,
\end{enumerate}
then $|\hat{C}[\dkexact, \rho](x_k) - C(x_k)| \to 0$ in probability as $k \to \infty$, where $\{x_k\}$ is any sequence of points such that $x_k \in X_k$.
\end{lemma}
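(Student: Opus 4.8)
\textit{Proof proposal.} The plan is to split
\[
\hat{C}[\dkexact,\rho](x_k) - C(x_k) = \big(\hat{C}[\dkexact,\rho](x_k) - \tilde{C}_k(x_k)\big) + \big(\tilde{C}_k(x_k) - C(x_k)\big),
\]
where, writing $\hat{C}_k(x):=\hat{C}[\dkexact,\rho](x)$ and using $r_i-r_{i-1}=\Drk$, the deterministic quantity $\tilde{C}_k(x):=\tfrac{5}{\rmaxk^5-\rmink^5}\sum_{i=1}^{m_k}r_i^2\big(y(x,r_i)-1\big)\Drk$ is the right-endpoint Riemann sum of the integral defining $C(x)$. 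I would prove (a) $\sup_{x}\big|\tilde{C}_k(x)-C(x)\big|\to 0$ deterministically, and (b) $\hat{C}_k(x_k)-\tilde{C}_k(x_k)\to 0$ in probability. Since $\E\big[\hat{y}[\dkexact,\rho](x,r)\big]=y(x,r)$ by Lemma~\ref{lem:E_volhat}, we have $\E[\hat{C}_k(x)]=\tilde{C}_k(x)$, so (b) is a statement about a mean-zero random variable and Chebyshev applies. Every bound below is uniform in the base point, so the arbitrary sequence $\{x_k\}$ is handled exactly as in Lemma~\ref{lem:yhatminus1}.

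For (a): by equation~\eqref{eq:scalar_ball} and the compactness/smoothness argument used to prove Lemma~\ref{lem:small_ball_volume}, the integrand $g(x,r):=r^2(y(x,r)-1)$ satisfies $g(x,r)=O(r^4)$ and $\partial_r g(x,r)=O(r^3)$ uniformly in $x\in M$ for small $r$. The standard right-Riemann-sum error estimate then gives
\[
\Big|\int_{\rmink}^{\rmaxk} g(x,r)\,dr - \sum_{i=1}^{m_k} g(x,r_i)\Drk\Big| \le \tfrac12(\rmaxk-\rmink)\,\Drk\,\sup_{[\rmink,\rmaxk]}|\partial_r g| = O(\rmaxk^4\Drk).
\]
Condition~(3) gives $\rmink/\rmaxk\to 0$, so $\rmaxk^5-\rmink^5\ge\tfrac12\rmaxk^5$ for large $k$, and hence $\big|\tilde{C}_k(x)-C(x)\big|=O(\Drk/\rmaxk)\to 0$ by condition~(1).

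For (b): expanding the variance and using formula~\eqref{eq:volest_alt}, for $r_i\le r_j$ the cross terms for distinct sample points vanish (given $x$ the other points are i.i.d.), so $\cov\big(\volhat{\dkexact}{\rho}{x}{r_i},\volhat{\dkexact}{\rho}{x}{r_j}\big)\le \volM{x}{r_i}/[(|X_k|-1)\min(\rho)]$; by Lemma~\ref{lem:small_ball_volume} this yields $\cov\big(\hat{y}[\dkexact,\rho](x,r_i),\hat{y}[\dkexact,\rho](x,r_j)\big)\le \kappa/[(|X_k|-1)\max(r_i,r_j)^n]$ with $\kappa$ depending only on $M$ and $\rho$. (Equivalently, rewrite $\hat{C}_k(x)$ as a deterministic constant plus $\tfrac{1}{|X_k|-1}\sum_{z\ne x}U_z$ with $U_z$ i.i.d. given $x$, and bound $\E[U_z^2]$ using that $U_z=0$ unless $z\in\BM{x}{\rmaxk}$.) Combining this with Proposition~\ref{prop:MSE} and the $L^2$-triangle inequality $\var(\sum_i a_iY_i)\le(\sum_i a_i\sqrt{\var Y_i})^2$ with $a_i\propto r_i^2\Drk$ gives
\[
\var\big(\hat{C}_k(x)\big) \le \frac{C'}{|X_k|\,\rmaxk^{10}}\Big(\sum_{i=1}^{m_k} r_i^{\,2-n/2}\Drk\Big)^{2}
\]
for large $k$, with $C'$ independent of $x$. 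Bounding the inner sum by $\rmaxk^{3-n/2}$ when $n\le 3$ (using $r_i\le\rmaxk$) and by $(\rmink+\Drk)^{2-n/2}\rmaxk$ when $n\ge 4$ (using $r_i\ge\rmink+\Drk$), this gives $\var(\hat{C}_k(x))=O\big(1/(|X_k|\rmaxk^{n+4})\big)$ for $n\le 3$ and $\var(\hat{C}_k(x))=O\big((\rmink+\Drk)^{4-n}/(|X_k|\rmaxk^{8})\big)$ for $n\ge 4$.

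The crux is to see these vanish. Conditions~(1) and~(3) give $\rmink+\Drk=o(\rmaxk^{3})$, hence $\rmink+\Drk\le\rmaxk^{3}$ for large $k$; a short computation then shows $\rmaxk^{n+4}\ge(\rmink+\Drk)^{n}$ for $n\ge 2$ and $\rmaxk^{8}(\rmink+\Drk)^{n-4}\ge(\rmink+\Drk)^{n}$ for $n\ge 4$, so condition~(2) forces both bounds to $0$ (the case $n=1$ is immediate since $S\equiv 0$ there). Chebyshev then gives (b) uniformly in $x$, and combining (a) and (b) with the triangle inequality completes the proof. The main obstacle is exactly this variance step: the cruder estimate $\var(\hat{C}_k(x))\le\max_i\var(\hat{y}[\dkexact,\rho](x,r_i))\cdot(\sum_i a_i)^2\asymp 1/(|X_k|\rmaxk^{4}(\rmink+\Drk)^{n})$ does \emph{not} tend to $0$ under the stated hypotheses, so one must keep the $r_i^2$ weighting inside the $L^2$-triangle inequality (equivalently, exploit that $\cov(\hat{y}(x,r_i),\hat{y}(x,r_j))$ decays in $\max(r_i,r_j)$, not $\min(r_i,r_j)$), and then combine all three hypotheses rather than relying on condition~(2) alone.
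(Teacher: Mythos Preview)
Your proposal is correct and follows essentially the same approach as the paper: the same decomposition into a deterministic Riemann-sum error plus a mean-zero random part, the same Chebyshev/$L^2$-triangle-inequality bound $\var(\sum r_j^2\hat{y}_j)\le(\sum r_j^2\sqrt{\var\hat{y}_j})^2$ combined with Proposition~\ref{prop:MSE}, and the same dimension-dependent case split to verify the variance vanishes under hypotheses (1)--(3). The only cosmetic differences are that you bound the Riemann error for $r^2(y-1)$ in one step (yielding the slightly sharper $O(\Drk/\rmaxk)$ rather than the paper's $O(\Drk/\rmaxk^3)$), and your case split is at $n\le 3$ versus $n\ge 4$ instead of the paper's $2\le n\le 4$ versus $n>4$; neither affects the argument.
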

\begin{proof}
Let $x$ be any point in $X_k$. For all $j \in \{1, \ldots, m_k\}$, let $\hat{y}_j := \hat{y}(x, r_j)$ and let $y_j := y(x, r_j)$. The absolute difference $\vert \hat{C}(x) - C(x)\vert$ is bounded above by
\begin{align*}
    \vert \hat{C}(x) - C(x)\vert &\leq \frac{5}{\rmaxk^5 - \rmink^5} \Bigg( 
    \Big\vert \sum_{j=1}^{m_k} r_j^2 \Drk - \int_{\rmink}^{\rmaxk} r^2 dr\Big\vert \notag \\
    &\qquad  
    + \Big\vert \int_{\rmink}^{\rmaxk} r^2 y(x, r)dr - \sum_{j=1}^{m_k} r_j^2 y_j \Drk\Big\vert 
    + \Big\vert \sum_{j=1}^{m_k} r_j^2 (\hat{y}_j - y_j)\Drk\Big\vert\Bigg)\,.
\end{align*}
Because $\rmink/\rmaxk^3 \to 0$ (by hypothesis), there is a constant $B < 1$ such that $\rmink \leq B \rmaxk$ for all $k$. Therefore,
\begin{align}
 \vert \hat{C}(x) - C(x)\vert &\leq \frac{5}{(1 - B^5)\rmaxk^5} \Bigg( 
    \Big\vert \sum_{j=1}^{m_k} r_j^2 \Drk - \int_{\rmink}^{\rmaxk} r^2 dr\Big\vert \notag \\
    &\qquad  
    + \Big\vert \int_{\rmink}^{\rmaxk} r^2 y(x, r)dr - \sum_{j=1}^{m_k} r_j^2 y_j \Drk\Big\vert 
    + \Big\vert \sum_{j=1}^{m_k} r_j^2 (\hat{y}_j - y_j)\Drk\Big\vert\Bigg)\,.\label{eq:initialCbound}
\end{align}
The first term on the righthand side of equation~\eqref{eq:initialCbound} is a Riemann sum error. For any function $f(r)$ that is integrated on $[\rmink, \rmaxk]$, the error in the right Riemann sum is bounded above by $\max_{r \in [\rmink, \rmaxk]} \vert f'(r) \vert \Drk \cdot (\rmaxk - \rmink)/2$. Therefore,
\begin{align}
    &\Big\vert \int_{\rmink}^{\rmaxk} r^2 y(x, r)dr - \sum_{j=1}^{m_k} r_j^2 y_j \Drk\Big\vert \notag \\ 
    &\qquad \leq \Drk \Big(\max_{r \in [\rmink, \rmaxk]} \Big\vert \frac{d}{dr}r^2 y(x, r)\Big\vert \Big)(\rmaxk - \rmink)/2 \notag \\
    &\qquad \leq \Drk \Big(\max_{r \in [\rmink, \rmaxk]} \Big\vert \frac{d}{dr}r^2 y(x, r)\Big\vert \Big) \rmaxk/2.\label{eq:term2bound}
\end{align}
We have $\frac{d}{dr}r^2 y(x, r) = r^2 \frac{d}{dr} y(x, r) + 2r y(x, r)$. By equation~\eqref{eq:scalar_ball}, we have $\lim_{r \to 0} y(x, r) = 1$ and $\lim_{r \to 0} \frac{d}{dr} y(x, r) = 0$. Therefore, $\vert \frac{d}{dr}r^2 y(x, r)\vert = \mathcal{O}(r)$ as $r \to 0$, so there is a constant $A > 1$ such that 
\[
\max_{r \in [\rmink, \rmaxk]}\vert \frac{d}{dr}r^2 y(x, r)\vert \leq 2 A \rmaxk
\]
for sufficiently small $\rmaxk$. Thus for sufficiently large $k$,
\begin{equation}\label{eq:derivative_bound}
\Big(\max_{r \in [\rmink, \rmaxk]} \Big\vert \frac{d}{dr}r^2 y(x, r)\Big\vert\Big) \rmaxk/2 \leq A\rmaxk^2
\end{equation}
because $\rmaxk \to 0$ as $k \to \infty$. 

Let $\epsilon > 0$. By hypothesis, $\frac{\rmaxk^3}{\Drk} \to \infty$ as $k \to \infty$, so 
\begin{equation}\label{eq:rmax_squared}
\rmaxk^2 \leq \frac{\rmaxk^5(1-B^5)\epsilon}{15A \Drk}
\end{equation}
for sufficiently large $k$. Substituting equation~\eqref{eq:rmax_squared} into equation~\eqref{eq:derivative_bound} and equation~\eqref{eq:derivative_bound} into equation~\eqref{eq:term2bound} yields
\begin{equation}\label{eq:term2bound_final}
 \Big\vert \int_{\rmink}^{\rmaxk} r^2 y(x, r)dr - \sum_{j=1}^{m_k} r_j^2 y_j \Drk\Big\vert \leq \frac{\rmaxk^5(1-B^5)\epsilon}{15}\,.
\end{equation}

Next, we bound the second term on the righthand side of equation~\eqref{eq:initialCbound}, which is also a Riemann sum error. For a monotonic function $f(r)$ that is integrated on $[\rmink, \rmaxk]$, the error in the right Riemann sum is bounded above by $\Drk \vert f(\rmaxk) - f(\rmink)\vert$. Therefore,
\begin{equation*}
    \Big\vert \sum_{j=1}^{m_k} r_j^2 \Drk - \int_{\rmink}^{\rmaxk} r^2 dr\Big\vert \leq \Drk (\rmaxk^2 - \rmink^2) \leq \Drk \cdot \rmaxk^2\,.
\end{equation*}
By Eq. \eqref{eq:rmax_squared},
\begin{equation}\label{eq:term1bound_final}
 \Big\vert \sum_{j=1}^{m_k} r_j^2 \Drk - \int_{\rmink}^{\rmaxk} r^2 dr\Big\vert \leq \frac{\rmaxk^5(1-B^5)\epsilon}{15A} < \frac{\rmaxk^5(1-B^5)\epsilon}{15}
\end{equation}
for sufficiently large $k$.

Putting the inequalities of Eqs. \eqref{eq:term1bound_final} and \eqref{eq:term2bound_final} into Eq. \eqref{eq:initialCbound}, we obtain
\begin{equation*}
    \vert \hat{C}(x) - C(x)\vert \leq \frac{2}{3}\epsilon + \Big\vert \sum_{j=1}^{m_k} r_j^2(\hat{y}_j - y_j) \Big\vert \frac{5 \Drk}{(1 - B^5) \rmaxk^5}.
\end{equation*}
Therefore,
\begin{equation}\label{eq:Cprobbound1}
    \p[ \vert \hat{C}(x) - C(x) \vert > \epsilon ] \leq \p\Bigg[ \Big\vert \sum_{j=1}^{m_k} r_j^2 (\hat{y}_j -y_j) \Big\vert > \frac{(1-B^5)\rmaxk^5 \epsilon}{15 \Drk}\Bigg]\,.
\end{equation}
We have $\E\Big[\sum_{j=1}^{m_k} r_j^2\hat{y}_j\Big] = \sum_{j=1}^{m_k} r_j^2 y_j$ because $\E[\hat{y}_j] = y_j$ (Lemma \ref{lem:E_volhat}). By applying Chebyshev's inequality to the righthand side of Eq. \eqref{eq:Cprobbound1}, we obtain
\begin{equation}\label{eq:cheby}
    \p[\vert \hat{C}(x) - C(x) \vert > \epsilon] \leq \Big(\frac{15}{(1 - B^5)\epsilon}\Big)^2 \Big(\frac{\Drk}{\rmaxk^5} \Big)^2 \var \Big( \sum_{j=1}^{m_k} r_j^2 \hat{y}_j \Big).
\end{equation}
We expand the variance as
\begin{equation*}
    \var \Big( \sum_{j=1}^{m_k} r_j^2 \hat{y}_j \Big) = \sum_{j=1}^{m_k} r_j^4 \var(\hat{y}_j) + \sum_{i \neq j} r_i^2 r_j^2 \cov(\hat{y}_i, \hat{y}_j).
\end{equation*}
For all $i \neq j$, we have $\cov(\hat{y}_i, \hat{y}_j)^2 \leq \var(\hat{y}_i) \var(\hat{y}_j)$. Therefore,
\begin{equation*}
    \var \Big( \sum_{j=1}^{m_k} r_j^2 \hat{y}_j \Big) \leq \Big( \sum_{j=1}^{m_k} r_j^2 \sqrt{\var(\hat{y}_j)} \Big)^2.
\end{equation*}
By Proposition~\ref{prop:MSE}, there is a constant $A' \geq 0$ such that
\begin{equation*}
    \var(\hat{y}_j) \leq \frac{A'}{|X_k|r_j^n}
\end{equation*}
for all $j$ and sufficiently large $k$. Therefore,
\begin{equation}\label{eq:varbound}
    \var \Big( \sum_{j=1}^{m_k} r_j^2 \hat{y}_j \Big) \leq \frac{A'}{|X_k|}\Big(\sum_{j=1}^{m_k} r_j^{2 - n/2}\Big)^2
\end{equation}
for sufficiently large $k$. Below, we use Eq. \eqref{eq:varbound} to obtain an upper bound on the righthand side of Eq. \eqref{eq:cheby}. There are two cases, depending on $n$.

\vspace{5mm}

\noindent{\bf Case 1}: ($2 \leq n \leq 4$).

In this case,
\begin{equation}\label{eq:sumboundcase1}
    \Big(\sum_{j=1}^{m_k} r_j^{2 - n/2}\Big)^2 \leq \rmaxk^{4 - n} \Big( \frac{\rmaxk - \rmink}{\Drk}\Big)^2 \leq \frac{\rmaxk^{6- n}}{\Drk^2}
\end{equation}
because $r_j^{2 - n/2}$ is monotonically increasing. Combining Eqs \eqref{eq:cheby}, \eqref{eq:varbound}, and \eqref{eq:sumboundcase1}, we obtain
\begin{align*}
	\p[\vert \hat{C}(x) - C(x) \vert > \epsilon] &\leq \Big(\frac{15}{(1-B^5)\epsilon}\Big)^2 \Big(\frac{\Drk}{\rmaxk^5}\Big)^2\frac{A'}{|X_k|} \frac{\rmaxk^{6-n}}{\Drk^2} \\
&\leq A'\Big(\frac{15}{(1-B^5)\epsilon}\Big)^2 \frac{1}{|X_k| \rmaxk^{n+4}} \\
&= A'\Big(\frac{15}{(1-B^5)\epsilon}\Big)^2 \frac{1}{|X_k|(\rmink + \Drk)^{n/3 + 4/3} }\Big(\frac{\rmink + \Drk}{\rmaxk^3}\Big)^{n/3 + 4/3}\,.
\end{align*}
Because $n/3 + 4/3 \leq n$ for $n \geq 2$ and $\rmink + \Drk < 1$ for sufficiently large $k$,
\begin{equation*}
\p[\vert \hat{C}(x) - C(x) \vert > \epsilon] \leq A'\Big(\frac{15}{(1-B^5)\epsilon}\Big)^2 \frac{1}{|X_k|(\rmink + \Drk)^n}\Big(\frac{\rmink + \Drk}{\rmaxk^3}\Big)^{n/3 +3/4}
\end{equation*}
for sufficiently large $k$. By hypothesis, $\Big(\frac{\rmink + \Drk}{\rmaxk^3}\Big) \to 0$ and $\fracc{|X_k|(\rmink + \Drk)^n} \to 0$ as $k \to \infty$. Therefore, $\p[ \vert \hat{C}(x_k) - C(x_k) \vert > \epsilon] \to 0$ as $k \to \infty$.

\vspace{5mm}

\noindent {\bf Case 2:} ($n > 4$).

In this case,
\begin{equation}\label{eq:sumboundcase2}
\Big(\sum_{j=1}^{m_k} r_j^{2 - n/2}\Big)^2 \leq (\rmink + \Drk)^{4 - n}\Big(\frac{\rmaxk - \rmink}{\Drk}\Big)^2 \leq (\rmink + \Drk)^{4 - n}\Big(\frac{\rmaxk}{\Drk}\Big)^2
\end{equation}
because $r_j^{2 - n/2}$ is monotonically decreasing. Combining Eqs \eqref{eq:cheby}, \eqref{eq:varbound}, and \eqref{eq:sumboundcase2} yields
\begin{align*}
	\p[\vert \hat{C}(x) - C(x)\vert > \epsilon] &\leq A'\Big(\frac{15}{(1-B^5)\epsilon}\Big)^2 \Big(\frac{\Drk}{\rmaxk^5}\Big)^2 \frac{(\rmink + \Drk)^{4 - n} \rmaxk^2}{|X_k|\Drk^2} \\
 &= A'\Big(\frac{15}{(1-B^5)\epsilon}\Big)^2 \Big(\frac{\rmink + \Drk}{\rmaxk^2}\Big)^4 \fracc{|X_k| (\rmink + \Drk)^n}\,.
\end{align*}
By hypothesis, $\Big(\frac{\rmink + \Drk}{\rmaxk^2}\Big) \to 0$ and $\fracc{|X_k| (\rmink + \Drk)^n} \to 0$ as $k \to \infty$. Therefore, $\p[\vert \hat{C}(x_k) - C(x_k) \vert > \epsilon ] \to 0$ as $k \to \infty$.
\end{proof}

\end{document}